\documentclass[11pt]{article}

\usepackage{mathpazo}
\usepackage[margin=1in]{geometry}
\usepackage{amsmath,amssymb,amsthm,mathtools}
\usepackage{bm}
\usepackage{comment}
\usepackage{lipsum}
\usepackage[T1]{fontenc}
\usepackage[colorlinks=true,citecolor=blue,linkcolor=blue,urlcolor=blue]{hyperref}
\usepackage[natbib=true,minalphanames=3,maxcitenames=2,maxbibnames=99,style=alphabetic]{biblatex}
\usepackage{subcaption}
\usepackage{xcolor}
\usepackage{booktabs}
\usepackage{multirow}
\usepackage{cleveref}
\usepackage[toc,page,header]{appendix}
\usepackage{minitoc}
\usepackage{lipsum}
\usepackage{algorithm}
\usepackage{thmtools}
\usepackage[noend]{algpseudocode}
\usepackage{grffile}
\usepackage{todonotes}

\definecolor{mydarkblue}{rgb}{0,0.08,0.85}
\definecolor{mylightblue}{rgb}{0.06,0.56,1.0}
\definecolor{mylightorange}{rgb}{1.0,0.62,0.12}
\definecolor{mylightred}{rgb}{0.99,0.00,0.04}
\hypersetup{colorlinks=true,
linkcolor=mydarkblue,
citecolor=mydarkblue,
filecolor=mydarkblue,
urlcolor=mydarkblue,
bookmarksopen=true,
linktoc=page}

\newtheorem{proposition}{Proposition}
\newtheorem{definition}{Definition}
\newtheorem{theorem}{Proposition}

\newcommand{\cdmi}{\ensuremath{\Theta^{(i)}}}
\newcommand{\cdma}{\ensuremath{\Theta^{(1)}}}
\newcommand{\dma}{\ensuremath{\theta^{(1)}}}

\newcommand{\cdmb}{\ensuremath{\Theta^{(2)}}}
\newcommand{\dmb}{\ensuremath{\theta^{(2)}}}

\newcommand{\resdmab}{\ensuremath{\theta^{(1\setminus 2)}}}
\newcommand{\resdmba}{\ensuremath{\theta^{(2\setminus 1)}}}

\newcommand{\modeleval}[2]{f_{\mathcal{A}}({#1};{#2})}

\newcommand{\mask}[1]{\bm{1}_{#1}}
\newcommand{\lr}[1]{\left({#1}\right)}

\newcommand{\idt}{IDT}

\addbibresource{bibliography/fmt_bib.bib}

\title{\textsc{ModelDiff}: A Framework for Comparing \\ Learning Algorithms}
\author{Harshay Shah\footnote{Equal contribution.},\ \
Sung Min Park\footnotemark[1],\ \ 
Andrew Ilyas\footnotemark[1],\ \ 
Aleksander M\k{a}dry \\
\texttt{\{\url{harshay},\url{sp765},\url{ailyas},\url{madry}\}@mit.edu} \\
Massachusetts Institute of Technology}
\date{}

\begin{document}
\setcounter{tocdepth}{2}
\doparttoc %
\renewcommand\ptctitle{}
\faketableofcontents %

\maketitle
\begin{abstract}
    We study the problem of {\em (learning) algorithm comparison}, 
where the goal is to find differences between models trained 
with two different learning algorithms.
We begin by formalizing this goal as one of finding 
{\em distinguishing feature transformations}, i.e., 
input transformations that change the predictions of models 
trained with one learning algorithm but not the other.
We then present \textsc{ModelDiff}, a method 
that leverages the datamodels framework \citep{ilyas2022datamodels}
to compare learning algorithms based on how they use their training data.
We demonstrate \textsc{ModelDiff} through three case studies, comparing models trained 
with/without data augmentation,
with/without pre-training,
and with different SGD hyperparameters.
Our code is available at \url{https://github.com/MadryLab/modeldiff}.

\end{abstract}

\clearpage
\section{Introduction}
Building a machine learning model involves making a number of design choices.
Indeed,
even after choosing a dataset,
one must decide on a model architecture,
an optimization method,
and a data augmentation pipeline.
It is these design choices together that define the {\em learning algorithm}, i.e.,
the function mapping training datasets to machine learning models.

One of the key reasons why these design choices matter to us is that---even when they do not directly affect accuracy---they determine the
{\em biases} of the resulting models.
For example,
\citet{hermann2020origins} find significant variation in shape bias \citep{geirhos2019imagenet}
across a group of ImageNet models that vary in accuracy by less than 1\%.
Similarly, \citet{liu2022same} find that language models with the same loss
can have drastically different implicit biases and as a result,
different performances when used for transfer learning.

Motivated by this, we develop a framework for
comparing learning algorithms.
Our framework is {\em general} in the sense that one can use it
to study {\em any} pair of algorithms
{\em without} a prior hypothesis on what the difference between them is. As a result, it enables us to precisely dissect the impact of particular design choices.
Specifically, our two main contributions are:
\begin{itemize}
    \item[(a)] \textbf{A precise, quantitative definition of
    the algorithm comparison problem.}
    We introduce the problem of {\em (feature-based) learning algorithm comparison}.
    Given two learning algorithms, the (intuitive) goal here
    is to find features used by one learning algorithm but not by the other.
    We formalize this goal by stating it as one of
    finding transformations in input space that induce different behavior from 
    (i.e., that {\em distinguish}) models trained by the two learning algorithms
    (Section \ref{ssec:motivation}). 
    \item[(b)] \textbf{A method for comparing any two algorithms.}
    We propose
    a method, \textsc{ModelDiff},
    for comparing algorithms in terms of {\em how they use the training data}.
    The key tool we leverage here is {\em datamodeling} \citep{ilyas2022datamodels},
    which
    allows us to quantify the impact of each training example
    on a specific (single) model prediction (see Section \ref{ssec:datamodels}).
    Intuitively, two models that use different features to arrive at their final 
    predictions should differ in what training examples they rely 
    on\footnote{
        For example, suppose we were comparing a texture-biased model to a
        shape-biased one: when faced with a test example containing, e.g., a cat in a
        unique pose, the texture-biased model will rely on examples from the
        training set with similar texture (e.g., fur), whereas the shape-biased
        model will rely more on examples in a similar pose.}. 
    We translate this intuition into a method with formal guarantees 
    in Section~\ref{sec:approach}.
    The output of our method is a set of {\em distinguishing subpopulations},
    groups of test examples for which the two algorithms systematically differ
    in {\em how} they make predictions.
\end{itemize}

We tie our method (b) back to our formal definition of algorithm comparison in (a)
by looking for a pattern within each distinguishing subpopulation,
and using it to design a candidate input transformation (Figure \ref{fig:headline} middle).
We then ``verify'' this transformation (Figure \ref{fig:headline}
right) through counterfactual analysis.

\begin{figure}[h]
    \centering
    \includegraphics[width=\textwidth]{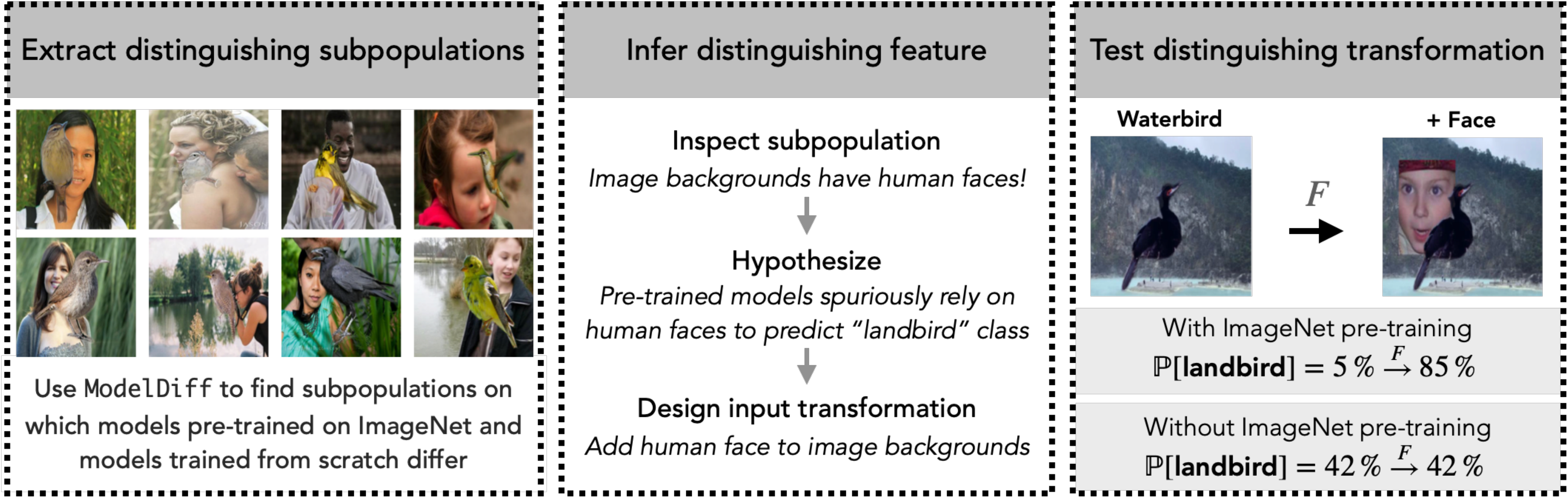}
    \caption{A demonstration of our approach. Our method
    (Section \ref{sec:approach}) automatically extracts coherent subpopulations
    of images (left) on which models differ in terms of what features they use.
    Then, an interpreter---whether human or automated---uses these subpopulations to design
    a {\em feature transformation} that they think will distinguish the models (middle).
    Finally, we verify the validity of this feature transformation via counterfactual analysis (right).}
    \label{fig:headline}
\end{figure}

We illustrate the utility of our framework through three case studies (\Cref{sec:examples}),
which are motivated by typical choices one needs to make within machine learning pipelines, namely:
\begin{itemize}
    \item {\bf Data augmentation:} We compare classifiers trained
    with and without data augmentation on the
    \textsc{Living17} \citep{santurkar2021breeds} dataset.
    We show that models trained with data augmentation
    are more prone to picking up instances of co-occurrence bias and texture bias compared to models trained without data augmentation.
    For instance,
    adding a spider web
    to random images increases the ``spider''
    confidence of models trained with (without) augmentation by 15\% (1\%) on
    average.

    \item {\bf ImageNet pre-training:} We compare classifiers first
    pre-trained on ImageNet
    \cite{deng2009imagenet,russakovsky2015imagenet}
    then fine-tuned on \textsc{Waterbirds} task \citep{sagawa2020distributionally} to classifiers
    trained from scratch on \textsc{Waterbirds}.
    We demonstrate that pre-training can either suppress or amplify specific spurious correlations.
   For example, adding a yellow patch to random images changes models' 
   average confidence in the ``landbird'' label by +12\% (-4\%) when training from scratch (pre-training) .
  Meanwhile, adding a human face to the background changes the ``landbird'' confidence of pre-trained (from-scratch) models by +4\% (-1\%).
    \item {\bf SGD hyperparameters:}
    Finally, we compare classifiers trained on \textsc{Cifar-10}
    \citep{krizhevsky2009learning}
    with different choices of learning rates and batch sizes.
    Our analysis pinpoints subtle differences in model behavior induced by small changes to these hyperparameters.
	For example, adding a small pattern that resembles windows to random images
	increases the ``truck'' confidence by 7\% on average for models trained with
	a smaller learning rate, but increases the confidence by only 2\% for models
	trained with a larger learning rate.
\end{itemize}
Across all three case studies, our framework surfaces fine-grained differences
between models trained with different learning algorithms, enabling us to
better understand the role of the design choices that make up a learning
algorithm.

\section{Preliminaries and Setup}
\label{sec:primer}
We begin by laying the groundwork for both our formalization of 
algorithm comparison as well as our proposed approach.
In Section~\ref{ssec:motivation}, we state the algorithm comparison problem,
and formalize it as the task of identifying
\emph{distinguishing transformations}.
These are functions that---when applied to test examples---significantly
and consistently change
the predictions of one model class but not the other.
In Section~\ref{ssec:datamodels}, we introduce {\em datamodels} 
\citep{ilyas2022datamodels}, the key primitive behind our approach to 
algorithm comparison (which we present in Section \ref{sec:approach}). 

\subsection{Formalizing algorithm comparisons via distinguishing transformations}
\label{ssec:motivation}
The goal of algorithm comparison problem is to understand the ways in which two learning
algorithms (trained on the same dataset) differ in the models they yield.
More specifically, we are interested in comparing the {\em model classes}
induced by the two learning algorithms:
\begin{definition}[Induced model class]
    Given an input space $\mathcal{X}$,
    a label space $\mathcal{Y}$,
    and a model space $\mathcal{M} \subset \mathcal{X} \to \mathcal{Y}$,
    a \underline{learning algorithm}
    $\mathcal{A}: (\mathcal{X} \times \mathcal{Y})^* \to \mathcal{M}$
    is a (potentially random) function mapping a set of input-label pairs
    to a model.
    Fixing a data distribution $\mathcal{D}$,
    the \underline{model class} induced by algorithm
    $\mathcal{A}$ is the distribution over $\mathcal{M}$ that results from
    applying $\mathcal{A}$ to randomly sampled datasets from $\mathcal{D}$.
\end{definition}

Focusing on model classes (rather than individual models) 
isolates differences induced by the learning algorithms themselves
rather than by any inherent non-determinism in training 
\citep{zhong2021larger,damour2020underspecification,jiang2021assessing}.
The perspective we adopt here is that model classes differ insofar as
they use different features to make predictions.
Thus, our goal when comparing two algorithms should be to pinpoint features
that one model class uses but the other does not.
Rather than try to precisely define ``feature,'' however, 
we make this notion precise by defining functions that we call
\emph{distinguishing (feature) transformations}:

\begin{definition}[Distinguishing transformation]
    \label{def:informal_comparison}
    Let $\mathcal{A}_1, \mathcal{A}_2$ denote learning algorithms,
    $S$ a dataset of input-label pairs,
    and $\mathcal{L}$ a loss function (e.g., correct-class margin).
    Suppose $M_1$ and $M_2$ are models trained on dataset $\mathcal{D}$
    using algorithms $\mathcal{A}_1$ and $\mathcal{A}_2$ respectively.
    Then, a $(\epsilon, \delta)$-distinguishing feature transformation of $M_1$ with respect to $M_2$ is a function
    $F: \mathcal{X} \to \mathcal{X}$ such that for some label $y_c \in \mathcal{Y}$, 
    \begin{align*}
        \overbrace{
        \mathbb{E}[L_1(F(x), y_c) - L_1(x, y_c)]
        }^{\text{Counterfactual effect of $F$ on $M_1$}}
        \geq
        \delta \qquad \text{ and } \qquad
        \overbrace{
        \mathbb{E}[L_2(F(x), y_c)
        -
        L_2(x, y_c)]
        }^{\text{Counterfactual effect of $F$ on $M_2$}}
        \leq \epsilon,
    \end{align*}
    where $L_i(x, y) = \mathcal{L}(M_i(x), y)$ is the loss of 
    a model trained with algorithm $i$ on the pair $(x, y)$,
    and the expectations above are taken over inputs $x$ and randomness in the learning algorithm.
\end{definition}

Intuitively, a distinguishing feature transformation is just a function $F$ that,
when applied to test examples,
significantly changes the predictions of one model class---but not the other---in a consistent way.
Definition \ref{def:informal_comparison} also immediately suggests a way to {\em evaluate}  the effectiveness of a distinguishing feature transformation.  %
That is, given a hypothesis about how two algorithms differ
(e.g., that models trained with $\mathcal{A}_1$ are more sensitive to texture than those trained with $\mathcal{A}_2$),
one can design a corresponding transformation $F$
(e.g., applying style transfer, as in \citep{geirhos2019imagenet}),
and directly measure its relative effect on the two model classes.

\paragraph{{\em Informative} distinguishing transformations (IDTs).}
Not every distinguishing transformation $F$ sheds the same amount of light on model behavior.
For example, given any two non-identical learning algorithms,
we could
craft a transformation $F$ that
imperceptibly modifies its input to satisfy Definition
\ref{def:informal_comparison}
(e.g., by using adversarial examples).
Alternatively, one could craft an $F$ that arbitrarily transforms its inputs into pathological out-of-distribution examples on which the two model
classes disagree.
While these transformations satisfy Definition \ref{def:informal_comparison},
they yield no benefit in terms of qualitatively understanding the differences in \emph{salient}
features used by the model classes.
More concretely, an {\em informative} distinguishing feature transformation must
(a) capture a feature that naturally arises in the data distribution and
(b) be semantically meaningful.

\subsection{Datamodel representations for comparison}
\label{ssec:datamodels}
Before presenting our approach to comparing learning algorithms 
(Section \ref{sec:approach}),
we provide an overview of the underlying datamodeling 
framework \citep{ilyas2022datamodels}, and its applicability to learning algorithm comparison.

\paragraph{A primer on datamodels.}
Let us fix a learning algorithm $\mathcal{A}$ 
(i.e., a map from training datasets to machine learning models),
a training set $S = \{x_1,\ldots,x_d\}$, and a {\em specific} test example $x \in T$.
For any subset $S' \subset S$ of the training set, we follow
\citep{ilyas2022datamodels} and define the {\em model output function} 
\begin{equation}
    \label{eq:model_output_function}
    f(x, S') = \text{ the loss after training a model on $S'$ (using $\mathcal{A}$) and evaluating on $x$.}
\end{equation}

\citet{ilyas2022datamodels} show that
one can often approximate the model output function above 
with a much simpler model, which they call a {\em datamodel}. 
For example, if the learning algorithm $\mathcal{A}$ is training a neural network on
a standard image classification task and the output function $f$ is the 
{\em correct-class margin} of the resulting classifier, a simple {\em linear} predictor suffices, i.e.,
\begin{equation}
    \label{eq:datamodels}
    \mathbb{E}[f(x, S')] \approx \theta_x \cdot \bm{1}_{S'},
\end{equation}
where $\theta_x$ is a (learned) parameter vector
and $\bm{1}_{S'} \in \{0, 1\}^{|S|}$ is a binary {\em indicator vector} of the
set $S'$, encoding whether each example of $S$ is included in $S'$, i.e.,
\[
    (\bm{1}_{S'})_i = \mathbb{1}\{x_i \in S'\}.
\]
The vector $\theta_x$ is called the {\em datamodel} for the example 
$x$, and one can compute it by sampling a collection of random subsets $S_k \subset S$,
training a model on each subset, and solving a (regularized) linear regression problem 
of predicting $f(x, S_k)$ from $\bm{1}_{S_k}$, i.e.,
\[
    \theta_x = \arg\min_\theta 
    \sum_{k=1}^n 
        \left(f(x, S_k) - \theta_x^\top \bm{1}_{S_k}\right)^2
    + \lambda_x \|\theta\|_1,
\]
where cross-validation determines the regularization strength 
$\lambda_x \in \mathbb{R}$.
Together, the resulting datamodel vectors for each test example form the {\em
datamodel matrix} $\Theta \in \mathbb{R}^{|T| \times |S|}$.

\paragraph{Datamodel representations.}
For our purposes, it will be convenient to view the datamodel $\theta_j$ for a test example $x_j$ 
as a $|S|$-dimensional {\em representation} of $x_j$, 
where the $i$-th coordinate $\theta_{ji}$ of this representation
is the extent to which models trained with $\mathcal{A}$
rely\footnote{For readers familiar with influence functions
\citep{koh2017understanding,hampel2011robust},
an intuitive (but not quite accurate)
way to interpret the datamodel weight $\theta_{ji}$ is as the influence of the
$i$-th training example on test example $x_j$.
}
on the $i$-th training example to classify example $x_j$.
These representations have two properties that make them a useful tool for model comparison:

\begin{itemize}
    \item[(a)] \textbf{Consistent basis:}
    The $i$-th coordinate of a datamodel representation always corresponds to 
    the importance of the $i$-th training example, 
    making datamodels inherently comparable across learning algorithms
    (as long as the training set is fixed).
    In other words, datamodel
    representations are {\em automatically aligned} across
    different algorithms, 
    and even across models that lack explicit representations 
    (e.g., decision trees).

    \item[(b)] \textbf{Predictiveness:}
    Datamodels have a natural {\em causal} interpretation.
    In particular, \citet{ilyas2022datamodels} show that datamodels can
    predict the counterfactual effect of removing or
    adding different training examples on model output for a given
    test example.
    As a result, any trends we find across the datamodel
    representations come with a precise quantitative interpretation in terms of
    model outputs (we return to this point in Section \ref{sec:approach}).

\end{itemize}

\section{Comparing learning algorithms with \textsc{ModelDiff}}
\label{sec:approach}
Recall (from Section \ref{ssec:motivation}) 
that we want to identify informative distinguishing 
transformations (IDTs). These are functions in input space that
(a) induce different behavior from the two learning algorithms being studied;
while 
(b) capturing a feature that arises naturally in the data and 
(c) being semantically meaningful.
Now, both the size of the space of transformations
as well as the subjective nature of ``semantically meaningful''
make it futile to search for 
\idt{s}
directly.

Instead, our method (called \textsc{ModelDiff}) outputs a set of 
{\em distinguishing subpopulations},
groups of examples with a common feature
that is used by one learning algorithm and not the other.
One can use these subpopulations to design 
\idt{s}, whether by inspection or
domain-specific analysis.
\textsc{ModelDiff} finds these subpopulations by examining
{\em how each algorithm uses the training data},
and comprises the following steps:

\paragraph{Step 1: Compute normalized datamodels for each algorithm.}
    We start by computing a datamodel representation
    \citep{ilyas2022datamodels}
    for each example in the test set $T$.
Specifically, we compute two sets of
datamodels---$\smash{\cdma}, \smash{\cdmb} \in \mathbb{R}^{|T| \times |S|}$---corresponding to the model
classes induced by learning algorithms $\mathcal{A}_1$ and $\mathcal{A}_2$
respectively.
After computing the datamodels, we {\em normalize} each one---that is,  
we divide each datamodel (or equivalently, each row of $\smash{\cdmi}$) 
by its $\ell_2$ norm.

\paragraph{Step 2: Compute residual datamodels.}
    Next, we compute a {\em residual datamodel} for each test example $x_i$,
    which is
    the projection of the datamodel $\smash{\dma_i}$
    onto the null space of datamodel $\smash{\dmb_i}$:
		\begin{align}
            \label{eq:residual_dms}
    		\resdmab_i = \dma_i - \text{proj}_{\dmb_i}\left(\dma_i\right).
		\end{align}
        Intuitively, the residual datamodels of algorithm $\mathcal{A}_1$ with respect to $\mathcal{A}_2$ correspond to the training directions that influence $\mathcal{A}_1$ after ``projecting away'' the component that also influences $\mathcal{A}_2$.
\paragraph{Step 3(i): Run PCA on residual datamodels.}
Finally, we use principal component analysis to find the
highest-variance directions in the space of residual datamodels.
That is, we run
\begin{equation}
    \label{eq:pca}
    \textsc{$\ell$-PCA}(\{\resdmab_1, \ldots, \resdmab_{|T|}\})
\end{equation}
to find the top $\ell$ principal components $\{u_1,\ldots u_\ell\}$ 
of the residual datamodels. 
Intuitively, these principal components---which we call {\em distinguishing 
training directions}---correspond to (weighted) combinations of training examples
that together are used by one algorithm and not the other.

How do we verify that these principal components 
actually help us distinguish the two learning algorithms?
\Cref{thm:informal} below leverages the predictive properties
of datamodels to establish that connection. Specifically, it shows that for a training direction to truly distinguish 
$\mathcal{A}_1$ from $\mathcal{A}_2$ (in terms of model behavior),
it suffices to explain a high (resp., low) amount of the variance in datamodel
representations of algorithm $\mathcal{A}_1$ (resp., $\mathcal{A}_2$). 
Importantly, the latter conditions can be checked directly by simply plotting the explained variances of each component---see Figure \ref{fig:headline_2}(3i).

\begin{proposition}[Informal---see Appendix \ref{app:analysis} for a formal statement and proof]
    \label{thm:informal}
    Assume for illustrative purposes that Eq. \eqref{eq:datamodels}
    holds with equality 
    (i.e., that datamodels perfectly approximate model output).
    Let $\bm{u} \in \mathbb{R}^{|S|}$ be a principal component 
    identified in \eqref{eq:pca}, and define the \underline{explained variance gap} as
    \[
        \Delta(\bm{u}) = \|\Theta^{(1)} \bm{u}\|^2 - \|\Theta^{(2)} \bm{u}\|^2.
    \]
    Then, up/down-weighting the training set $S$ (in a specific way) according 
    to $u$ will change outputs of algorithm 
    $\mathcal{A}_1$ by 
    $\Delta(\bm{u})$ 
    more than the outputs of algorithm $\mathcal{A}_2$
    on average (i.e., across test examples).
\end{proposition}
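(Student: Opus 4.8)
The plan is to exploit the exact-linearity assumption to reduce the claim to a clean linear-algebra identity, using only the \emph{predictiveness} property of datamodels (property (b) in Section~\ref{ssec:datamodels}). Under the assumption that Eq.~\eqref{eq:datamodels} holds with equality, the expected output of algorithm $\mathcal{A}_i$ on test example $x_j$ is an \emph{exactly linear} function of the training-set inclusion vector; extending this from binary indicators $\bm{1}_{S'} \in \{0,1\}^{|S|}$ to continuous reweightings $w \in \mathbb{R}^{|S|}$, we have $\mathbb{E}[f_i(x_j; w)] = \theta^{(i)}_j \cdot w$, where $\theta^{(i)}_j$ is the (normalized) datamodel row for $x_j$ under $\mathcal{A}_i$ and the expectation is over the algorithm's randomness. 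Consequently, perturbing a baseline weighting by $\Delta w$ changes the output on $x_j$ by exactly $\theta^{(i)}_j \cdot \Delta w$. This is the one and only place the datamodel assumption enters, and it is what converts ``reweighting the training set'' into an inner product against $\bm{u}$.

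First I would make ``up/down-weight according to $\bm{u}$'' precise: the natural intervention is $\Delta w = t\,\bm{u}$, which up-weights training examples with $u_i > 0$ and down-weights those with $u_i < 0$, with $t$ controlling the budget. Under this intervention the counterfactual effect on $x_j$ is $t\,(\theta^{(i)}_j \cdot \bm{u})$, whose \emph{sign} varies across test examples. To aggregate these effects without cancellation---and to match the squared norms in $\Delta(\bm{u})$---I would specify the intervention ``in a specific way'' as the \emph{self-aligned} reweighting: for each $x_j$ we take $t = \theta^{(i)}_j \cdot \bm{u}$, i.e. we move along $\bm{u}$ in proportion to, and in the direction of, how much model $i$ already responds to $\bm{u}$. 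The resulting change on $x_j$ is then $(\theta^{(i)}_j \cdot \bm{u})^2 \ge 0$. Summing (equivalently, averaging) over the test set, and using that the $j$-th entry of $\Theta^{(i)}\bm{u}$ is exactly $\theta^{(i)}_j \cdot \bm{u}$, the total counterfactual change for $\mathcal{A}_i$ is $\sum_{j}(\theta^{(i)}_j \cdot \bm{u})^2 = \|\Theta^{(i)}\bm{u}\|^2$. Subtracting the two algorithms yields
\[
\Big(\text{avg.\ effect on } \mathcal{A}_1\Big) - \Big(\text{avg.\ effect on } \mathcal{A}_2\Big)
= \tfrac{1}{|T|}\big(\|\Theta^{(1)}\bm{u}\|^2 - \|\Theta^{(2)}\bm{u}\|^2\big) = \tfrac{1}{|T|}\,\Delta(\bm{u}),
\]
the desired conclusion up to the averaging constant $1/|T|$. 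I would also remark that, because the datamodels are $\ell_2$-normalized in Step~1 and $\bm{u}$ is a unit principal component, each term $(\theta^{(i)}_j\cdot\bm{u})^2$ is a squared cosine similarity, so $\|\Theta^{(i)}\bm{u}\|^2/|T|$ is literally the fraction of datamodel variance that $\bm{u}$ explains for $\mathcal{A}_i$; this justifies the name ``explained variance gap'' and the diagnostic plot in Figure~\ref{fig:headline_2}(3i). Nothing here uses that $\bm{u}$ is a PC---the identity holds for any direction, and PCA enters only to \emph{find} directions making $\Delta(\bm{u})$ large.

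The main obstacle is the formalization in the first two paragraphs, not the algebra. Two points deserve scrutiny: (i) justifying that the linear model, fit on binary subset indicators, still predicts counterfactual outputs under \emph{continuous} reweightings $w$---here I would lean explicitly on the idealized equality assumption and fix the perturbation budget; and (ii) the self-aligned reweighting uses a \emph{per-test-example} (and per-algorithm) direction, so I must present it as a thought experiment quantifying each algorithm's own sensitivity along $\bm{u}$, and argue that comparing these self-aligned sensitivities is the fair notion of ``how much $\bm{u}$ distinguishes the algorithms.'' A cleaner alternative I would consider is to fix a single reweighting $\Delta w = t\,\bm{u}$ and compare the \emph{mean-squared} output changes $\tfrac{t^2}{|T|}\|\Theta^{(i)}\bm{u}\|^2$ across test examples; this avoids per-example directions, at the cost of reading $\Delta(\bm{u})$ as a gap in squared-effect magnitude rather than in signed effect.
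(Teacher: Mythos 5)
Your algebra is the paper's algebra---under exact linearity the per-example effect is an inner product $\theta^{(i)}_j\cdot\bm{u}$, squaring and summing gives $\|\Theta^{(i)}\bm{u}\|^2$, and subtracting gives $\Delta(\bm{u})$---but you formalize the intervention differently, and your primary route is the less natural of your two options. The paper makes ``up/down-weight according to $\bm{u}$'' precise \emph{probabilistically}: it tilts the subset-sampling distribution so that training example $i$ is included with probability $(1+u_i)/2$ instead of $1/2$, and defines the $u$-sensitivity of $f(x,\cdot)$ as the \emph{squared} change in $\mathbb{E}_{S'}[f(x,S')]$ under this tilt. Because the expectation of $\theta\cdot\bm{1}_{S'}$ under any such distribution is $\theta\cdot\mathbb{E}[\bm{1}_{S'}]$, the argument never leaves the binary-indicator domain on which datamodels are actually fit---this directly dissolves your worry (i) about extrapolating the linear model to continuous reweightings $w\in\mathbb{R}^{|S|}$, which in your write-up is papered over by fiat. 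Moreover, the paper's definition is exactly your ``cleaner alternative'' (a single tilt for all test examples, with the square taken to measure magnitude), not your ``self-aligned'' construction; the latter is problematic as a reading of the proposition because it applies a \emph{different} training-set intervention for each test example and each algorithm, so it is no longer ``up/down-weighting the training set $S$'' as one action, and the signed-effect interpretation it buys you is not what the statement claims. One small point on which you are more careful than the paper: the tilt actually yields a per-example change of $\theta_j\cdot\bm{u}/2$, so the sensitivities carry a factor of $1/4$ (and your averaged version a factor of $1/|T|$) that the paper silently absorbs; neither affects the gap's sign or the conclusion.
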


\paragraph{Step 3(ii): From principal components to subpopulations.}
We now have a set of $\ell$ distinguishing training directions $u_i$
(i.e., principal components of the residual datamodel matrix)
that we have verified by measuring $\Delta(u)$ from \Cref{thm:informal}.
The final step in our method is to translate each distinguishing direction $u_i$ 
into a corresponding {\em distinguishing subpopulation}, i.e.,
the set of test examples $x_j$ whose residual 
datamodels $\smash{\theta_j^{(1\backslash 2)}}$ 
most closely align with $u_i$ (See Figure \ref{fig:headline_2}(3ii)). 

\paragraph{Connecting back to our algorithm comparison objective.} 
The output of \textsc{ModelDiff} is the set of distinguishing subpopulations
from Step 3(ii).
We connect these subpopulations back to our formal definition of the comparison
problem (\Cref{def:informal_comparison})
by inferring a {\em distinguishing feature} 
for each one\footnote{
    We intentionally leave this step open-ended to ensure our
    method is applicable across domains and modalities.
    In vision specifically,
    we can streamline this step using text-vision models like CLIP
    \citep{radford2021learning} (see Appendix \ref{app:clip}).
} (e.g., in Figure \ref{fig:headline_2}, all the images in the distinguishing 
subpopulation have an orange tint).
We then design a transformation that adds this feature to inputs
(e.g., adding an orange tint---\Cref{fig:headline_2}(4)),
and measure its average effect
on models trained with $\mathcal{A}_1$ and $\mathcal{A}_2$.

We illustrate our end-to-end process (including the verification step) visually in Figure \ref{fig:headline_2}.

\begin{figure}[h]
    \centering
    \includegraphics[width=\textwidth]{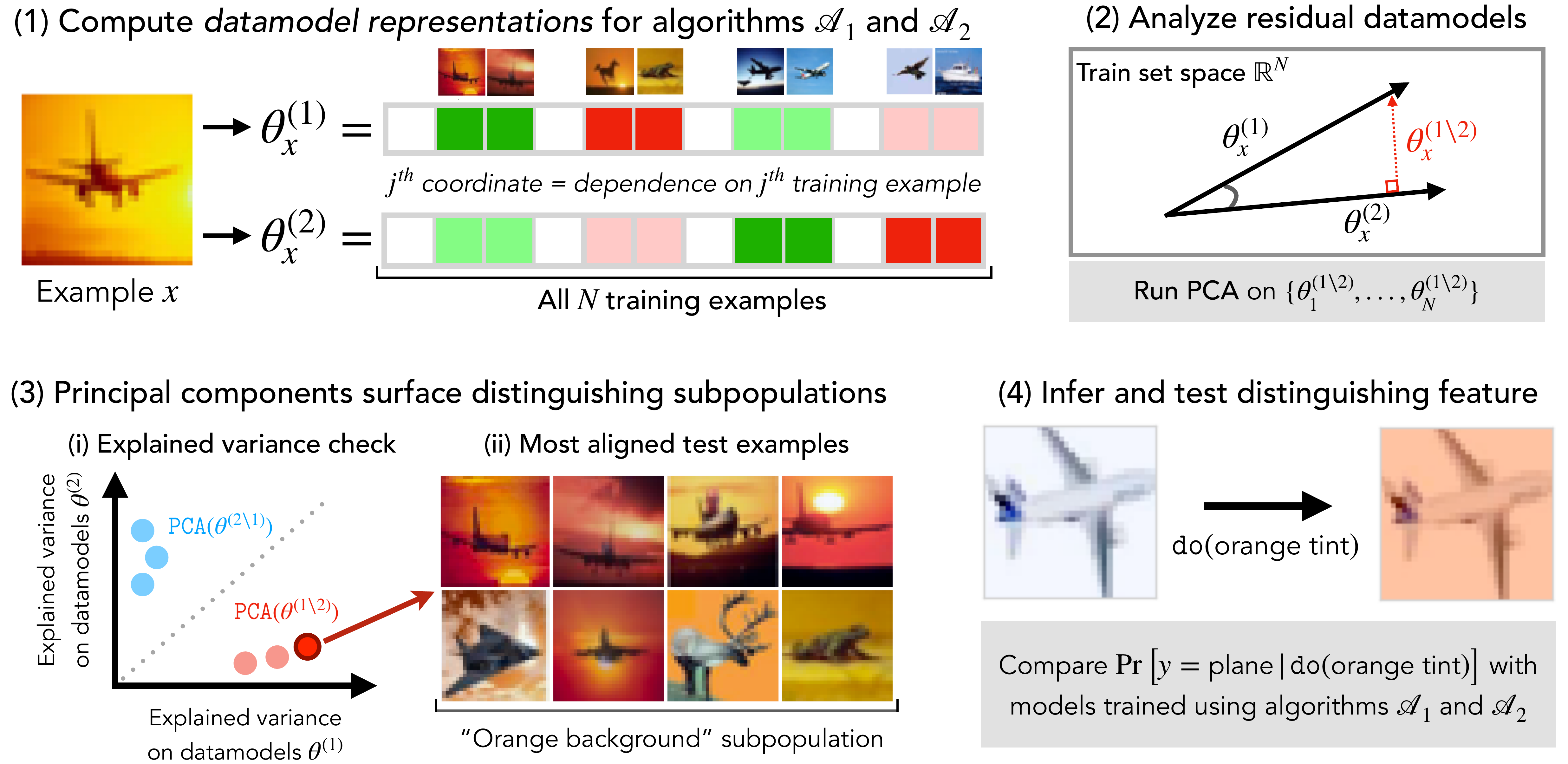}
    \caption{A visual summary of our approach.
    We (1) compute datamodel representations for each algorithm
    \citep{ilyas2022datamodels},
    then (2) compute residual datamodels \eqref{eq:residual_dms};
    next, (3) PCA on these residual datamodels yields so-called 
    ``distinguishing subpopulations.''
    Finally, in (4) simple inspection suffices to turn this subpopulation into 
    a feature transformation that we test with Definition \ref{def:informal_comparison}.
    }
    \label{fig:headline_2}
\end{figure}

\section{Case studies: Applying \textsc{ModelDiff}}
\label{sec:examples}
We demonstrate our comparison framework
using three case studies that each study
a key aspect of the standard training pipeline---namely, 
data augmentation, pre-training, and optimizer.
In each case study, we follow the same procedure:
(i) we use the \textsc{ModelDiff} algorithm from \Cref{sec:approach} above
to identify {\em distinguishing subpopulations};
(ii) we inspect these subpopulations to come up with a {\em candidate distinguishing transformation};
and (iii) we apply these candidate transformation to test examples and compare
its effect on models trained with the two learning algorithms.

\subsection{Case study: Data augmentation}
\label{ssec:data_aug}

\begin{figure}
	\centering
	\begin{subfigure}[t]{0.35\textwidth}
		\centering
		\includegraphics[width=\textwidth]{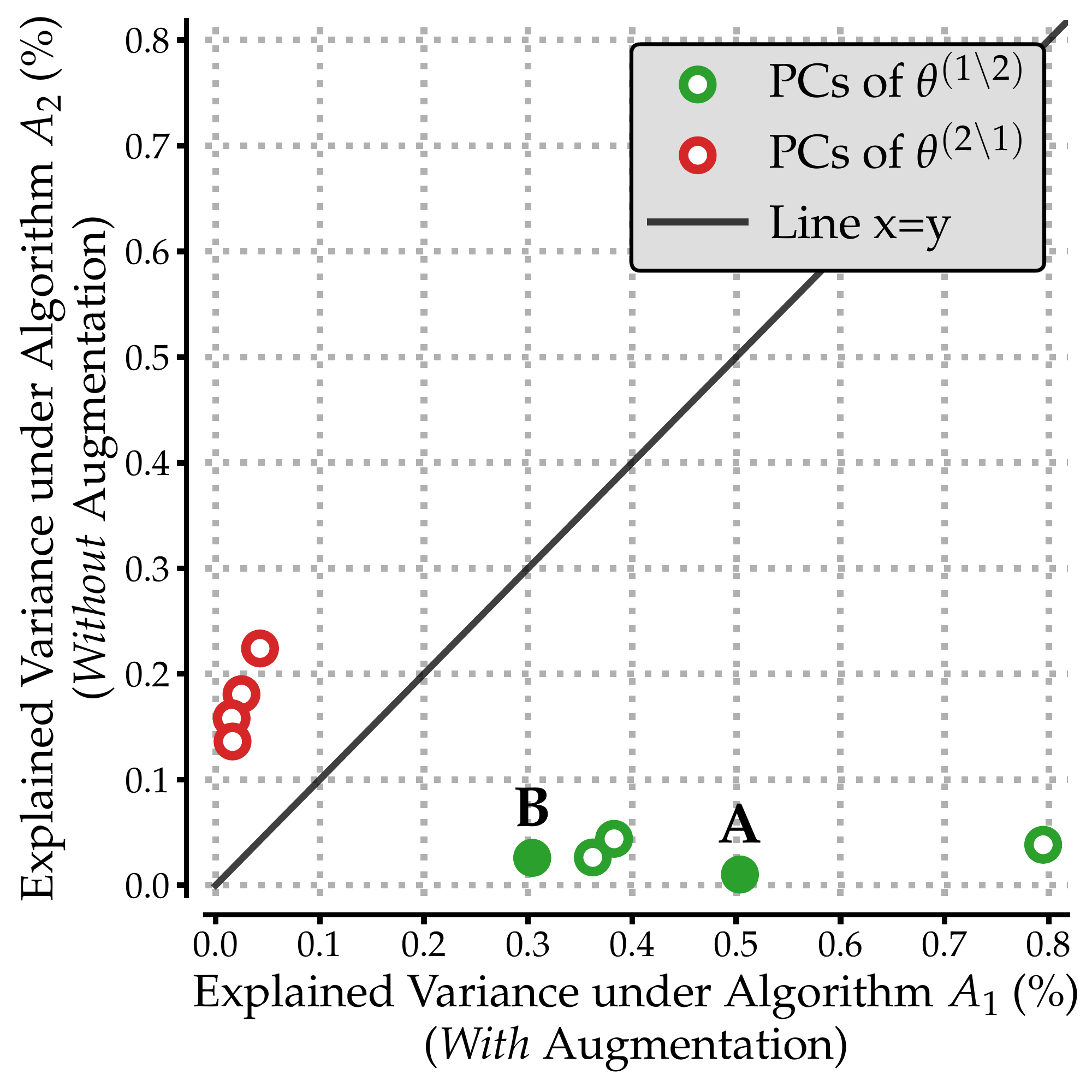}
		\label{fig:ev-living17}
	\end{subfigure}
	\hspace{1em}
	\begin{subfigure}[t]{0.6\textwidth}
		\includegraphics[width=\textwidth]{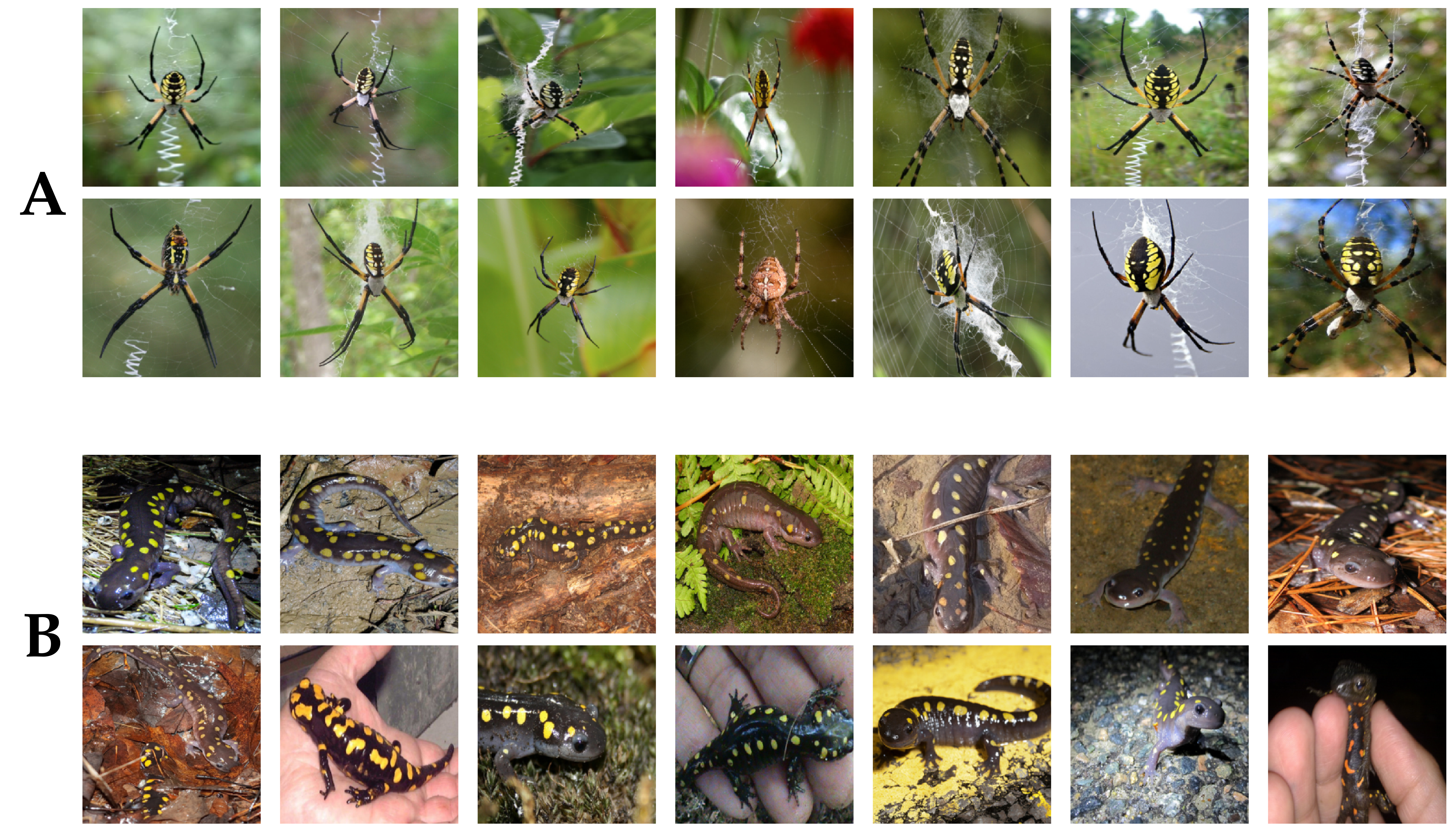}
	\end{subfigure}
	\caption{
	\textbf{Comparing \textsc{Living17} models trained with and without standard data augmentation.}
	\textbf{(Left)} Each green (resp., red) point is a {\em training direction}
	(i.e., a vector $v \in \mathbb{R}^{|S|}$
	representing a weighted combination of training examples)
	that distinguishes $\mathcal{A}_1$ from $\mathcal{A}_2$
	(resp., $\mathcal{A}_2$ from $\mathcal{A}_1$)
	as identified by \textsc{ModelDiff} (i.e., each point is a solution to \eqref{eq:pca}).
	The $x$ and $y$ coordinates of each point
	represent the ``importance'' (as given by \Cref{thm:informal}) of the training direction to models
	trained with $\mathcal{A}_1$ and $\mathcal{A}_2$ respectively.
	\textbf{(Right)}
	The surfaced distinguishing subpopulations, corresponding to the distinguishing training
	directions annotated $\text{\bf{A}}$ and $\text{\bf{B}}$.
	Direction $\text{\bf{A}}$ surfaces spiders on spider webs,
	and direction $\text{\bf{B}}$ surfaces salamanders with yellow polka dots.
	}
	\label{fig:ev_pca-living17}
\end{figure}

Data augmentation is a key component of the standard computer vision
training pipeline.
Still, while it often improves overall performance,
the effect of data augmentation
on models' learned {\em features} remains elusive.
In this case study,
we study the effect of data augmentation on classifiers trained 
for
the ImageNet-derived \textsc{Living17} task \citep{santurkar2021breeds}.
Specifically, we compare two classes of ResNet-18 models trained on this dataset
with the exact same settings 
modulo the use of data augmentation, i.e.,
\begin{itemize}
	\item {\bf Algorithm $\mathcal{A}_1$}: Training with standard augmentation,
	(namely, horizontal flip and resized random crop with \texttt{torchvision}
	default parameters).
	Resulting models attain 89\% average test accuracy
	(where the average is taken over randomness in training).
	\item {\bf Algorithm $\mathcal{A}_2$}: Training without data augmentation.
	Models attain 81\% average accuracy.
\end{itemize}
\Cref{app:datasets} provides further experimental details for this case studies and the others.

\begin{figure}[bp!]
	\centering
	\begin{subfigure}{0.45\textwidth}
		\includegraphics[width=\textwidth]{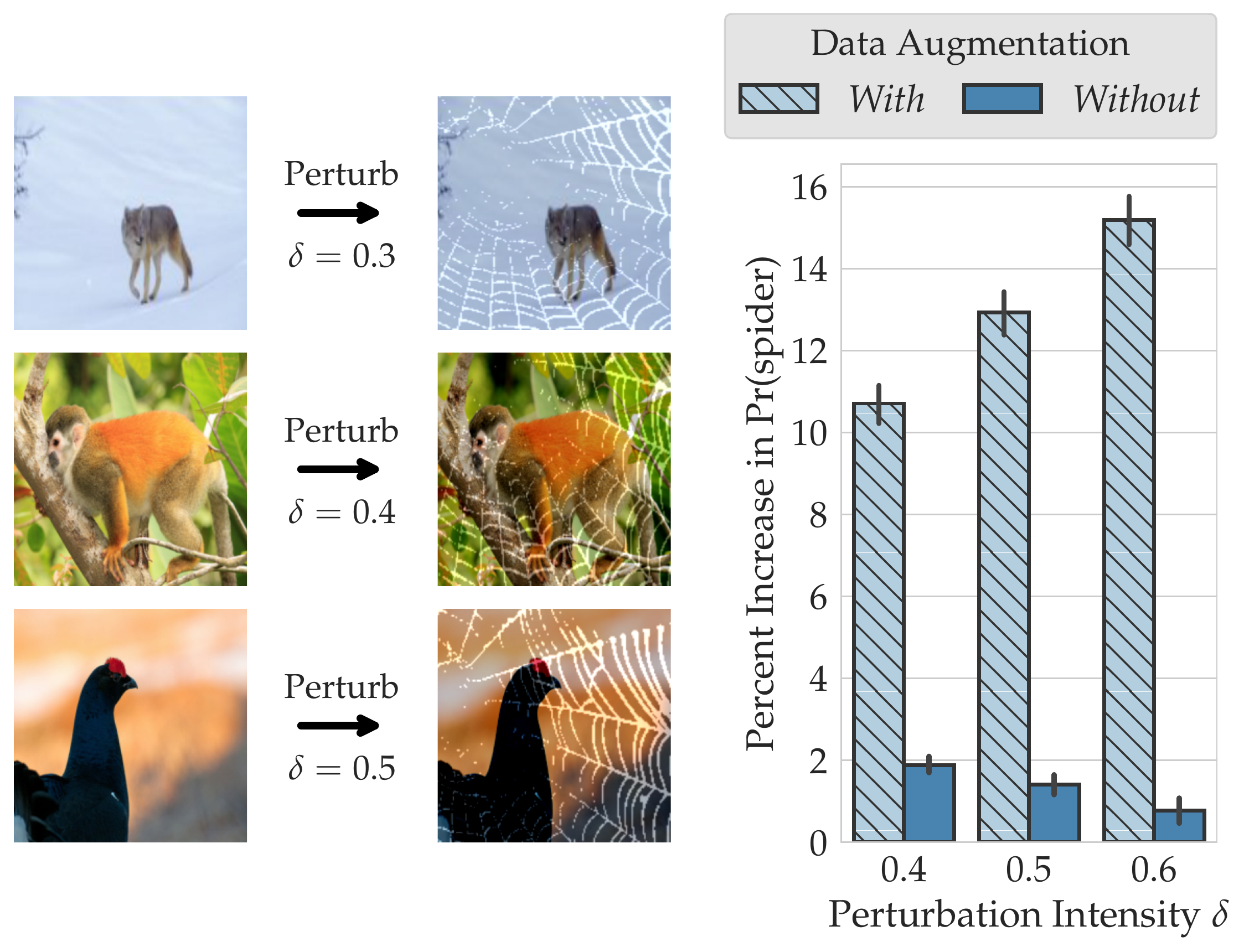}
		\caption{``Spider web'' feature}
		\label{fig:ft_l17_spider}
	\end{subfigure}
	\hspace{2em}
	\begin{subfigure}{0.45\textwidth}
		\includegraphics[width=\textwidth]{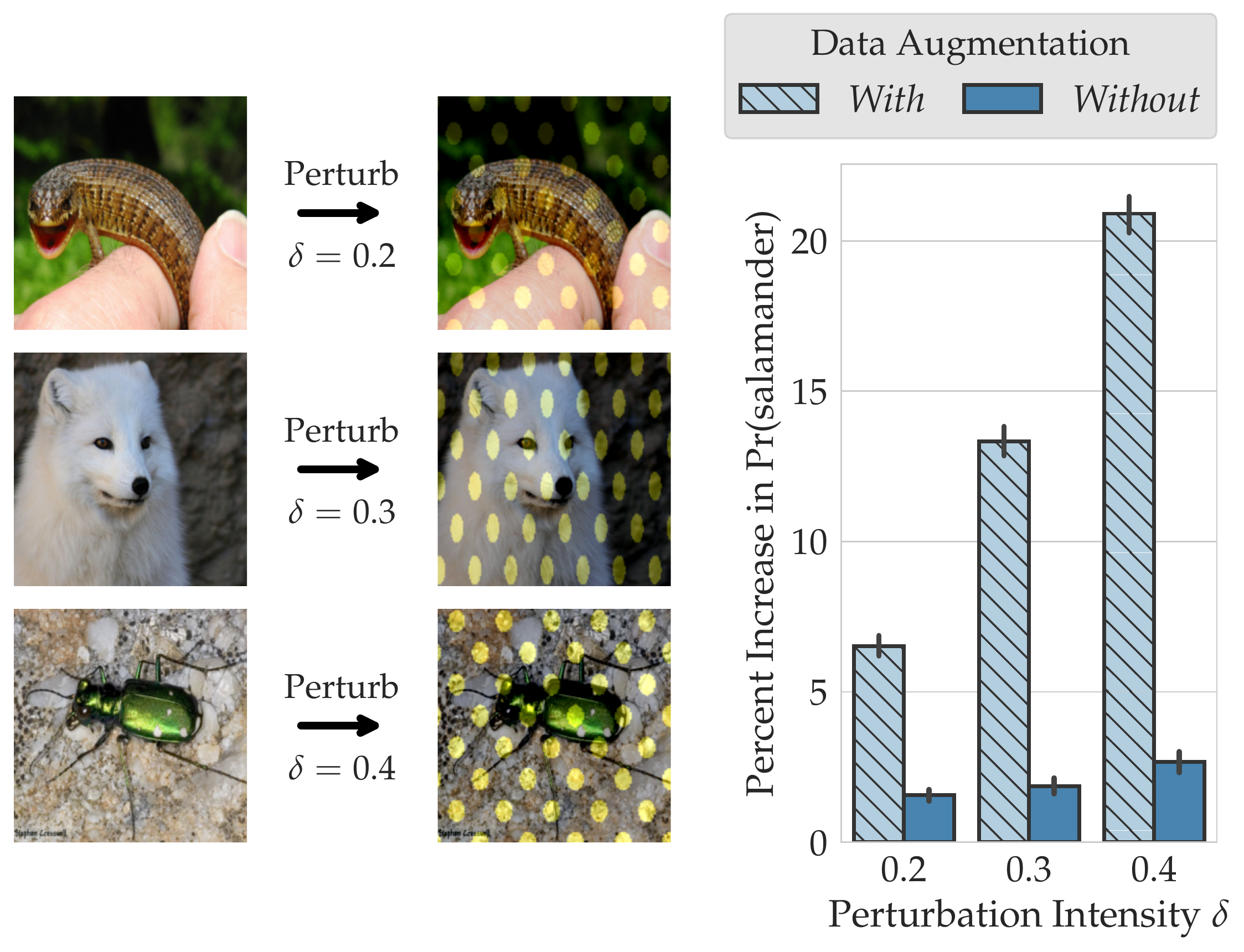}
		\caption{``Polka dots'' feature}
		\label{fig:ft_l17_salamander}
	\end{subfigure}
	\caption{
		\textbf{Effect of data augmentation on \textsc{Living17} models}.
		Standard data augmentation amplifies specific instances of co-occurrence
		(panel (a)) and texture (panel (b)) biases.
		The left side of each panel illustrates the distinguishing feature transformation
		at three different intensities $\delta$.
		On the right, we plot the average effect of the transformation on the confidence 
		of models trained with and without data augmentation, for varying $\delta$.
		At moderate intensity $\delta$, adding a spider web pattern to images makes models trained with
		(without) data augmentation, on average, 13\% (1\%) more confident in
		predicting the class ``spider,''
		while
		overlaying a polka dot pattern makes them
		14\% (2\%) more confident in the class ``salamander.''
		In both cases, increasing the intensity $\delta$ widens the gap between the two model classes. 
	}
	\label{fig:ft_living17}
\end{figure}

\paragraph{Applying \textsc{ModelDiff}.}
Applying our \textsc{ModelDiff} method to algorithms $\mathcal{A}_1$ and $\mathcal{A}_2$ 
gives us a set of distinguishing subpopulations, visualized in Figure \ref{fig:ev_pca-living17} (right).

On the left side of the same figure---inspired by \Cref{thm:informal}---we 
plot the variance explained by each distinguishing training direction
(i.e., the solutions to \eqref{eq:pca}) in the datamodels for both  
$\mathcal{A}_1$ ($x$ axis) and $\mathcal{A}_2$ ($y$ axis).
The training directions distinguishing $\mathcal{A}_1$ from $\mathcal{A}_2$
(in green)
indeed explain a significant amount of variance in the datamodels of $\mathcal{A}_1$ but not
in those of $\mathcal{A}_2$. 
For the directions distinguishing $\mathcal{A}_2$ from $\mathcal{A}_1$ (in red)
the situation is reversed, just as we would expect.

\paragraph{From subpopulations to IDTs.}
The subpopulations 
in Figure \ref{fig:ev_pca-living17} suggest the following features:
\begin{itemize}
	\item {\bf Spider web:} Direction $\textbf{A}$ surfaces a subpopulation of test images that contain spiders.
		In contrast to random spider images in $\textsc{Living17}$, all the
		images in the surfaced subpopulation contain a \emph{spider web} in the
		background (\Cref{app:hooman}).
		We thus hypothesize that
		models trained
		with standard data augmentation---moreso than those trained without
		it---use spider webs to predict the class ``spider''.
		We test this hypothesis using a feature transformation that overlays a
		spider web pattern onto an entire image
		(see Figure \ref{fig:ft_l17_spider}).
	\item {\bf Polka dots:} Direction $\textbf{B}$ surfaces a subpopulation of test images that contain salamanders.
		Again, comparing these images to random salamander images in
		$\textsc{Living17}$ test data reveals 
		yellow-black polka dots as a common feature (see also \Cref{app:hooman}).
		This suggests that 
		models trained with data augmentation might rely on the
		polka dot texture to predict the class ``salamander.''
		To test this hypothesis, we design a feature transformation that adds
		small yellow polka dots to the entire image (see Figure \ref{fig:ft_l17_salamander}).
\end{itemize}
We provide additional analysis in support of these hypotheses in Appendix \ref{app:hooman}.

Figure \ref{fig:ft_living17} shows the effect of the above
transformations on models trained with and without data augmentation.
The results confirm our hypotheses.
For example,
overlaying a spider web pattern with 
30\% opacity
increases $P(\text{``spider''})$
(i.e., models' average softmax confidence in the spider label)
for models trained with (without) data augmentation
by 11\% (1\%).
Similarly, overlaying the yellow polka dot texture with 
30\% opacity increased
$P(\text{``salamander''})$ 
by 14\% (2\%).
Furthermore, increasing the transformations' intensity consistently widens the
gap between the predictions of the two model classes.
Overall, these differences verify that the feature transformations we constructed can indeed distinguish the two learning algorithms as per Definition \ref{def:informal_comparison}.

\paragraph{Connection to previous work.}
Our case study demonstrates how training with standard data augmentation on
\textsc{Living17} data can amplify {specific instances} of co-occurrence
bias (spider webs) and texture bias (polka dots).
These findings are consistent with prior works that show that
data augmentation can introduce biases (e.g., \citep{hermann2020origins}).
Also, the distinguishing features that we find are specific to certain classes---this may explain why data augmentation can have disparate impacts on performance across different classes~\cite{balestriero2022effects}.
Finally, our findings, which demonstrate how data augmentation alters relative importance of specific features, corroborate the view of data augmentation as {\em feature manipulation}~\cite{shen2022data}.

\begin{figure}[h]
	\centering
	\begin{subfigure}[t]{0.35\textwidth}
		\centering
		\includegraphics[width=\textwidth]{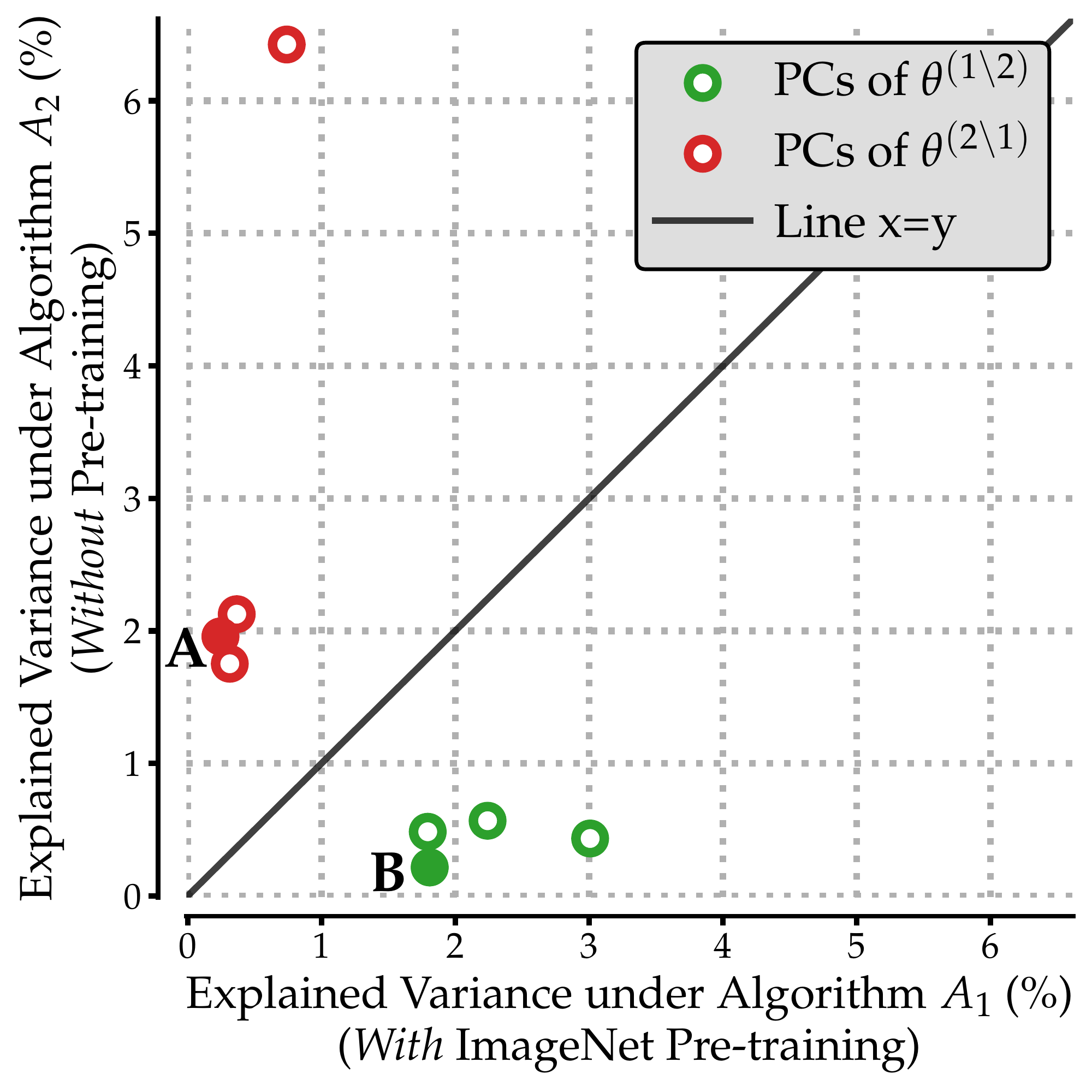}
	\end{subfigure}
	\hspace{1em}
	\begin{subfigure}[t]{0.6\textwidth}
		\includegraphics[width=\textwidth]{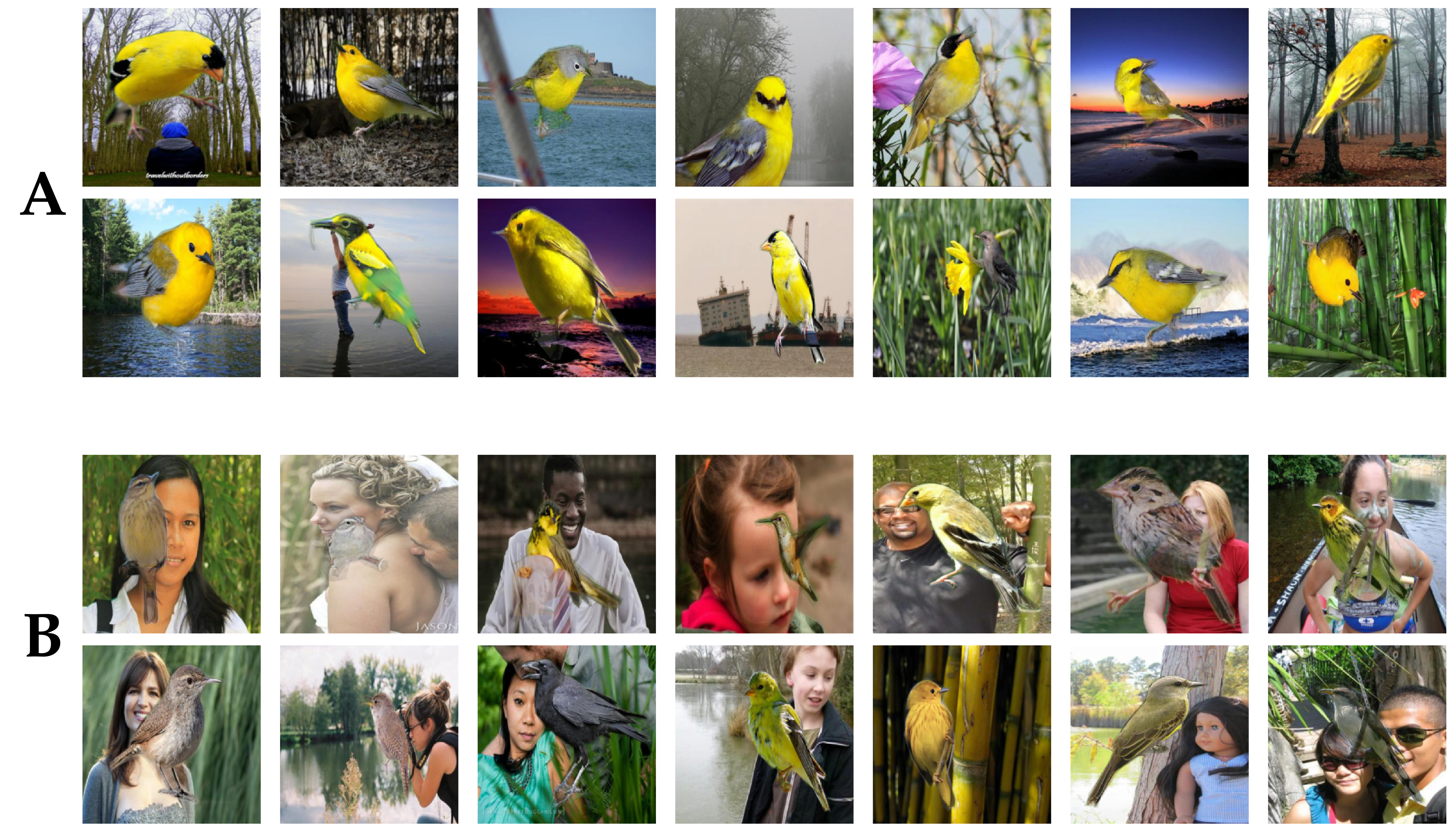}
	\end{subfigure}
	\caption{
	\textbf{Comparing \text{Waterbirds} models trained with and without ImageNet pre-training.}
	An analog to Figure \ref{fig:ev_pca-living17} for our second case study,
	see Figure \ref{fig:ev_pca-living17} for a description.
	In this case, subpopulation $\text{\bf{A}}$ seems to correspond to yellow birds and
	subpopulation $\text{\bf{B}}$ to faces in the background.
	}
	\label{fig:ev_pca-waterbirds}
\end{figure}
\subsection{Case study: Pre-training}
\label{ssec:pretrain}

Pre-training on large datasets is a standard approach to improving performance on downstream tasks where training data is scarce.
In this case study, we study the effect of ImageNet
pre-training~\cite{deng2009imagenet,kornblith2019better} on classifiers
trained for the \textsc{Waterbirds} task~\citep{sagawa2020distributionally}. 
To study this, we compare two classes of ResNet-50 models trained with the exact
same settings (see Appendix \ref{app:exp_setup}) modulo the use of ImageNet pre-training, i.e.,
\begin{itemize}
	\item {\bf Algorithm $\mathcal{A}_1$}: Pre-training on ImageNet, followed by
	full-network finetuning on \textsc{Waterbirds}. Models 
	attain $89.1\%$ average accuracy.
	\item {\bf Algorithm $\mathcal{A}_2$}: Training directly on \textsc{Waterbirds}. Models attain $63.9\%$ average accuracy.
\end{itemize}

\paragraph{Identifying IDTs with \textsc{ModelDiff}.} 
We apply \textsc{ModelDiff} just as in \Cref{ssec:data_aug},
with results shown in Figure \ref{fig:ev_pca-waterbirds}.
Verifying our results, we identify two candidate distinguishing features:
\begin{itemize}
	\item {\bf Yellow color:} Subpopulation $\textbf{A}$ comprises
	test images that contain yellow birds belonging to class ``landbird.''
	This leads us to hypothesize that models trained from scratch (i.e.,
	without ImageNet pre-training) spuriously rely on the color yellow to
	predict the class ``landbird,'' whereas ImageNet-pretrained models do not.
	To test this hypothesis, we design a feature transformation that adds a
	yellow square patch to images (see~\Cref{fig:ft_wb_yellow}).
	\item {\bf Human face:} Subpopulation $\textbf{B}$ comprises ``landbird'' images with \emph{faces} in the background.
	This suggests that ImageNet pre-training introduces a spurious dependence on human faces to predict the label ``landbird.''
	To test this hypothesis, we design a feature transformation that inserts patches of human faces to \textsc{Waterbirds} image backgrounds (see~\Cref{fig:ft_wb_face}).
\end{itemize}

\begin{figure}
	\centering
	\begin{subfigure}{0.47\textwidth}
		\includegraphics[width=\textwidth]{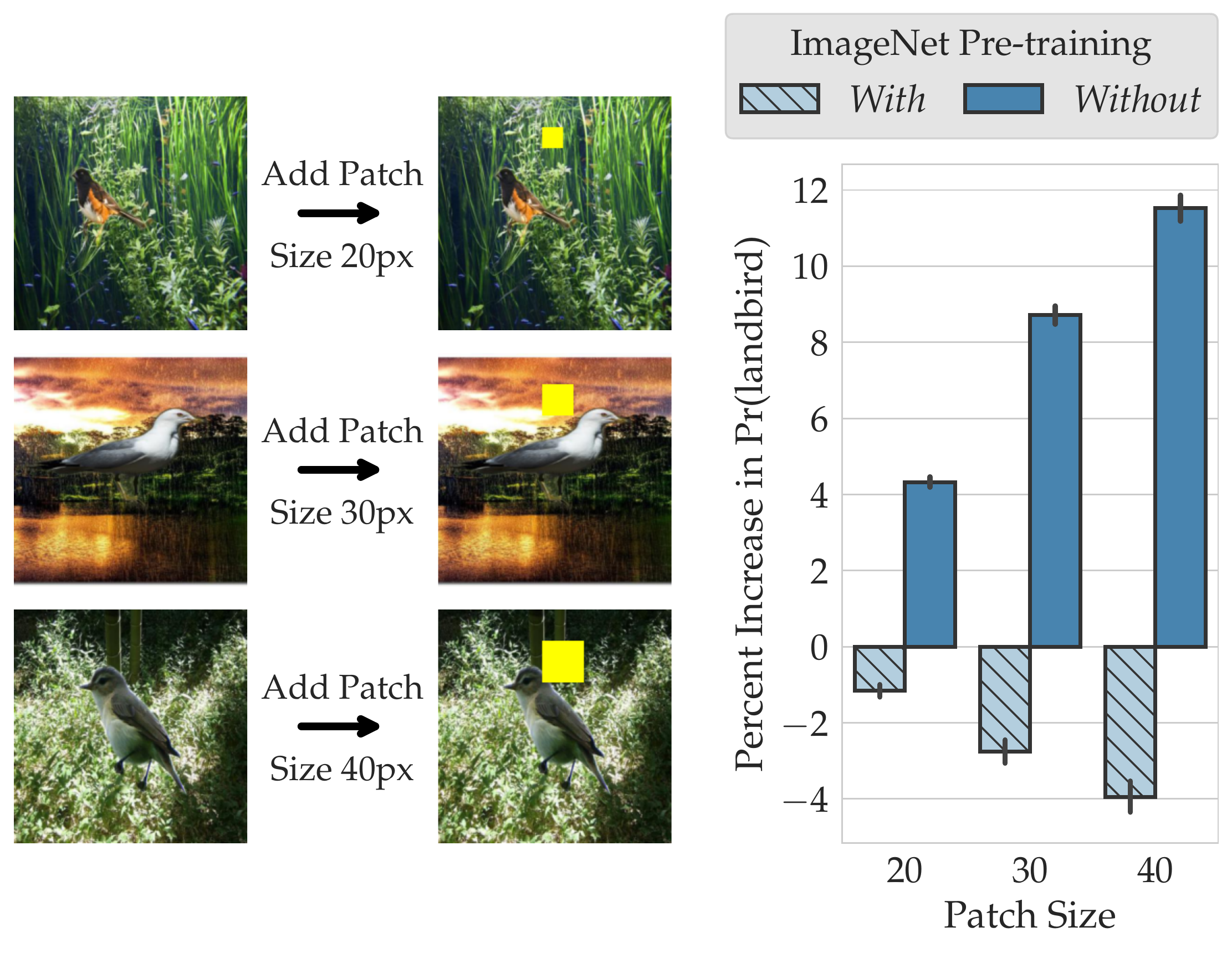}
		\caption{``Yellow'' feature}
		\label{fig:ft_wb_yellow}
	\end{subfigure}
	\hspace{2em}
	\begin{subfigure}{0.47\textwidth}
		\includegraphics[width=\textwidth]{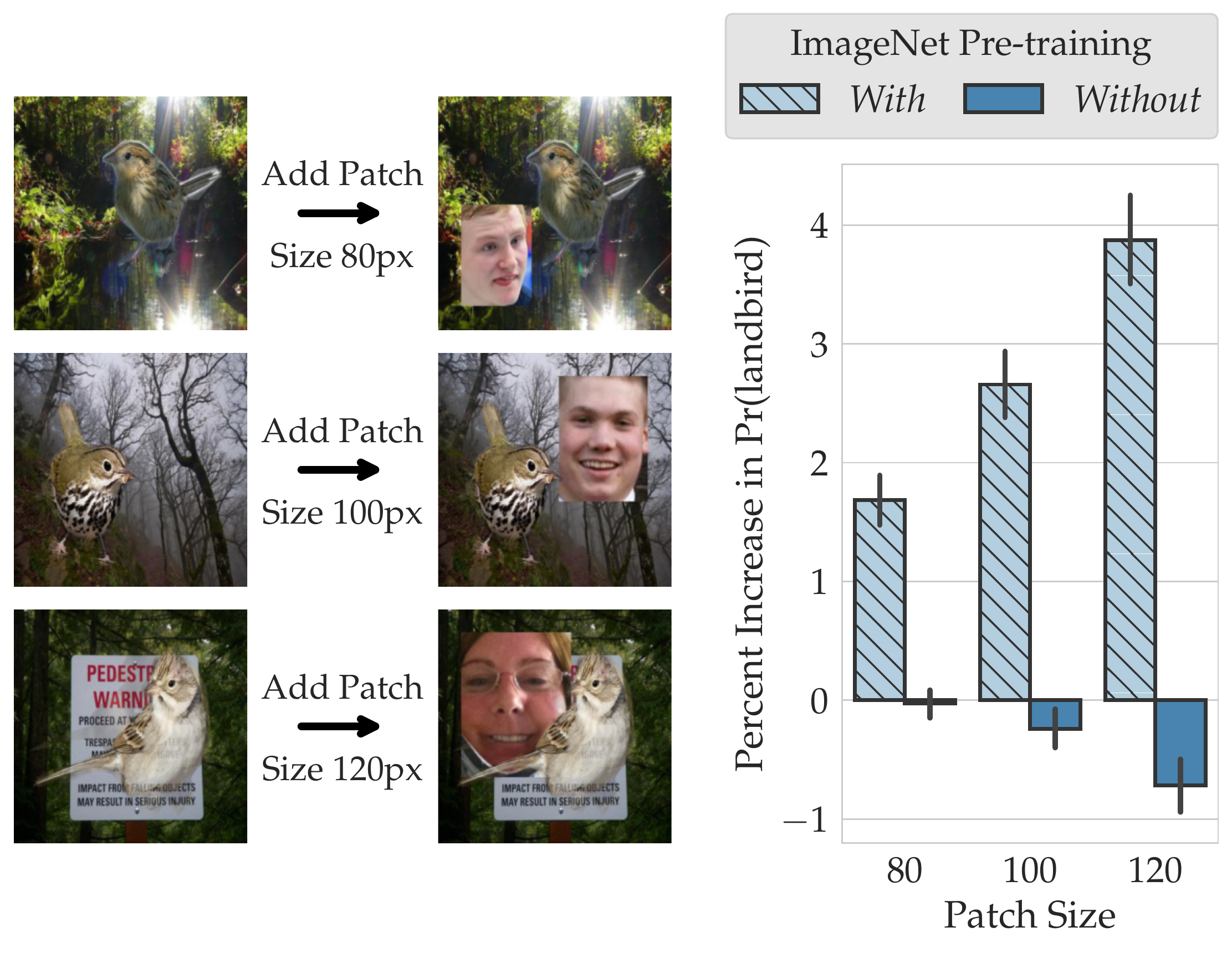}
		\caption{``Human face'' feature}
		\label{fig:ft_wb_face}
	\end{subfigure}
	\caption{
		\textbf{Effect of ImageNet pre-training on \textsc{Waterbirds} classification}.
		Analogously to Figure~\ref{fig:ft_living17}, we use our
		framework to identify
		spurious correlations that
		are either suppressed or amplified by ImageNet pre-training.
		\textbf{(Left)}
		Adding a yellow patch to images makes models
		trained without (with) pre-training, on average, 9\% more (2\% less)
		confident in predicting the label ``landbird.''
		\textbf{(Right)}
		Adding human faces to image
		backgrounds makes models trained with (without) pre-training, on average,
		3\% (0\%) more confident in predicting the label ``landbird.''
		Again, in both cases, increasing the intensity of feature transformations
		widens the gap in treatment effect between the two model classes. 
	}
	\label{fig:ft_waterbirds}
\end{figure}

\noindent Once again, additional analysis (Appendix \ref{app:hooman}) supports these hypotheses.

In Figure \ref{fig:ft_waterbirds}, we compare the effect of the above
feature transformations on models trained with and without ImageNet pre-training.
The results confirm both of our hypotheses.
Adding a small (40px) yellow square patch 
to test images increased
$P(\text{``landbird''})$ by 12\% for models trained from scratch
but {\em decreased} $P(\text{``landbird''})$ for models pre-trained on ImageNet.
Similarly, adding a human face patch\footnote{We collect a bank of human
face patches using ImageNet validation examples and their corresponding face
annotation~\cite{yang2022study}; see~\Cref{app:feature_trans} for more information.} to
image backgrounds increased
$P(\text{``landbird''})$
by up to 4\% for pre-trained models, but did not significantly affect models trained from scratch.
Once again, increasing the intensity (i.e., patch size) of these feature transformations
further widens the gap in sensitivity between the two model classes.

\paragraph{Streamlining the verification step with CLIP.} 
Since one can apply \textsc{ModelDiff} 
across a variety of domains
(i.e., anywhere where one can compute datamodels),
we have left the verification step intentionally vague so far.
However, often the structure of the data being studied allows 
us to find IDTs in a more streamlined manner.
For example, in Appendix \ref{app:clip} we apply 
\textsc{Clip} \citep{radford2021learning}
to generate automatically hypotheses for this case study.
Given the images in subpopulations $\textbf{A}$ and $\textbf{B}$,
\textsc{Clip} generates candidate captions 
\texttt{\{``yellow'',``canary'',``lemon''\}} 
and
\texttt{\{``florists'', ``florist'', ``faces''\}} respectively
(see \Cref{app:clip} for more details).

\paragraph{Connections to prior work.}
Our results show that
ImageNet pre-training reduces dependence on some spurious correlations (e.g.,
yellow color $\rightarrow$ landbird) but also \emph{introduces} new ones (e.g.,
human face $\rightarrow$ landbird).
These findings connect to the (seemingly contradictory) phenomenon
where pre-training improves robustness to spurious
features~\cite{ghosal2022vision,tu2020empirical} 
while also transferring over spurious correlations~\cite{salman2022when,neyshabur2020being} from
the pre-training dataset.

\subsection{Case study: Optimizer hyperparameters}
\label{ssec:lr}
\begin{figure}[t]
	\centering
	\begin{subfigure}[t]{0.35\textwidth}
		\centering
		\includegraphics[width=\textwidth]{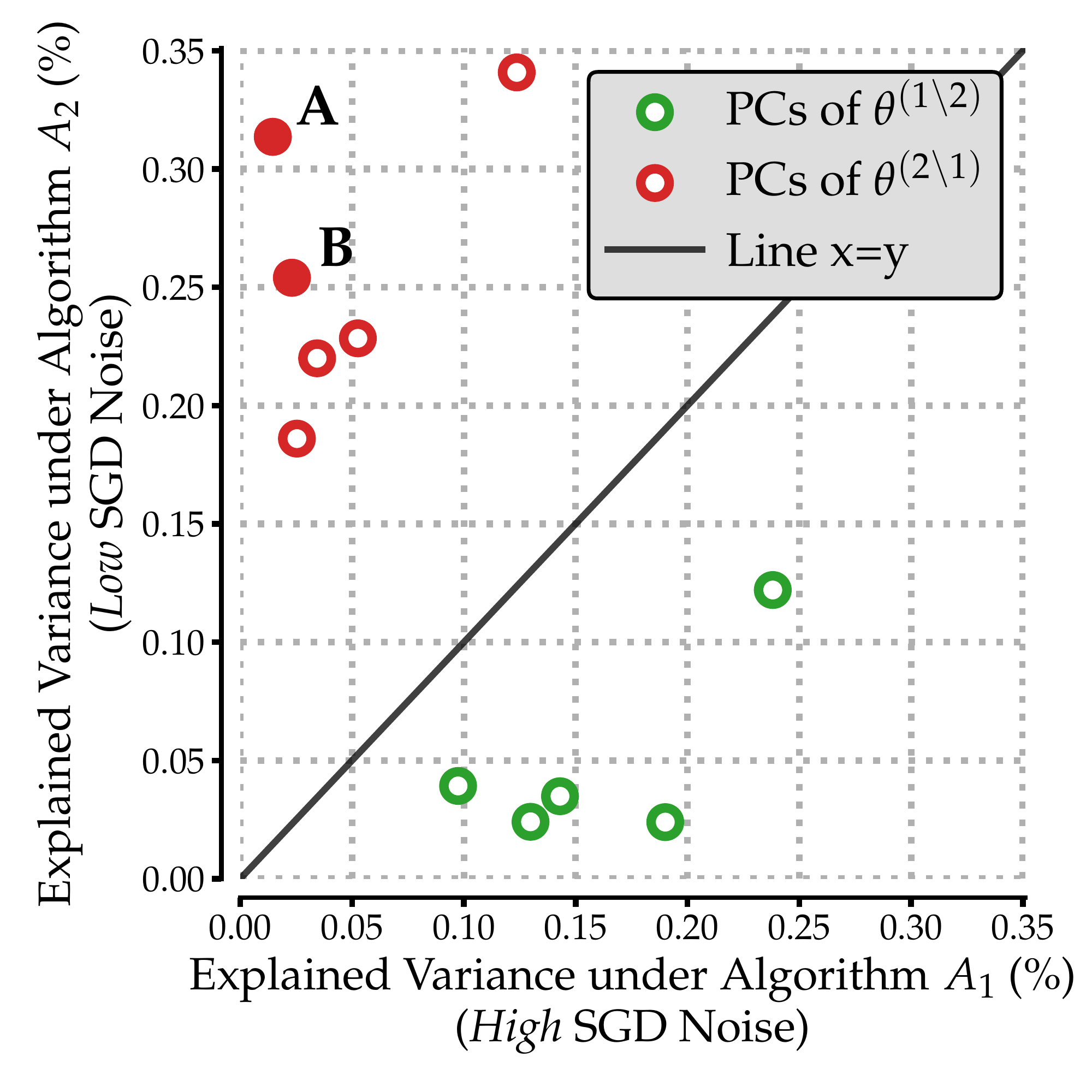}
	\end{subfigure}
	\hspace{1em}
	\begin{subfigure}[t]{0.6\textwidth}
		\includegraphics[width=\textwidth]{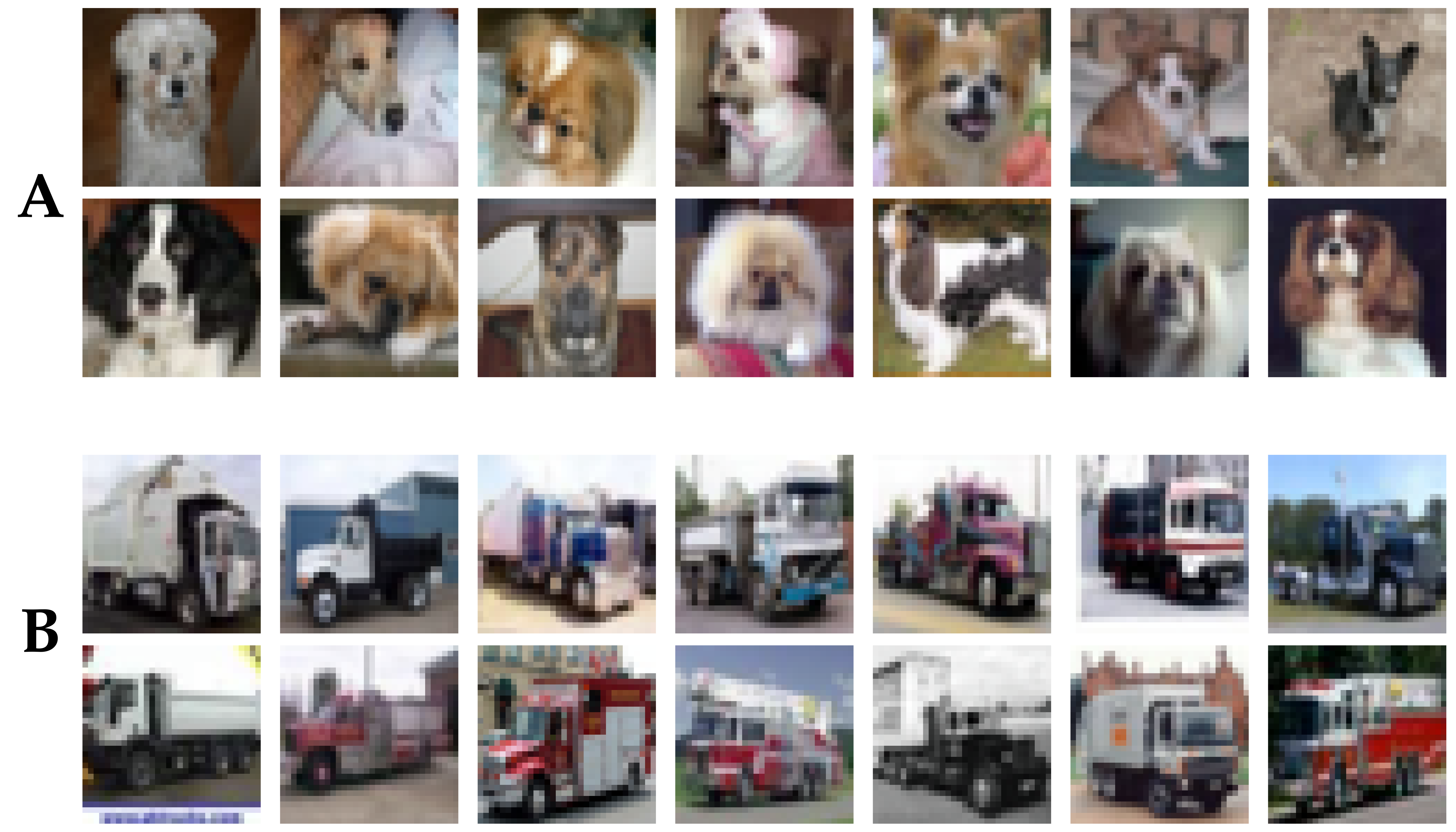}
	\end{subfigure}
	\caption{
	\textbf{Comparing \text{CIFAR-10} models trained with high and low SGD noise.}
	An analog to Figure \ref{fig:ev_pca-living17} for our third case study. Here, subpopulation
	{\bf A} seems to correspond to dogs with a particular texture, and subpopulation {\bf B} to front-facing trucks with a prominent rectangular shape.
	}
	\label{fig:ev_pca-cifar}
\end{figure}

The choices of optimizer and corresponding hyperparameters can affect both the trainability and the generalization of resulting models~\cite{hoffer2017train,keskar2017large}.
In this case study, we study the effect of two 
hyperparameters---learning rate and batch size---that control the effective
scale of the noise in stochastic gradient descent (SGD).
We study the effect of these hyperparameters in the context of \textsc{Cifar-10}~\cite{krizhevsky2009learning} classifiers by comparing the following two learning algorithms:

\begin{itemize}
	\item {\bf Algorithm $\mathcal{A}_1$}: Training with high SGD noise, i.e., large learning rate (0.1) and small batch size (256). Models attain 93\% average test accuracy.
	\item {\bf Algorithm $\mathcal{A}_2$}: Training with low SGD noise, i.e., small large rate (0.02) and large batch size (1024). Models attain 89\% average test accuracy.
\end{itemize}

\paragraph{Identifying IDTs with \textsc{ModelDiff}.}
We apply \textsc{ModelDiff} just as in \Cref{ssec:data_aug},
with results shown in Figure \ref{fig:ev_pca-cifar}.
\begin{itemize}
	\item {\bf Black-and-white texture:} Subpopulation $\textbf{A}$ contains two-colored dogs (Figure~\ref{fig:ev_pca-cifar}, top right).
		The low-resolution nature of \textsc{CIFAR-10} makes it difficult to identify a 
		single distinguishing feature from this subpopulation.
		Additional analysis using datamodels (\Cref{app:hooman}), however, reveals a set of
		black-and-white training images from other classes that influence
		predictions on subpopulation $\textbf{A}$ only when models are trained with low
		SGD noise (algorithm $\mathcal{A}_2$). 
		We thus hypothesize that models trained with low SGD noise rely more on black-and-white textural features to predict the class ``dog.''
		To test this hypothesis, we design a transformation that
		adds a small black-and-white patch to a given image
		(see~\Cref{fig:ft_c10_dog}).
	\item {\bf Rectangular shape:} Direction $\textbf{B}$ surfaces a
	subpopulation of front-facing trucks (Figure~\ref{fig:ev_pca-cifar},
	bottom right) that all have a similar rectangularly-shaped cabin and
	cargo area.
	The same style of additional analysis (see \Cref{app:hooman}) supports this observation, and
	suggests that models trained with low SGD noise (i.e., with $\mathcal{A}_2$) rely on rectangular-shaped patterns to predict the class ``truck''.
	To test this hypothesis, we design a feature transformation that modifies a given image
	with a patch of two high-contrast rectangles, loosely resembling the
	cabin/cargo shape of trucks in the subpopulation $\textbf{B}$
	(see~\Cref{fig:ft_c10_truck}).
\end{itemize}

\begin{figure}
	\centering
	\begin{subfigure}{0.47\textwidth}
		\includegraphics[width=\textwidth]{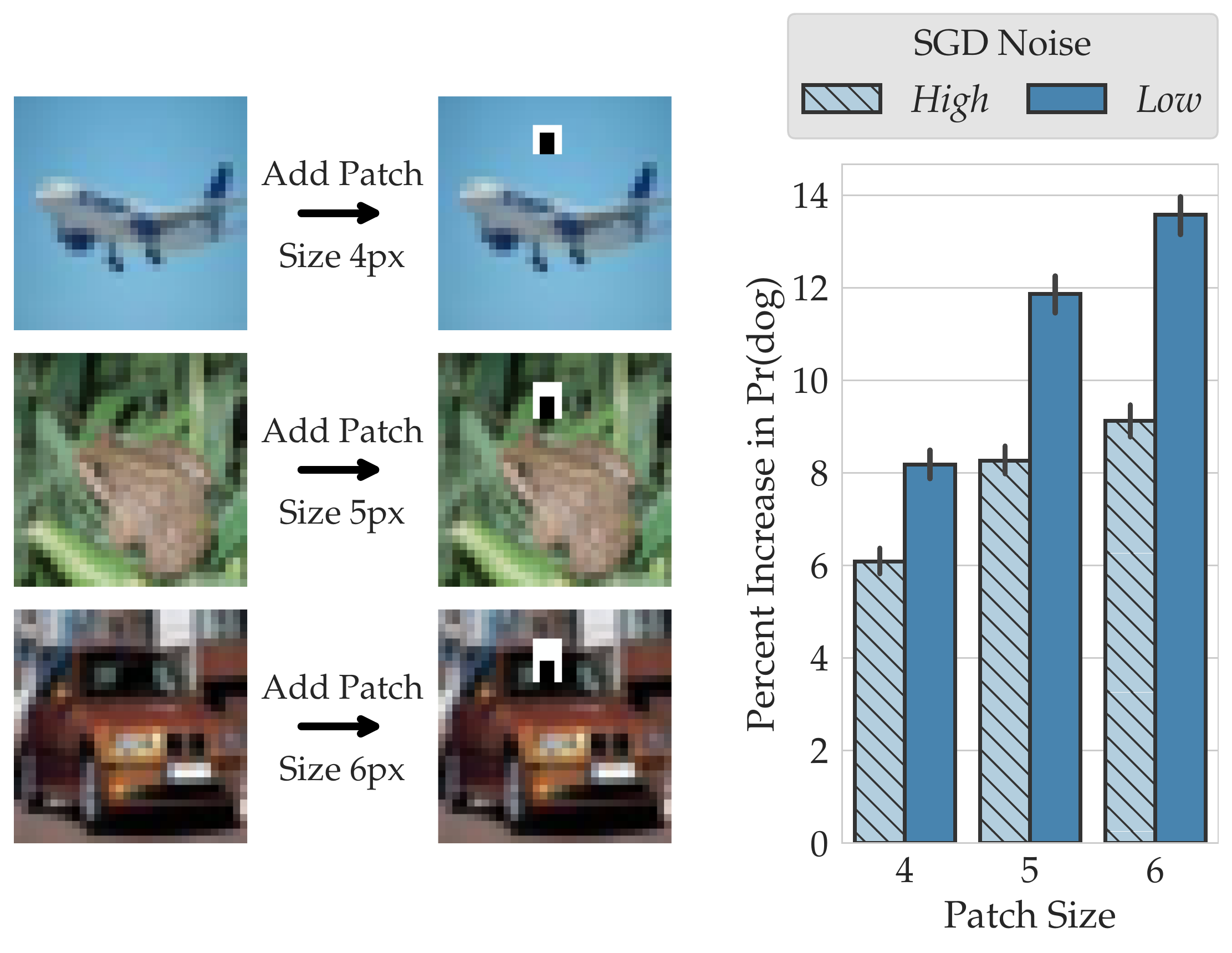}
		\caption{``Black-and-white texture'' feature}
		\label{fig:ft_c10_dog}
	\end{subfigure}
	\hspace{2em}
	\begin{subfigure}{0.47\textwidth}
		\includegraphics[width=\textwidth]{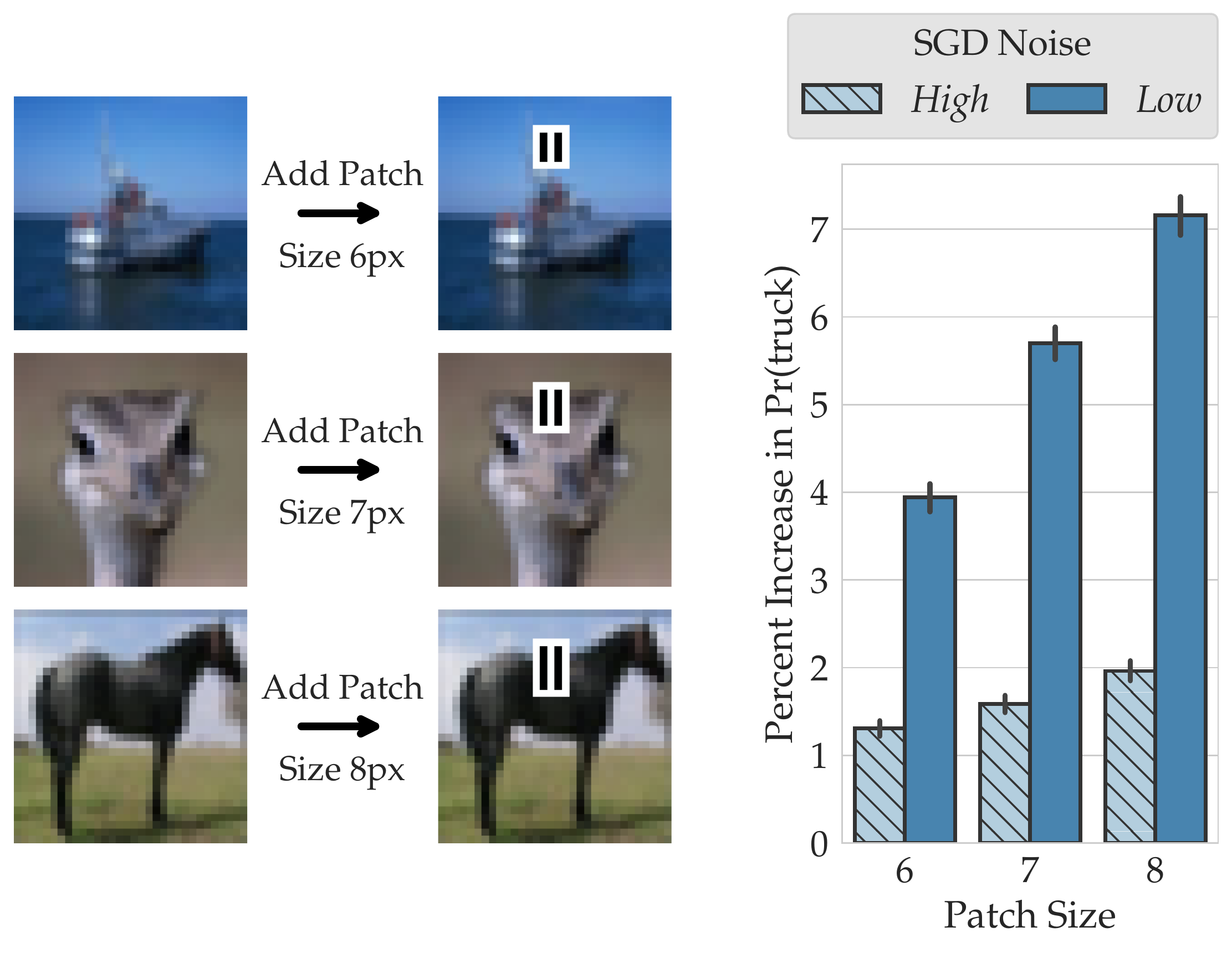}
		\caption{``Rectangular shape'' feature}
		\label{fig:ft_c10_truck}
	\end{subfigure}
	\caption{
		\textbf{Effect of SGD hyperparameters on CIFAR-10 models}.
		Analogously to Figures \ref{fig:ft_living17} and \ref{fig:ft_waterbirds}, we use our framework to identify features that distinguish models trained with lower SGD noise from models trained with higher SGD noise.
		\textbf{(Left)} %
		Adding a black-and-white patch to images makes models trained with low (high) SGD noise, on average, 11\% (8\%) more confident in predicting the label ``dog.''
		\textbf{(Right)} %
		Adding high-contrast rectangles to images makes models trained with low (high) SGD noise, on average, 5.5\% (1.5\%) more confident in predicting the label ``truck.''
		In both cases, increasing the intensity of feature transformations widens the gap in treatment effect between the two model classes.
	}
	\label{fig:ft_cifar}
\end{figure}

In Figure \ref{fig:ft_cifar}, we compare the effect of the above feature
transformations on models trained with high and low SGD noise. The results again
confirm both hypotheses.
Adding a small (6px) black-and-white patch 
to test images increased $P(\text{``dog''})$ by 14\% (9\%) for models trained with low (high) SGD noise.
Similarly, applying a small (8px) rectangular-shape patch 
increased $P(\text{``truck''})$ by 7\% (2\%) for models trained with low (high) SGD noise.
Increasing the intensity (i.e., patch size) of the transformations again widens the gap in sensitivity between the two model classes.

\paragraph{Connections to prior work.} 
This case study shows how reducing the scale of SGD noise can increase reliance on certain low-level features (e.g., rectangular shape $\rightarrow$ trucks). %
While prior works show that lower SGD noise worsens aggregate model performance~\cite{keskar2017large,wen2019empirical}, our methodology identifies \emph{specific} features that are amplified due to low SGD noise. 
Furthermore, the simplistic nature of the identified features corroborate the theoretical explanation put forth in~\citet{li2019towards}: learning rate scale determines the extent to which models memorize patterns that are easy-to-fit but hard-to-generalize. %
More broadly, our framework motivates a closer look at how features amplified via low SGD noise alter aggregate model performance.

\section{Discussion}
\label{sec:more_discussion}
In this section, we discuss some ways of extending and evaluating \textsc{ModelDiff}.

\paragraph{Adapting our framework for aggregate comparison.}
Our goal in this paper %
has been fine-grained feature-based comparisons between learning algorithms. %
This goal is different from the typical one studied in the
{\em model comparison} literature~\cite{morcos2018importance,raghu2017svcca,bansal2021revisiting,ding2021grounding,morcos2018insights},
where the objective is to quantify the similarity between two (fixed) models in aggregate.
In Appendix \ref{app:cosinesim}, we show how to use datamodels for this alternate objective.\footnote{Note, however, that by using datamodels we are comparing the two model classes rather than two fixed models.}
In short, we can use the cosine similarity between the two datamodels of an example $x$ to measure how similarly training data impacts the two model classes' predictions at $x$.
Aggregating over all $x$ then gives a distribution (which we can also summarize with a score) that quantifies the overall similarity of the two learning algorithms. We find that such a metric yields qualitatively accurate comparisons.

\paragraph{Alternative methods for inferring features from distinguishing subpopulations.}
Our case studies in~\Cref{sec:examples} show that manual inspection of distinguishing subpopulations suffices to infer informative distinguishing transformations.
More generally, we can supplement manual inspection with dataset-specific annotations~\cite{singla2021salient} or domain-specific tools such as vision-language models.
In~\Cref{app:clip}, we illustrate how using \textsc{Clip}~\cite{radford2021learning} can streamline feature extraction. %
Specifically, we leverage \textsc{Clip}'s shared vision-language embedding space to identify candidate distinguishing features by generating captions (or, descriptions) that best describe distinguishing subpopulations.
For problems in other domains, one could employ other domain-specific tools to assist with feature inference.

\paragraph{On prediction-level comparisons between two learning algorithms.}
\textsc{ModelDiff} extracts distinguishing subpopulations by analyzing how training data influences test predictions.
One might ask whether simply analyzing differences in predictions suffice to extract similar subpopulations.
To investigate this, in~\Cref{app:disagree} we look at whether restricting our analysis to examples on which the predictions of the two learning algorithms agree affects the distinguishing directions that we identified in~\Cref{sec:examples}. Note that an analysis only based on disagreement in predictions trivially cannot identify any differences in this case.
Our results indicate that \textsc{ModelDiff} outputs similar distinguishing directions even in this case, suggesting that analyzing the role of training data allows \textsc{ModelDiff} to extract finer-grained differences beyond just predictions.

\paragraph{Computational cost of datamodel estimation.}
The computational cost of \textsc{ModelDiff} scales with the number of models trained to compute datamodels in the first step of our method.
In~\Cref{app:samp_size}, we find that %
we can recover similar datamodels (and as a result, similar distinguishing subpopulations) even with $10\times$ fewer samples (i.e., number of models trained to estimate datamodels).
More generally, the ``black-box'' usage of datamodels in our framework implies that any improvements to the sample efficiency of datamodel estimation would directly make \textsc{ModelDiff} faster as well.

\section{Related work}
\label{sec:discussion}
Our case studies in~\Cref{sec:examples} shed light on the implicit task-specific biases of various learning algorithms.  
We refer the reader to the ``Connections to prior work'' headings within each
respective case study for a discussion of these biases. 
Here, we instead focus on placing our work into context with existing approaches
to model and algorithm comparison in machine learning.

In particular, we compare and contrast our approach to algorithm comparison 
with approaches to the related problem of {\em model comparison}, 
where one tries to characterize the difference between two (usually fixed)
machine learning models. 
For simple models (e.g., sparse linear),
we can directly compare models in parameter space.
However, distances between more complex models such as deep neural networks are
much less meaningful.
To this end, a long line of work has sought to design methods for characterizing 
the differences between models:

\medskip
\noindent\textbf{Representation-based comparison.\ \ }
A popular approach (particularly in the context of deep learning) is to compare
models using their internal {\em representations}.
Since the coordinates of these representations do not have a consistent interpretation,
representation-based model comparison typically studies the degree to which 
different models' representations can be aligned.
Methods based this approach include canonical correlation analysis (CCA) and
variants \cite{raghu2017svcca,morcos2018insights,cui2022deconfounded}, centered
kernel alignment (CKA) \cite{kornblith2019similarity}, graph-based
methods \cite{li2015convergent,chen2021revisit}, and model stitching
\cite{csiszarik2021similarity, bansal2021revisiting}.
Prior works have used these methods to compare
wide and deep neural networks \cite{nguyen2021wide};
vision transformers and convolutional networks \cite{raghu2021vision}; 
pre-trained and trained-from-scratch models \cite{neyshabur2020being}; 
and different language models \cite{wu2020similarity}. 
Though they are often useful, prior work shows that representation-based
similarity measures are not always reliable for testing functional
differences in models~\cite{ding2021grounding}.
Our approach to algorithm comparison differs from these methods in both
objective and implementation:
\begin{itemize}
\item {\em Learning algorithms rather than fixed models}: Rather than focusing on a single fixed model, 
our goal in this paper is to compare the classes of models that result from
a given learning algorithm. 
In particular, we aim to find only differences that arise from algorithmic
design choices, and not those that arise from the (sometimes significant)
variability in training across random seeds \cite{zhong2021larger}.  
\item {\em Feature-based rather than similarity-based}: 
Methods such as CCA and CKA focus on outputting a single score
that reflects the overall similarity between two models. 
On the other hand, the goal of our framework is to find
fine-grained differences in model behavior. 
Still, in \Cref{app:cosinesim} we show that we can also use our
method for more global comparisons, for instance by computing the average cosine
similarity of the datamodel vectors.
\item {\em Model-agnostic}: Our framework is agnostic to type of model used
and thus allows one to easily compare models across learning algorithms---our method 
extends even to learning algorithms that do not have explicit representations 
(e.g., decision trees and kernel methods).
\end{itemize}

\paragraph{Example-level comparisons.}
An alternative method for comparing models is to compare their predictions
directly.
For example, \citet{zhong2021larger} compare predictions of small and large
language models (on a per-example level) to find that larger models are not
uniformly better across example.
Similarly, \citet{mania2019model} study the {\em agreement} between models,
i.e., how often they output the same prediction on a per-example level.
In another vein, 
\citet{meding2022trivial} show that after removing impossible or trivial
examples from test sets, different models exhibit more variations in their
predictions.
Our framework also studies instance-level predictions, but ultimately connects 
the results back to human-interpretable distinguishing features.

\paragraph{Comparing feature attributions.}
Finally, another line of work compares models in terms of how they use 
features at test time.
In the presence of a known set of features one can compute feature importances
(e.g., using SHAP \citep{lundberg2017unified}) and compare them across models
\cite{wang2022learning}. 
In cases where we do not have access to high-level
features, we can use instance-level explanation methods such as saliency maps to
highlight different parts of the input, but these methods generally do not help
at distinguishing models \cite{denain2022auditing}.
Furthermore, multiple evaluation metrics~\cite{adebayo2018sanity,hooker2018benchmark,shah2021input} indicate that common instance-specific feature attribution methods can fail at accurately highlighting features learned by the model.

\section{Conclusion}
\label{sec:conclusion}
We introduce a general framework for fine-grained comparison of any two learning algorithms.
Specifically, our framework compares models trained using two different algorithms in terms of how the models \emph{rely on training data} to make predictions.
Through three case studies, we showcase the utility of our framework in pinpointing how three aspects of the standard training pipeline---data augmentation, pre-training, optimization hyperparameters---can shape model behavior. 

\section{Acknowledgements}
\label{sec:acknowledgements}
Work supported in part by the NSF grants CCF-1553428 and CNS-1815221, and Open Philanthropy. This material is based upon work supported by the Defense Advanced Research Projects Agency (DARPA) under Contract No. HR001120C0015.

Research was sponsored by the United States Air Force Research Laboratory and the United States Air Force Artificial Intelligence Accelerator and was accomplished under Cooperative Agreement Number FA8750-19-2-1000. The views and conclusions contained in this document are those of the authors and should not be interpreted as representing the official policies, either expressed or implied, of the United States Air Force or the U.S. Government. The U.S. Government is authorized to reproduce and distribute reprints for Government purposes notwithstanding any copyright notation herein.

\clearpage
\printbibliography

\clearpage
\appendix
\addcontentsline{toc}{section}{Appendix} %
\renewcommand\ptctitle{Appendices}
\part{}
\parttoc
\clearpage

\section{Algorithm analysis}
\label{app:analysis}

Here, we give a more formal version of \Cref{thm:informal}.
First, we need to define the sense in which a given learning algorithm is sensitive to up/down-weighting the training set according to a training direction $u \in \mathbb{R}^{|S|}$.

\paragraph{Sensitivity along a training direction.}
We consider the following probabilistic notion of sensitivity: we measure how $f(x,S')$ varies in expectation, if relative to some base distribution $\mathcal{D}_0$ over $S'$, we up/down weight the probability of each training example by its weight $u_i$.

More specifically, for $\mathcal{D}_0$, we consider sampling $S'$ uniformly, i.e., by choosing each element of the full training set $S$ with probability 1/2. Then, we consider the perturbed distribution $\mathcal{D}_u$ where $x_i$ is sampled with probability $(1+u_i)/2$.
Finally, we define the {\em sensitivity} of $f(x,\cdot)$ along $u$ (or $u$-{\em sensitivity}) as:
\[ f(x,\cdot) |_u = (\mathbb{E}_{S' \sim \mathcal{D}_u} f(x,S') - \mathbb{E}_{S' \sim \mathcal{D}_0} f(x,S'))^2 \]
where we take the square as we are interested in the magnitude.

We now give the formal result connecting the explained variance gap in datamodels (which we empirically observed in our case studies in Section~\ref{sec:examples}) to the above definition of senstivity.

\begin{theorem}[Formal version of \Cref{thm:informal}] Consider two learning algorithms $\mathcal{A}_1$ and $\mathcal{A}_2$ applied to a training set $S$ and evaluated on a test set $T$. Assume that their datamodels $\theta^{(1)}_x$ and $\theta^{(2)}_x$ perfectly approximate the respective model outputs $f^{(1)}(x,\cdot)$ and $f^{(2)}(x,\cdot)$.
Then, for a given training direction $u \in \mathbb{R}^{|S|}$, the explained variance gap
    \[
        \Delta(\bm{u}) = \|\Theta^{(1)} \bm{u}\|^2 - \|\Theta^{(2)} \bm{u}\|^2
    \]
is equal to the following quantity:
\[
    \sum\limits_{x \in T}  f^{(1)}(x,\cdot) |_u - \sum\limits_{x \in T} f^{(2)}(x,\cdot) |_u
\]
which is the difference in $u$-sensitivity of model outputs of algorithms $\mathcal{A}_1$ and $\mathcal{A}_2$.
\end{theorem}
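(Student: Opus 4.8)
The plan is to use the linear datamodel assumption to collapse both sides of the claimed identity onto the single inner product $\theta_x \cdot u$, after which the result follows from a short expectation computation. Concretely, under the assumption that $f^{(i)}(x, S') = \theta_x^{(i)} \cdot \bm{1}_{S'}$ holds with equality, the expected model output under \emph{any} product distribution over subsets $S'$ depends only on the vector of marginal inclusion probabilities $p = (\Pr[x_j \in S'])_j$: linearity of expectation gives $\mathbb{E}[f^{(i)}(x, S')] = \theta_x^{(i)} \cdot \mathbb{E}[\bm{1}_{S'}] = \theta_x^{(i)} \cdot p$. This is the conceptual crux, since it removes all dependence on higher moments of the subset distribution and makes the expected output an affine functional of $u$.

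First I would instantiate this for the two distributions in the definition of sensitivity. Under the base distribution $\mathcal{D}_0$ each coordinate is included with probability $1/2$, so its marginal vector is $\tfrac12 \bm{1}$; under $\mathcal{D}_u$ coordinate $j$ is included with probability $(1 + u_j)/2$, so its marginal vector is $\tfrac12(\bm{1} + u)$. Subtracting the two expected outputs, the common $\tfrac12 \bm{1}$ terms cancel and I am left with $\mathbb{E}_{\mathcal{D}_u} f^{(i)}(x, S') - \mathbb{E}_{\mathcal{D}_0} f^{(i)}(x, S') = \tfrac12\, \theta_x^{(i)} \cdot u$. Squaring, the $u$-sensitivity of $f^{(i)}(x, \cdot)$ equals $\tfrac14 (\theta_x^{(i)} \cdot u)^2$.

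The final step is to sum over the test set and recognize a Euclidean norm. Since the $x$-th row of $\Theta^{(i)}$ is $\theta_x^{(i)}$, we have $(\Theta^{(i)} u)_x = \theta_x^{(i)} \cdot u$, and therefore $\sum_{x \in T} f^{(i)}(x, \cdot)|_u = \tfrac14 \sum_{x \in T} (\theta_x^{(i)} \cdot u)^2 = \tfrac14 \|\Theta^{(i)} u\|^2$. Taking the difference between $i = 1$ and $i = 2$ then reproduces $\Delta(u)$, establishing the equality between the explained-variance gap and the difference in $u$-sensitivities.

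The computation is routine linear algebra, so I do not expect a genuine obstacle; the one point requiring care is bookkeeping of the multiplicative constant introduced by the $\tfrac12$ in the marginal probabilities (here a global factor relating the sensitivity sum to $\|\Theta^{(i)} u\|^2$), which one can fold into the definition of either the sensitivity or $\Delta$ so that the two sides match exactly. Everything else is forced once the linear datamodel assumption reduces each expectation to a function of the marginal inclusion probabilities alone.
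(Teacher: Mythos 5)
Your proposal follows the same route as the paper's own (very terse) proof: use the linearity assumption to reduce each expectation to an inner product with the vector of marginal inclusion probabilities, identify the $u$-sensitivity with a squared inner product $(\theta_x^{(i)}\cdot u)^2$ up to a constant, and sum over $T$ to recognize $\lVert\Theta^{(i)}u\rVert^2$. You are in fact more careful than the paper: with the stated definitions ($\mathcal{D}_0$ with marginals $\tfrac12\bm{1}$ and $\mathcal{D}_u$ with marginals $\tfrac12(\bm{1}+u)$), the sensitivity is $\tfrac14(\theta_x^{(i)}\cdot u)^2$, not $(\theta_x^{(i)}\cdot u)^2$ as the paper's proof asserts, so the claimed identity holds only up to the global factor of $\tfrac14$ that you correctly flag and propose to absorb into the definition of sensitivity or of $\Delta$.
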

\begin{proof}
    With our linear assumption and definition of sensitivity, the proof is almost immediate.
    First, under the linearity assumption, the sensitivity $f(x_i,\cdot)|_u$ just reduces
    to $(\theta_i \cdot u)^2$.
    It follows that the total sensitivity, $\sum_i f(x_i,\cdot)|_u$ is just $\sum_i (\theta_i \cdot u)^2$, from which the claim follows.

\end{proof}

\clearpage

\section{Experimental Setup}
\label{app:exp_setup}
In this section, we outline the experimental setup---datasets, models, training algorithms, hyperparameters, and datmodels---used for our case studies in~\Cref{sec:examples}.
 
\subsection{Datasets}
\label{app:datasets}

\paragraph{Living17.} 
The Living17 dataset~\cite{santurkar2021breeds} is an ImageNet-derived dataset, where the task is to classify images belonging to 17 types of living organisms (e.g., salamander, bear, fox). 
Each Living17 class corresponds to an ImageNet superclass (i.e., a set of ImageNet classes aggregated using WordNet~\cite{miller1995wordnet}). 
\citet{santurkar2021breeds} introduce Living17 as one of four benchmark to evaluate model robustness to realistic subpopulation shifts.  
In our case study, we study the effect of data augmentation using  a variant of this dataset, wherein the training and test images belong to the same set of subpopulations (i.e., no subpopulation shift).

\paragraph{Waterbirds.}
The Waterbirds dataset~\cite{sagawa2020distributionally} consists of bird images taken from the CUB dataset \cite{wah2011caltech} and pasted on backgrounds from the Places dataset~\cite{zhou2017places}. 
The task here is to classify ``waterbirds'' and ``landbirds'' in the presence of spurious correlated ``land'' and ``water'' backgrounds in the training data.
\citet{sagawa2020distributionally} introduce Waterbirds as a benchmark to evaluate models under subpopulation shifts induced by spurious correlations. 
In our case study, we compare how models trained from scratch on Waterbirds data differ from ImageNet-pretrained models that are fine-tuned on Waterbirds data.

\paragraph{CIFAR-10.} We consider the standard CIFAR-10~\citep{krizhevsky2009learning} image classification dataset in order to study the effect of two SGD hyperparameters: learning rate and batch size.

Summary statistics of the datasets described above are outlined in~\Cref{table:dataset}.

\begin{table}[h]
\centering
\caption{Summary statistics of datasets}
\label{table:dataset}
\begin{tabular}{lrrrrrr}
\toprule
\textbf{Dataset} & Classes & Size (Train/Test) & Input Dimensions \\
\midrule
Living17 & 17 & 88,400/3,400 & $3 \times 224 \times 224$ \\
Waterbirds & 2 & 4,795/5,794 & $3 \times 224 \times 224$ \\
CIFAR-10 &  10 & 50,000/10,000 & $3 \times 32\times 32$ \\
\bottomrule
\end{tabular}
\end{table}

\subsection{Models, learning algorithms, and hyperparameters}
\label{app:hyperparams}

\paragraph{Living17.} 
We use the standard ResNet18 architecture~\cite{he2015deep} from the \texttt{torchvision} library. 
We train models for $25$ epochs using SGD with the following configuration: initial learning rate $0.6$, batch size $1024$, cyclic learning rate schedule (with peak at epoch $12$), momentum $0.9$, weight decay $0.0005$, and label smoothing (with smoothing hyperparameter $0.1$). 
To study the effect of data augmentation, we train models with the following algorithms:
\begin{itemize}
	\item \textbf{Algorithm $\mathcal{A}_1$ (with data augmentation)}: Models are trained with standard data augmentation: random resized cropping (with default \texttt{torchvision} hyperparamters) and random horizontal flips. On average, models attain $89.2\%$ average test accuracy. 
	\item \textbf{Algorithm $\mathcal{A}_2$ (without data augmentation)}: Models are trained without data augmentation.  On average, models attain $81.9\%$ average test accuracy. 
\end{itemize}

\paragraph{Waterbirds.}
We use the standard ResNet50 architecture from the \texttt{torchvision} library. 
We train models using SGD with momentum $0.9$ and weight decay $0.0001$ for a maximum of $50$ epochs (and stop early if the training loss drops below $0.01$).
For model selection, we choose the model checkpoint that has the maximum average accuracy on the validation dataset.
As in~\citet{sagawa2020distributionally}, we do not use data augmentation. 
In our case study on pre-training, we consider ImageNet pre-trained models from \texttt{torchvision}. 
We consider models trained using the following algorithms:
\begin{itemize}
	\item \textbf{Algorithm $\mathcal{A}_1$ (ImageNet pre-training)}: Models pre-trained on ImageNet are fully fine-tuned on Waterbirds data with a fixed SGD learning rate $0.005$ and batch size $64$. On average, models attain $89.1\%$ (non-adjusted) average test accuracy and $63.9\%$ worst-group test accuracy.
	\item \textbf{Algorithm $\mathcal{A}_2$ (Training from scratch)}: Models are trained from scratch (i.e., random initialization) on Waterbirds data with SGD: initial learning rate $0.01$, batch size $64$, and a linear learning rate schedule ($0.2\times$ every $15$ epochs). On average, models attain $63.6\%$ average test accuracy and $5.7\%$ worst-group test accuracy. 
\end{itemize}

\paragraph{CIFAR-10.} 
We use the ResNet9 architecture from Kakao Brain\footnote{\url{https://github.com/wbaek/torchskeleton/blob/master/bin/dawnbench/cifar10.py}}, which is optimized for fast training.
We train models using SGD with momentum $0.9$ and weight decay $0.0005$ for a maximum of $100$ epochs (and stop early if the training loss drops below $0.01$).
We augment training data with a standard data augmentation scheme: random resized cropping with 4px padding and random horizontal flips.
To study the effect of SGD noise in our case study, we vary learning rate and batch size. 
Specifically, we compare models trained with the  following algorithms: 
\begin{itemize}
	\item \textbf{Algorithm $\mathcal{A}_1$ (high SGD noise)}: Models are trained with SGD using a large initial learning rate ($0.1$), small batch size ($256$), and a linear learning rate schedule ($0.5\times$ every $20$ epochs). On average, models attain $93.3\%$ test accuracy.
	\item \textbf{Algorithm $\mathcal{A}_2$ (low SGD noise)}: Models are trained with SGD using a small fixed learning rate ($0.02$) and large batch size ($1024$). On average, models attain $89.5\%$ test accuracy.
\end{itemize}

\subsection{Datamodels}
\label{app:datamodels}

Now, we provide additional details on datamodels which, we recall, are computed in the first step of our algorithm comparison framework.

\paragraph{Estimating linear datamodels.} 
Recall that the datamodel vector for example $x_j$, $\theta_j^{(i)} \in \mathbb{R}^{|S|}$, encodes the importance of individual training examples $S$ to model's loss at example $x_j$ when trained with algorithm $\mathcal{A}_i$.
Concretely, given test example $x_j$ and training set $S = \{x_1,\ldots,x_d\}$, the datamodel $\theta_j$ is a sparse linear model (or surrogate function) trained on the following regression task: For a training subset $S' \subset S$, can we predict the correct-class margin $f_{\mathcal{A}}(x_j; S')$ of a model trained on $S'$ with algorithm $\mathcal{A}$?  
This task can be naturally formulated as the following supervised learning problem: Given a  training set $\{(S_i, f_{\mathcal{A}}(x; S_i))\}^m_{i=1}$ of size $m$, the datamodel $\theta_j$ (for example $x_j$) is the solution to the following problem:
\begin{align}
    \label{eq:testset_datamodels}
    \theta_j = \min_{w \in \mathbb{R}^{|S|}}
    \frac{1}{m} \sum_{i=1}^m \lr{w^\top \mask{S_i} - \modeleval{x_j}{S_i}}^2 + \lambda \|w\|_1,
\end{align}
where $\mask{S_i}$ is a boolean vector that indicates whether examples in the training dataset $x \in S$ belong to the training subset $S_i$. 
Note that each datamodel training point $(S_i, f_{\mathcal{A}}(x_j, S_i))$ is obtained by (a) training a model $f$ (e.g., ResNet9) on a subset of data $S_i$ (e.g., randomly subsampled CIFAR data) and (b) evaluating the trained model's output on example $x_j$. 
\Cref{alg:datamodel_pseudo} provides pseudocode for estimating datamodels:

\begin{algorithm}
    \caption{An outline of the datamodeling framework introduced in~\cite{ilyas2022datamodels}.}
    \label{alg:datamodel_pseudo}
    \begin{algorithmic}[1]
        \Procedure{EstimateDatamodel}{target example $x$, train set $S$ of size $d$, subsampling frac. $\alpha \in (0,1)$}
        \State $T \gets []$ \Comment{Initialize {\em datamodel training set}}
        \For{$i \in \{1,\ldots,m\}$}
        \State Sample a subset $S_i \subset S$ from $\mathcal{D}_S$ where $|S_i| = \alpha \cdot d$
        \State $y_i \gets \modeleval{x}{S_i}$
        \Comment{Train a model on $S_i$ using $\mathcal{A}$, evaluate on $x$}
        \State Define $\mask{S_i} \in \{0, 1\}^d$ as
        $(\mask{S_i})_j = 1$ if $x_j \in S_i$ else 0
        \State $T \gets T + [(\mask{S_i}, y_i)]$ \Comment{Update datamodel
        training set}
        \EndFor
        \State $\theta \gets$ \Call{RunRegression}{T} \Comment{Predict the
        $y_i$ from the $\mask{S_i}$ vectors}
        \State\Return $\theta$ \Comment{Result: a weight vector $\theta \in
        \mathbb{R}^d$}
        \EndProcedure
    \end{algorithmic}
\end{algorithm}

\paragraph{Datamodel estimation hyperparameters.}
Recall that our algorithm comparison framework in~\Cref{sec:approach} involves estimating two sets of datamodels $\{\theta^{(1)}\}$ and $\{\theta^{(2)}\}$ for learning algorithms $\mathcal{A}_1$ and $\mathcal{A}_2$ respectively. 
In our case studies, we estimate two datamodels, $\theta^{(1)}_i$ and $\theta^{(2)}_i$  for every example $x_i$ in the test dataset.
Estimating these datamodels entail three design choices: 
\begin{itemize}
	\item {\bf Sampling scheme for train subsets:} Like in~\citet{ilyas2022datamodels}, we use $\alpha$-random subsets of the training data, where $\alpha$ denotes the subsampling fraction; we set $\alpha=50\%$ as it maximizes sample efficiency (or model reuse) for empirical influence estimation~\cite{feldman2020neural}, which is equivalent to a variant of linear datamodels~\cite{ilyas2022datamodels}.
	\item {\bf Sample size for datamodel estimation:} Recall that a datamodel training set of size $m$ corresponds to training $m$ models (e.g., $m$ ResNet18 models on \textsc{Cifar-10}) on independently sampled train subsets (or masks). We estimate datamodels on \textsc{Living17}, \textsc{Waterbirds}, and \textsc{Cifar-10} using $120k$, $50k$, and $50k$ samples (or models) per learning algorithm respectively; we make a $90-10\%$ train-validation split.
	\item {\bf $\ell_1$ sparsity regularization:} We use cross-validation to select the sparsity regularization parameter $\lambda$. Specifically, for each datamodel, we evaluate the MSE on a validation split to search over $k=50$ logarithmically spaced values for $\lambda$ along the regularization path. As in~\cite{ilyas2022datamodels}, we then re-compute the datamodel on the entire dataset with the optimal $\lambda$ value and all $m$ training examples.
\end{itemize}

\subsection{Feature transformations}
\label{app:feature_trans}

We counterfactually verify distinguishing feature candidates
(inferred from distinguishing subpopulations) by evaluating whether the corresponding transformations change model behavior as hypothesized.
Here, we describe the feature transformations used in~\Cref{sec:examples} in more detail\footnote{The code for these feature transformations is available at \url{https://github.com/MadryLab/modeldiff}.}.

\paragraph{Designing feature transformations.} 
We design feature transformations that modify examples by adding a specific patch or perturbation. 
We vary the intensity of patch-based and perturbation-based transformations via patch size $k$ and perturbation intensity $\delta$ respectively.
Additional details specific to each case study:
\begin{itemize}
	\item \textbf{Pre-training.} We use patch-based transformations in this case. For the yellow color feature, we add a $k \times k$ square yellow patch to the input. For the human face feature, we add a $k \times k$ image of a human face to the input. To avoid occlusion with objects in the image foreground, we add the human face patch to the background. We make a bank of roughly $300$ human faces using ImageNet face annotations~\cite{yang2022study} by (a) cropping out human faces from ImageNet validation examples and (b) manually removing mislabeled, low-resolution, and unclear human face images. 
	\item \textbf{Data augmentation}. We design perturbation-based transformations to verify the identified distinguishing features: spider web and polka dots. In both cases, we $\delta$-perturb each input with a random crop of a fixed grayscale spider web or yellow polka dot pattern.   
	\item \textbf{SGD hyperparameters.} We use patch-based transformations in this case study. For the black-white texture feature, we add a $k$-sized patch that loosely resembles a black-white dog nose. Similarly, for the rectangular shape feature, we add a $k$-sized patch that loosely resembles windows.
\end{itemize}

\paragraph{Evaluating feature transformations.} As shown in~\Cref{sec:examples}, given two learning algorithms $\mathcal{A}_1$ and $\mathcal{A}_2$, we evaluate whether a feature transformation $F$ changes predictions of models trained with $\mathcal{A}_1$ and $\mathcal{A}_2$ as hypothesized. To evaluate the counterfactual effect of transformation $F$ on model $M$, we evaluate the extent to which applying $F$ to input examples $x$ increases the confidence of models in a particular class $y$. In our experiments, we estimate this counterfactual effect by averaging over all test examples and over $500$ models trained with each learning algorithm.

\subsection{Training infrastructure}
\label{app:infra}

\paragraph{Data loading.}
We use \texttt{FFCV}\footnote{Webpage: \url{http://ffcv.io}}~\citep{leclerc2022ffcv}, which removes the data loading bottleneck for smaller models, gives a $3\text{-}4 \times$ improvement in throughput (i.e., number of models a day per GPU).

\paragraph{Datamodels regression.} In addition to \texttt{FFCV}, we use the \texttt{fast-l1} package\footnote{Github repository: \url{https://github.com/MadryLab/fast_l1}}---a SAGA-based GPU solver for $\ell_1$-regularized regression---to parallelise datamodel estimation.

\paragraph{Computing resources.}
\label{app:resources}

We train our models on a cluster of machines, each with 9 NVIDIA A100 or V100 GPUs and 96 CPU cores.
We also use half-precision to increase training speed.

\clearpage

\section{Additional related work}

In the main paper, we focused our discussion of related work to those closely related to the problem of algorithm comparison. Here, we give discuss additional related work on interpretability and debugging of model biases.

\paragraph{Interpretability, explainability, and debugging.}
Our method hinges on the interpretability of the extracted subpopulation.
A long line of prior work propose different interpretability and explainability methods for models.

Local explanation methods include saliency maps \citep{simonyan2013deep,dabkowski2017real,adebayo2018sanity}, surrogate models such as LIME \citep{ribeiro2016}, and Shapley values \citep{lundberg2017unified}.
Our method is similar to per-example based interpretability methods such as influence functions \citep{koh2017understanding} in that our interpretation is based on data; however, our analysis differs from these priors methods in that it looks at entire subpopulations of inputs.

Global interpretability and debugging methods often leverage the rich latent space of neural networks in order to identify meaningful subpopulations or biases more automatically.
Concept activation vectors and its variants \citep{kim2018interpretability,abid2022meaningfully,ghorbani2019towards} help decompose model predictions into a set of concepts. Other recent works \citep{eyuboglu2022domino,jain2022distilling} leverage the recent cross-model representations along with simple models---mixture models and SVMs, respectively---to identify coherent subpopulations or slices.
Other methods \citep{wong2021leveraging,singla2021salient} analyze the neurons of the penultimate layer of (adversarially robust) models to identify spurious features.
Our framework can be viewed as leveraging a different embedding space, that of datamodel representations, to analyze model predictions.

\paragraph{Robustness to specific biases.}
In applying our framework across the three case studies, we identify a number of both known and unknown biases.
A large body of previous work aims at finding and debugging these biases:
Priors works investigate specific biases such as the role of texture \citep{geirhos2019imagenet} or backgrounds \citep{xiao2020noise} by constructing new datasets. \citet{leclerc20213db} automate many of these studies in the context of vision models with a render-based framework.
Crucially, these works rely on having control over data generation and having candidate biases ahead of time.
See the ``Connections to prior work'' within each case study in~\Cref{sec:examples} for additional details.
\clearpage

\section{Additional analysis of distinguishing subpopulations}
\label{app:hooman}

As outlined in~\Cref{sec:approach}, we analyze distinguishing subpopulations to infer (and test) distinguishing feature transformations.
In this section, we present additional analysis in order to substantiate the distinguishing features inferred in each case study.

First, we describe two additional tools that we use to analyze subpopulations surfaced by principal components (PCs) of residual datamodels.
\begin{itemize}
	\item {\bf Class-specific visual inspection.} As shown in~\Cref{sec:examples}, the subpopulation of test examples whose datamodels have maximum projection onto PCs of residual datamodels largely belong to same class; these subpopulation mostly surface images from the same class. So, a simple-yet-effective way to identify \emph{subpopulation-specific} distinguishing feature(s) is to just visually contrast the surfaced subpopulation from a set of randomly sampled examples that belong to the same class.
	\item {\bf Relative influence of training examples.} Given a subset of test examples $S' \subset S$, can we identify a set of training examples $T' \subset T$ that strongly influence predictions on $S'$ when models are trained with algorithm $\mathcal{A}_1$ but not when trained with $\mathcal{A}_2$? Given datamodel representations $\{\smash{\dma_i}\}$ for $\mathcal{A}_1$ and $\{\smash{\dma_i}\}$ for $\mathcal{A}_2$, we apply a two-step (heuristic) approach identify training examples with high influence on $\mathcal{A}_1$ relative to $\mathcal{A}_2$:
		\begin{itemize}
			\item First, given learning algorithm $\mathcal{A}_i$ and test subset $S'$, we estimate the aggregate (positive or negative) influence of training example $x_k$ on subset $S$ by taking the absolute sum over the corresponding datamodel weights:  $\smash{\sum_{j \in S'} |\theta^{(i)}_{jk}|}$.
			\item Then, we take the absolute difference between the aggregate influence estimates of training example $x_k$ using $\smash{\dma}$ and $\smash{\dmb}$. This difference measures the \emph{relative influence} of training example $x_k$ on predictions of test subset $S$ when models are trained with algorithm $\mathcal{A}_1$ instead of algorithm $\mathcal{A}_2$.
		\end{itemize}
		In our analysis, we (a) identify training examples that have top-most relative influence estimates and then (b) visually contrast the subsets of test examples (one for each learning algorithm) that are most influenced by these training examples.
\end{itemize}

\clearpage
\subsection{Case study: Standard data augmentation}
\label{subsec:app_data_aug}
Our case study on \textsc{Living17} data in~\Cref{ssec:data_aug} shows that standard data augmentation can amplify co-occurrence bias (spider web) and texture bias (polka dots).
We further substantiate these findings with relative influence analysis (\Cref{fig:ti_dataaug}) and class-specific visual inspection (\Cref{fig:topkrandom_data_aug}).
\begin{figure}[h]
	\centering
	\begin{subfigure}[t]{0.81\textwidth}
		\centering
		\includegraphics[width=\textwidth]{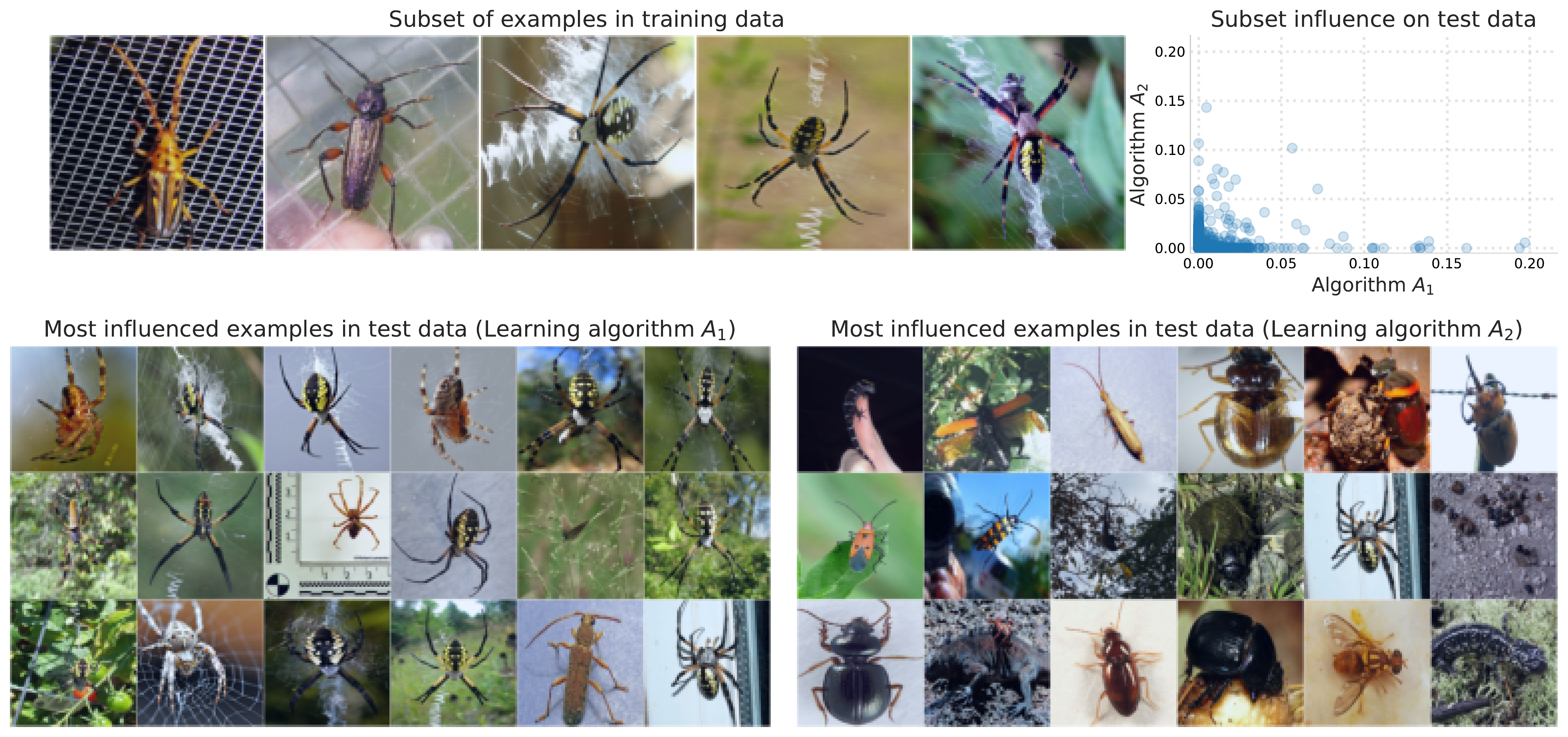}
		\caption{``Spider web'' feature}
	\end{subfigure}
	\begin{subfigure}[t]{0.81\textwidth}
		\includegraphics[width=\textwidth]{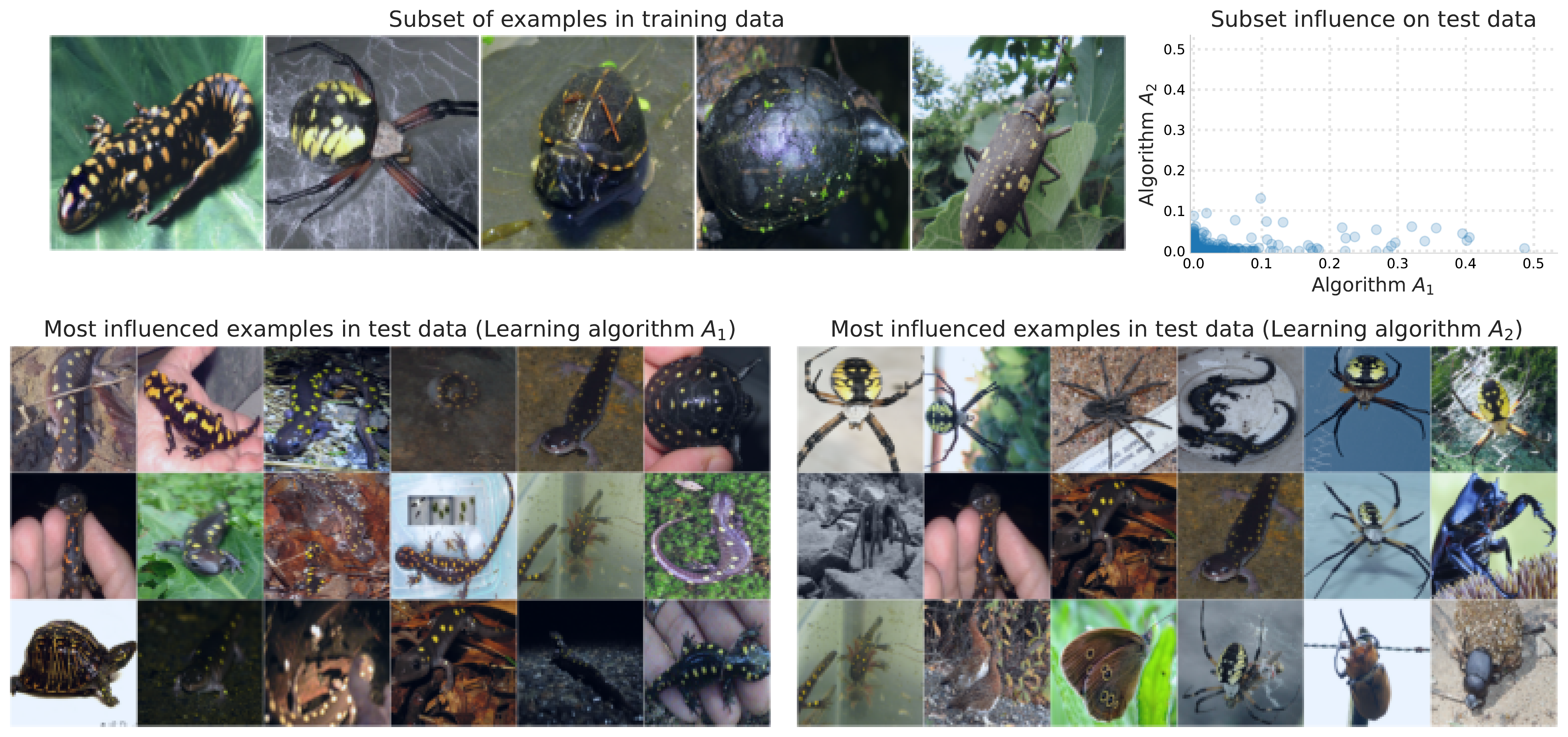}
		\caption{``Polka dots'' feature}
	\end{subfigure}
	\vspace{-6px}
	\caption{{\bf Relative influence of training data on \textsc{Living17} subpopulations}. {\bf Panel (a)}: Training images that contain web-like patterns have high relative influence on the ``spider web'' test subpopulation (see~\Cref{fig:ev_pca-living17}). These images strongly influence model predictions on test images that contain spider webs (in bottom row) only when models are trained with augmentation (algorithm $\mathcal{A}_1$). {\bf Panel (b)}: Training images that contain yellow-black texture have high relative influence on the ``polka dots'' test subpopulation  (see~\Cref{fig:ev_pca-living17}). These images strongly influence model predictions on test images of salamanders with yellow polka dots (in bottom row) {only when} models are trained with augmentation (algorithm $\mathcal{A}_1$).}
	\label{fig:ti_dataaug}
\end{figure}

\clearpage
\subsection{Case study: ImageNet pre-training}
Our case study on \textsc{Waterbirds} data shows that ImageNet pre-training  reduces dependence on the ``yellow color'' feature, but introduces dependence the ``human face'' feature.
We support these findings with relative influence analysis in~\Cref{fig:ti_pretrain} and additional inspection in~\Cref{fig:topkrandom_pretrain}.
\begin{figure}[h]
	\centering
	\begin{subfigure}[t]{0.8\textwidth}
		\centering
		\includegraphics[width=\textwidth]{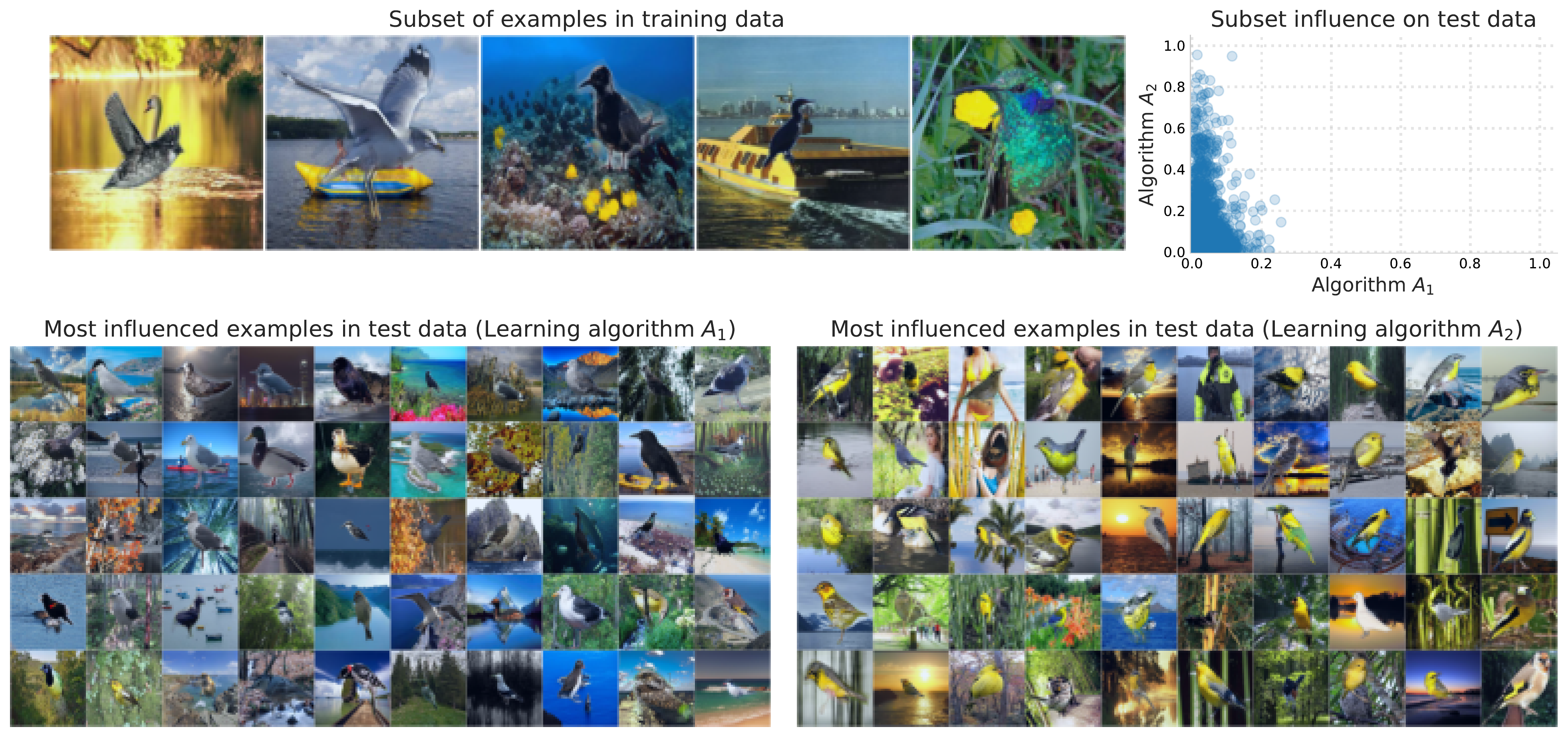}
		\caption{``Yellow color'' feature}
	\end{subfigure}
	\begin{subfigure}[t]{0.8\textwidth}
		\includegraphics[width=\textwidth]{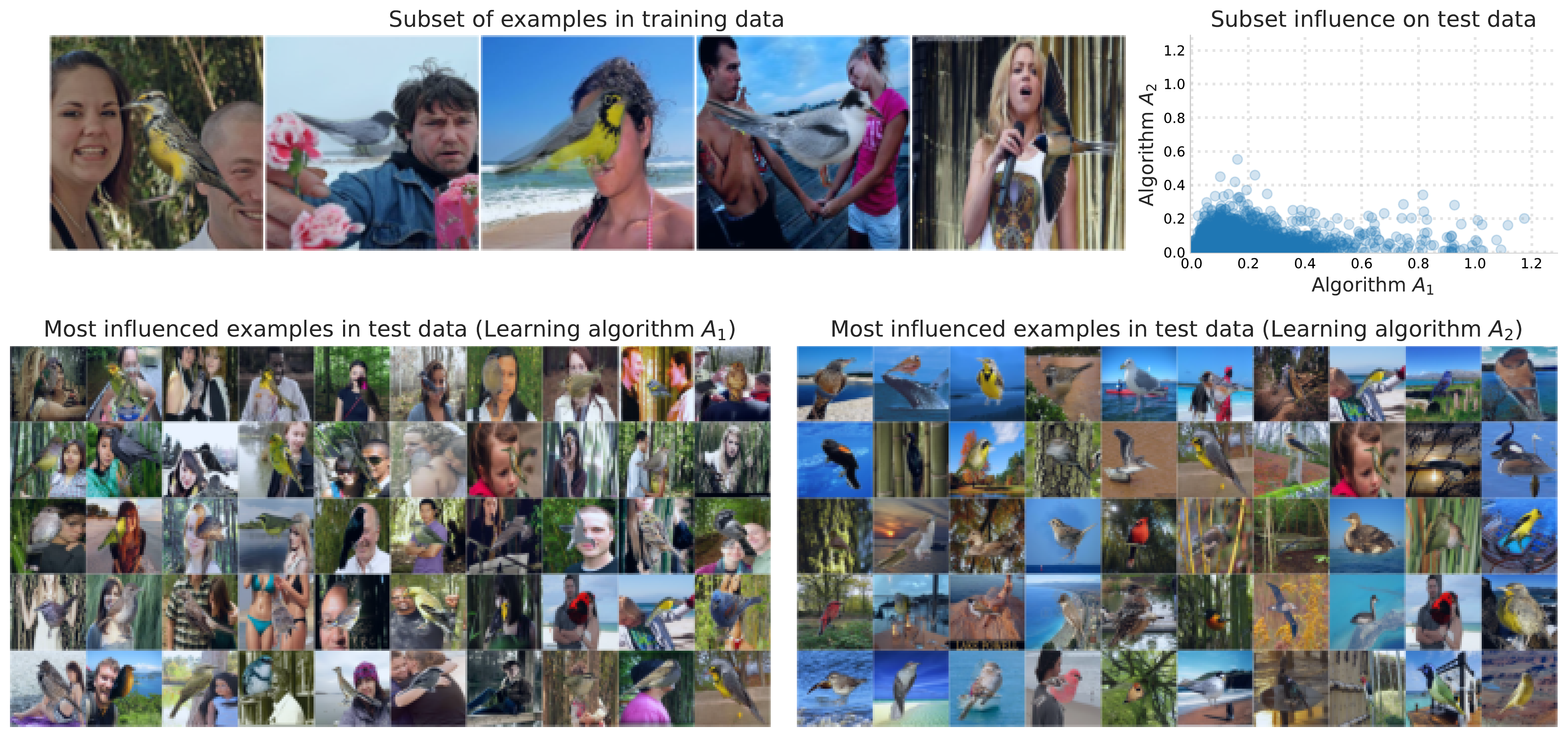}
		\caption{``Human face'' feature}
	\end{subfigure}
	\vspace{-7px}
	\caption{{\bf Relative influence of training data on \textsc{Waterbirds} subpopulations}. {\bf Panel (a)}: Training images with yellow objects in the background have high relative influence on the ``yellow color'' test subpopulation (see~\Cref{fig:ev_pca-waterbirds}). These images strongly influence model predictions on test images that have yellow birds / objects (bottom row) only when models are trained from scratch (algorithm $\mathcal{A}_2$). {\bf Panel (b)}: Training images that contain human faces in the background have high relative influence on the ``human face'' test subpopulation (see~\Cref{fig:ev_pca-waterbirds}). These images strongly influence model predictions on test images (in bottom row) with human face(s) {only when} models are pre-trained on ImageNet (algorithm $\mathcal{A}_1$).}
	\label{fig:ti_pretrain}
\end{figure}

\clearpage
\subsection{Case study: SGD hyperparameters}
We analyze relative influence (\Cref{fig:ti_sgd}), and class-specific subpopulations (\Cref{fig:topkrandom_sgd}) to hone in on two instances of distinguishing features--black-and-white texture and rectangular shape--- in~\textsc{Cifar-10} data that are amplified by low SGD noise.

\begin{figure}[h]
	\centering
	\begin{subfigure}[t]{0.81\textwidth}
		\centering
		\includegraphics[width=\textwidth]{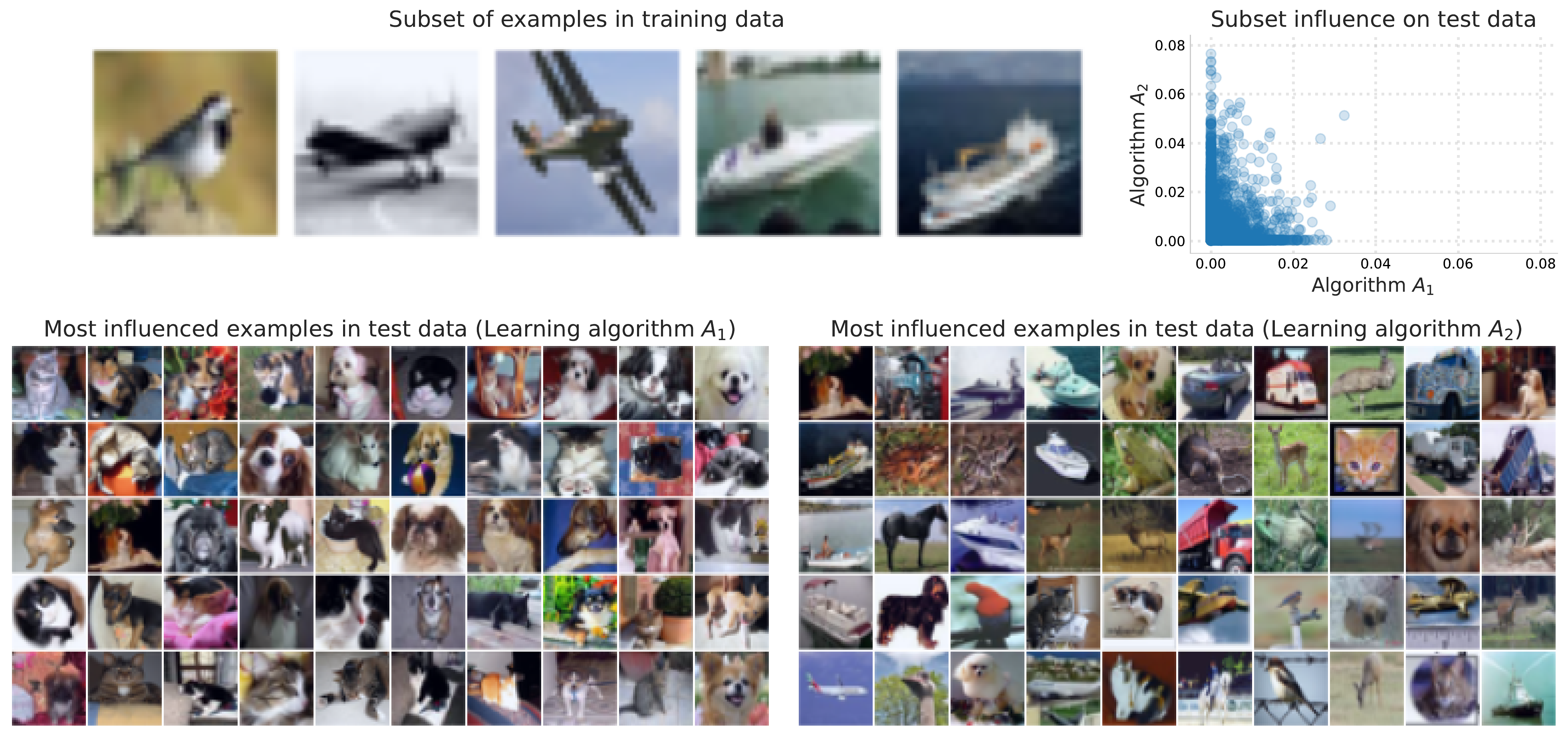}
		\caption{``Black-white texture`` bias}
	\end{subfigure}
	\begin{subfigure}[t]{0.81\textwidth}
		\includegraphics[width=\textwidth]{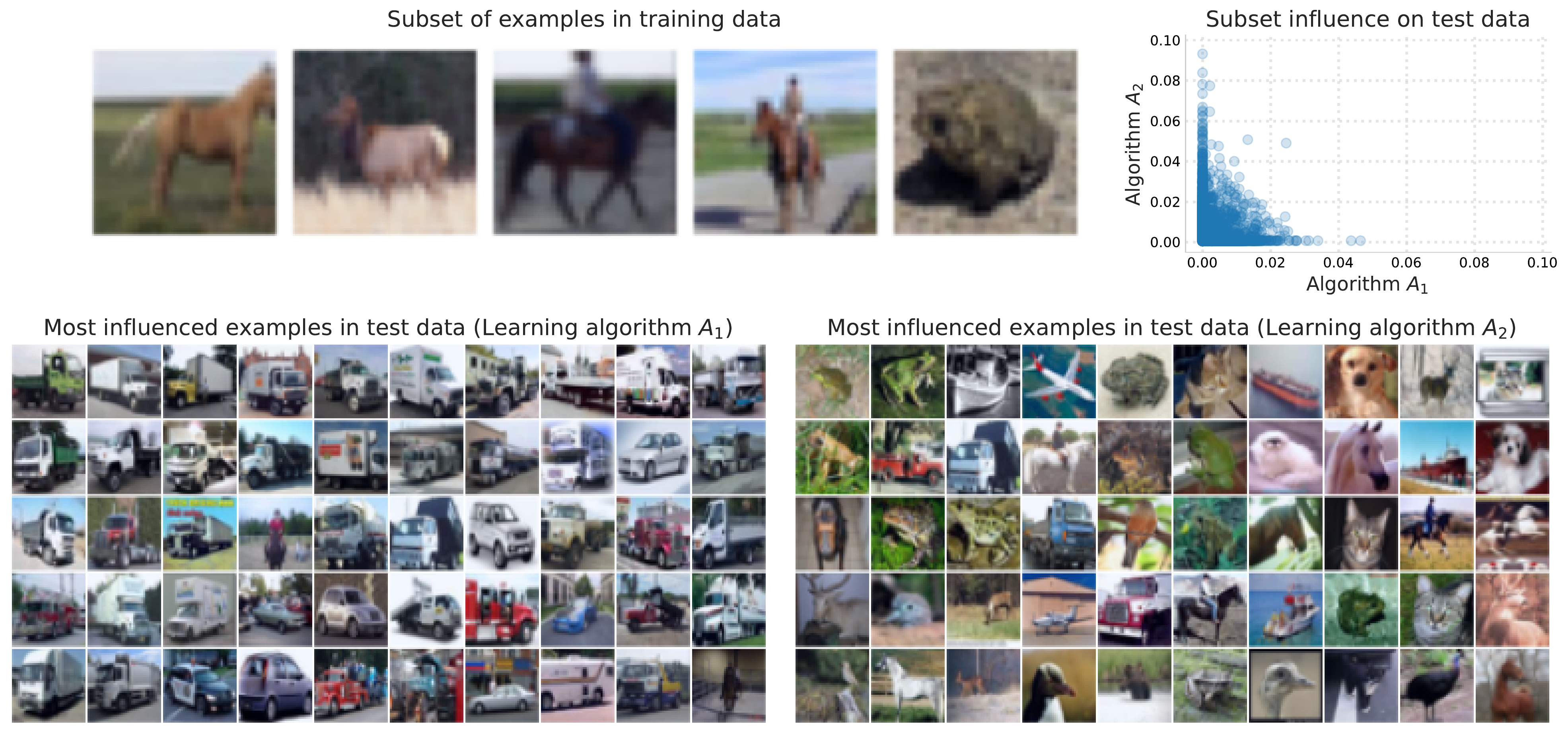}
		\caption{``Rectangular shape`` bias}
	\end{subfigure}
	\vspace{-7px}
	\caption{{\bf Relative influence of training data on \textsc{Cifar-10} subpopulations}. {\bf Panel (a)}: Training images with black-white objects have high relative influence on the ``black-white'' dog subpopulation (see~\Cref{fig:ev_pca-cifar}). These images influence model predictions on test images of black-white dogs (in bottom row) only when models are trained with low SGD noise (alg. $\mathcal{A}_2$). {\bf Panel (b)}: Training images with high-contrast rectangular components in the background have high relative influence on the ``rectangular shape'' truck subpopulation (see~\Cref{fig:ev_pca-cifar}). These images influence model predictions on test images of front-facing trucks with prominent rectangular components (in bottom row) {only when} models are trained with low SGD noise (alg. $\mathcal{A}_2$).}
	\label{fig:ti_sgd}
\end{figure}

\begin{figure}[t]
	\centering
    \includegraphics[width=0.9\textwidth]{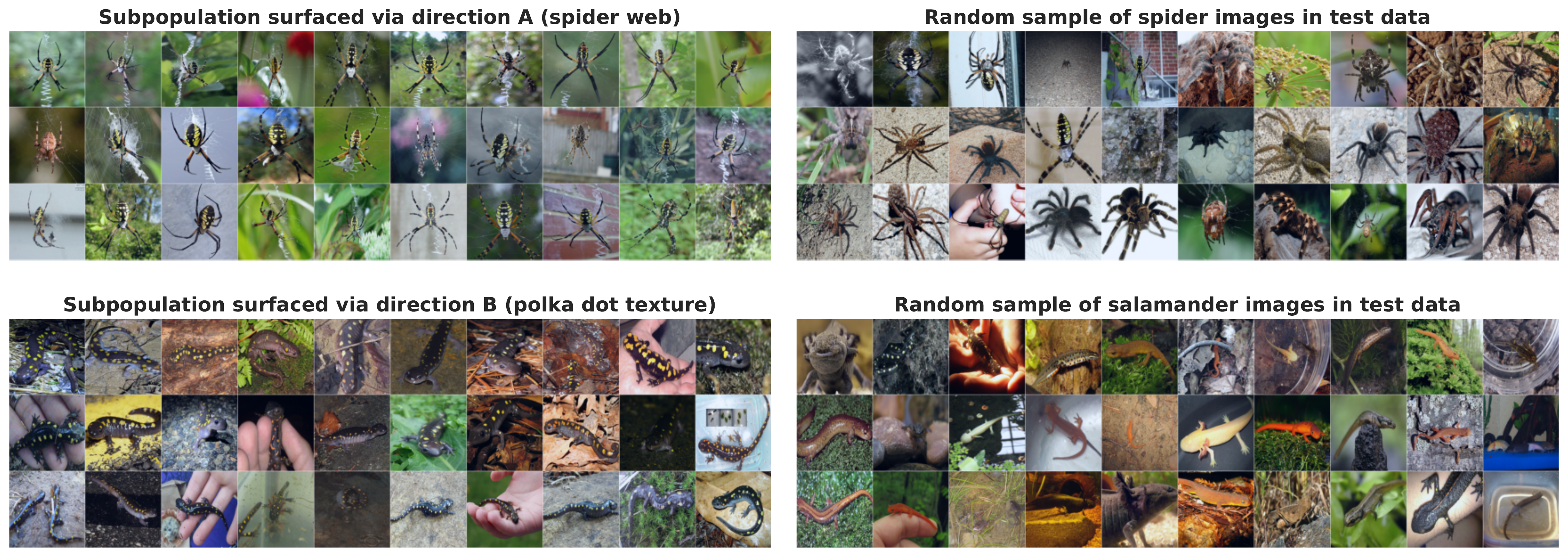}
    \caption{{Class-specific visual inspection of \textsc{Living17} subpopulations.} {\bf (Top)} In contrast to random \textsc{Living17} images of spiders, the ``spider web'' subpopulation surfaces spiders with a prominent spider web in the background. {\bf (Bottom)} Unlike random \textsc{Living17} images of salamanders, the ``polka dots'' subpopulation surfaces salamanders that have a yellow-black polka dot texture.}
    \label{fig:topkrandom_data_aug}
\end{figure}

\begin{figure}[t]
	\centering
    \includegraphics[width=0.9\textwidth]{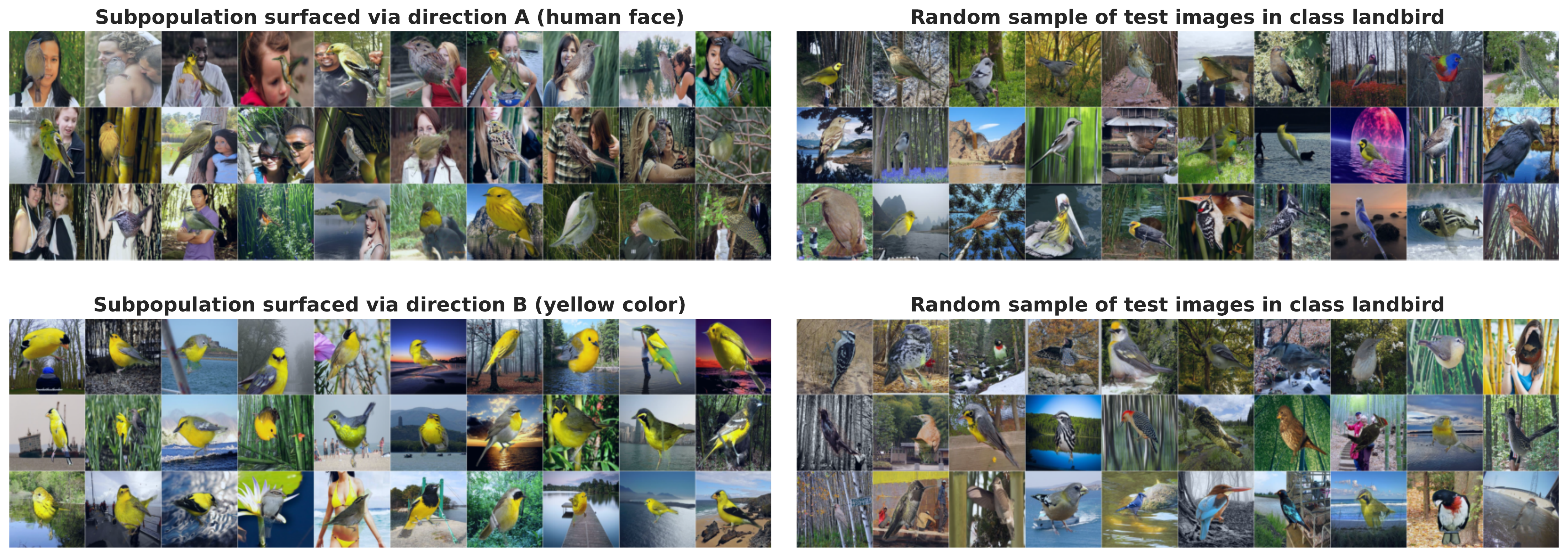}
    \caption{{Class-specific visual inspection of \textsc{Waterbirds} subpopulations.} {\bf (Top)} In contrast to random ``landbird'' images, the ``human face'' subpopulation surfaces landbirds with human face(s) in the background. {\bf (Bottom)} Unlike random ``landbird'' images, the ``yellow color'' subpopulation surfaces images with yellow birds \emph{or} yellow objects in the background.}
    \label{fig:topkrandom_pretrain}
\end{figure}

\begin{figure}[t]
	\centering
    \includegraphics[width=0.9\textwidth]{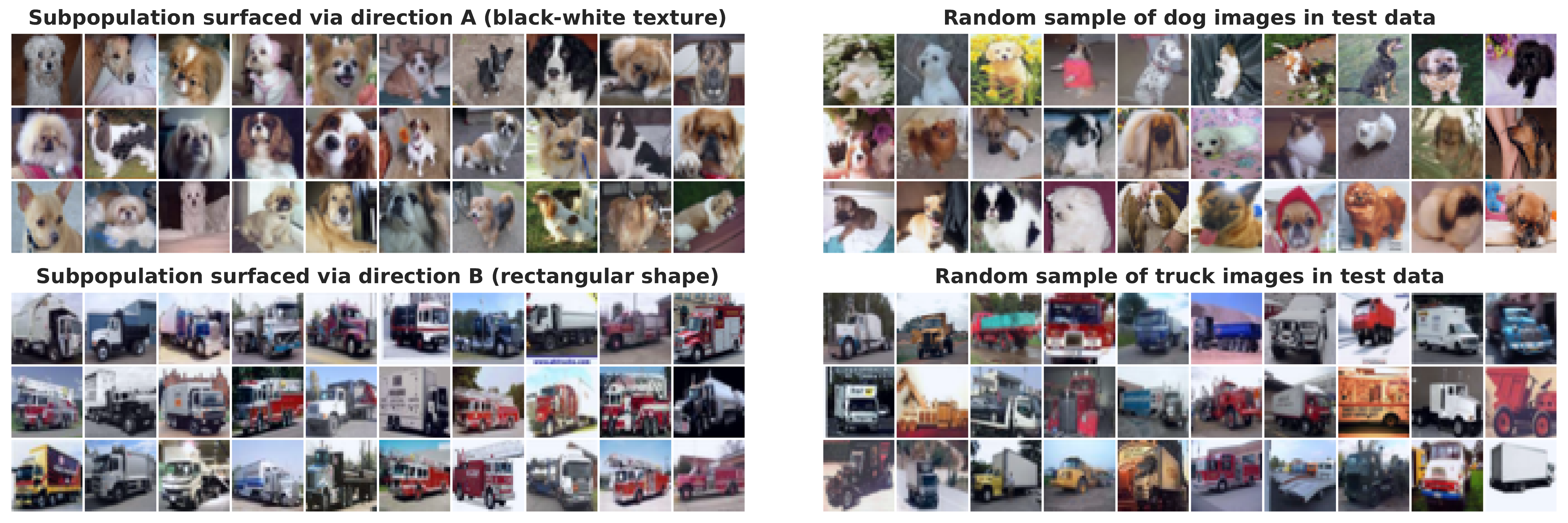}
    \caption{{Class-specific visual inspection of \textsc{Cifar-10} subpopulations.} In contrast to random images of dogs (top) and trucks (bottom), the ``black-white'' and ``rectangular shape'' subpopulations surface images of black-white dogs and front-facing trucks with multiple rectangular components respectively.}
    \label{fig:topkrandom_sgd}
\end{figure}
\clearpage

\section{Additional evaluation of feature transformations}

Distinguishing feature transformations, which we recall from~\Cref{sec:approach}, are functions that, when applied to data points, change the predictions of one model class---but not the other---in a consistent way.
In our case studies, we design distinguishing feature transformations that counterfactually verify features inferred from distinguishing subpopulations.
Our findings in~\Cref{sec:examples} use feature transformations to quantitatively measure the relative effect of the identified features on models trained with different learning algorithms.
In this section, we present additional findings on feature transformations for each case study:

\subsection{Case study: Standard data augmentation}
In~\Cref{ssec:data_aug}, we showed that standard data augmentation---horizontal flips and random crops---amplifies \textsc{Living17} models' reliance on ``spider web'' and ``polka dots'' to predict spiders and salamanders respectively.~\Cref{fig:more-ft:data_aug} verifies our findings over a larger range of perturbation intensity $\delta$ values. We also observe that decreasing the minimum allowable crop size in \texttt{RandomResizedCrop} from $1.0$ (i.e., no random cropping) to $0.08$ (default \texttt{torchvision} hyperparameter) increases models' sensitivity to both feature transformations.

\begin{figure}[h]
	\centering
	\begin{subfigure}[t]{0.9\textwidth}
		\centering
		\includegraphics[width=\textwidth]{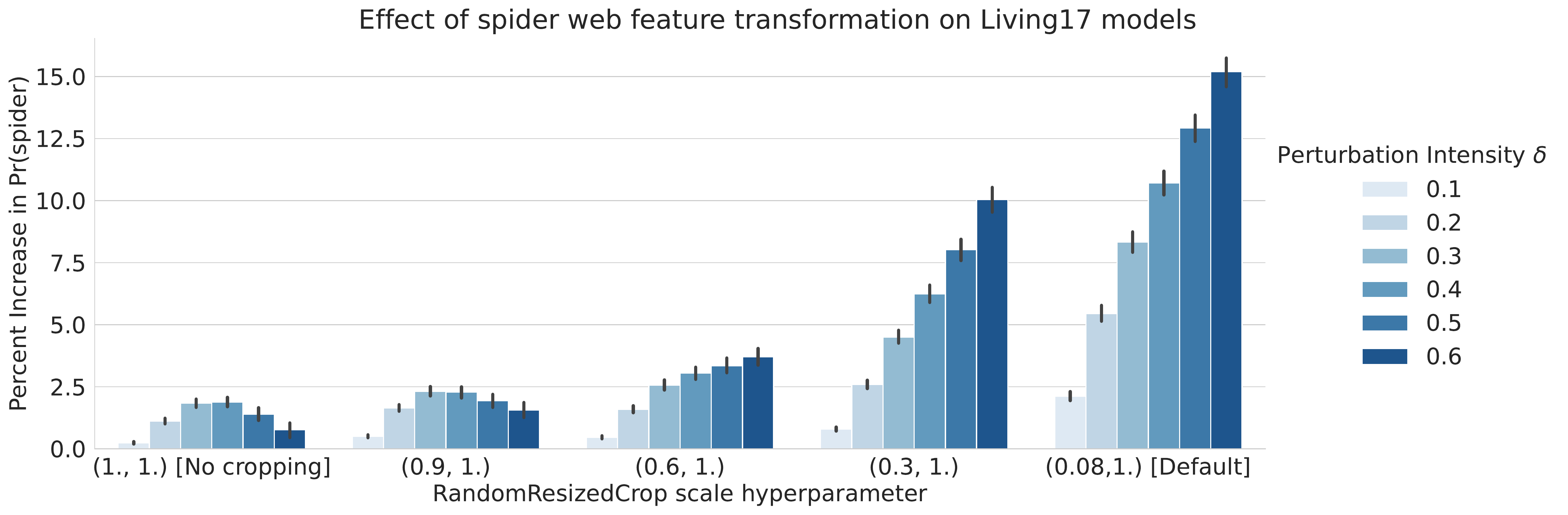}
		\caption{``Spider web'' feature}
	\end{subfigure}
	\begin{subfigure}[t]{0.9\textwidth}
		\includegraphics[width=\textwidth]{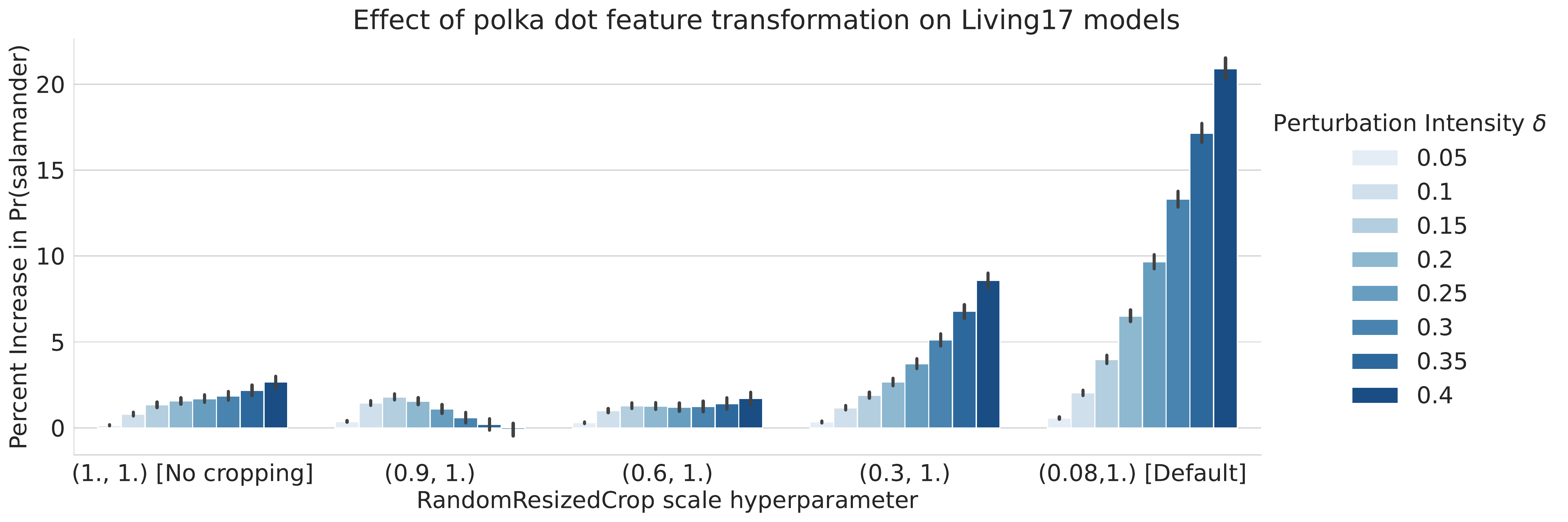}
		\caption{``Polka dots'' feature}
	\end{subfigure}
	\vspace{-7px}
	\caption{{\bf Additional evaluation of \textsc{Living17} feature transformations.} The top and bottom row evaluate the effect of ``spider web'' and ``polka dot'' feature transformations on models trained with different data augmentation schemes. Increasing the intensity of the transformations and the minimum crop size of \texttt{RandomResizedCrop} augmentation (via \texttt{scale} hyperparameter) increases the sensitivity of models to both feature transformations in a consistent manner.}
	\label{fig:more-ft:data_aug}
\end{figure}

\clearpage
\subsection{Case study: ImageNet pre-training}
In~\Cref{ssec:pretrain}, we showed that fine-tuning ImageNet-pretrained ResNet50 models on \textsc{Waterbirds} data instead of training from scratch alters the relative importance of two spurious features: ``yellow color'' and ``human face''. In~\Cref{fig:more-ft:pretrain}, we show that both feature transformations alter the predictions of ImageNet-pretrained ResNet18 and ImageNet-pretrained ResNet50 models in a similar way.

\begin{figure}[h]
	\centering
	\begin{subfigure}[t]{\textwidth}
		\centering
		\includegraphics[width=0.96\textwidth]{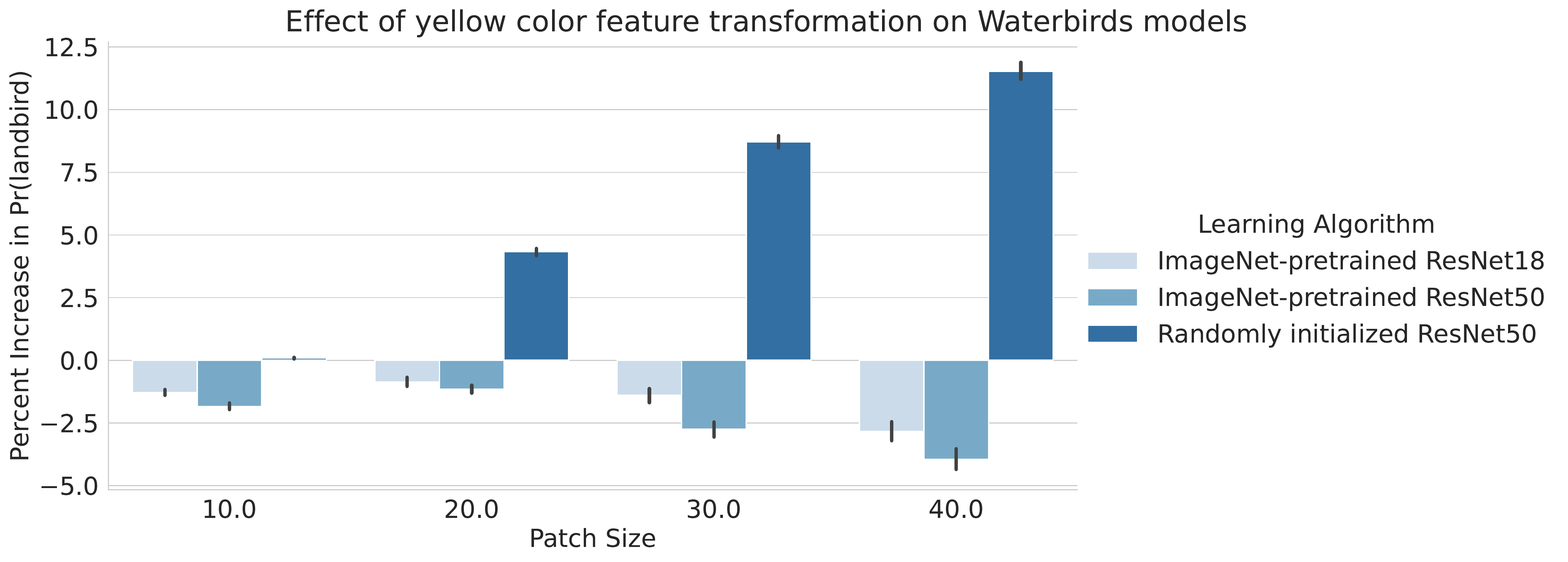}
		\caption{``Yellow color'' feature}
	\end{subfigure}
	\begin{subfigure}[t]{\textwidth}
		\includegraphics[width=\textwidth]{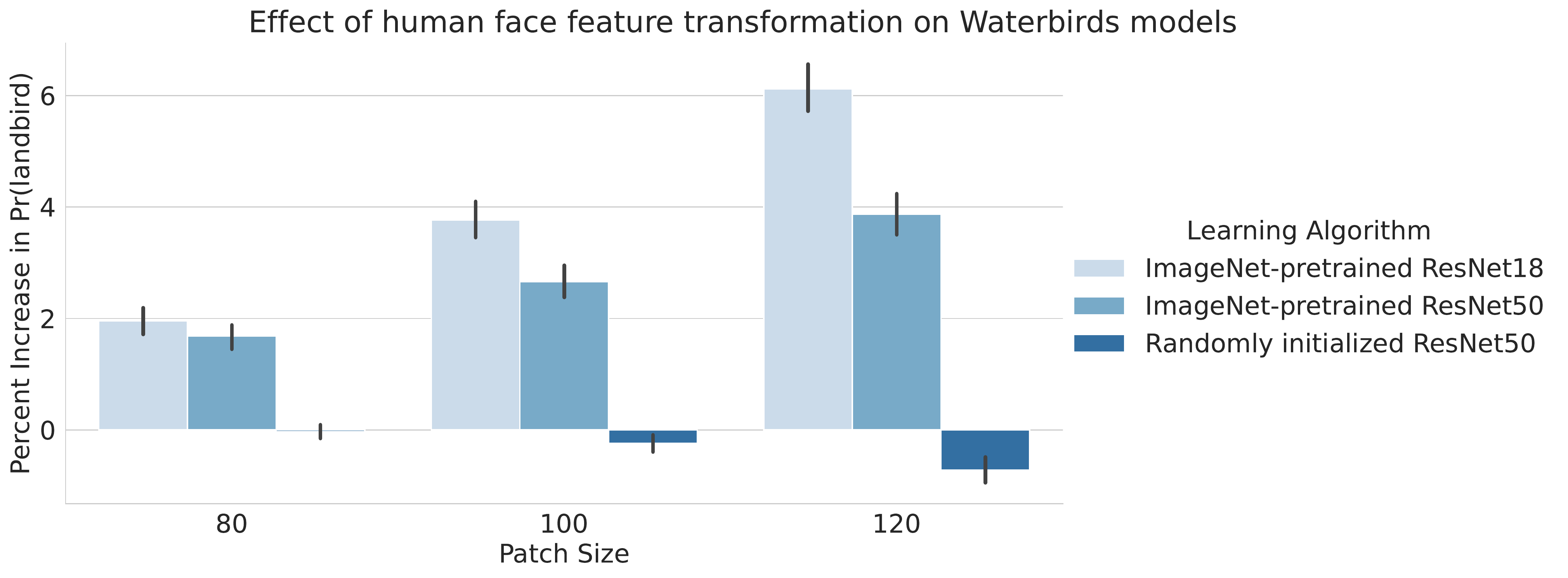}
		\caption{``Human face'' feature}
	\end{subfigure}
	\vspace{-5px}
	\caption{{\bf Additional evaluation of \textsc{Waterbirds} feature transformations.} The top and bottom row evaluate the effect of ``yellow color'' and ``human face'' feature transformations on models trained with and without ImageNet pre-training. In both cases, unlike ResNet50 models trained from scratch, ImageNet-pretrained ResNet18 and ResNet50 models are sensitive to the ``human face'' transformation but not to the ``yellow color'' transformation.}
	\label{fig:more-ft:pretrain}
\end{figure}

\clearpage
\subsection{Case study: SGD hyperparameters}
In~\Cref{ssec:lr}, we showed that reducing SGD noise results in \textsc{Cifar-10} models that are more sensitive to certain features, such as rectangular shape bias and black-white texture to predict trucks and dogs. In~\Cref{fig:more-ft:sgd}, we evaluate how feature transformations change class-wise predictions of models trained with different SGD learning rate and batch size hyperparameters.  

\begin{figure}[h]
	\centering
	\begin{subfigure}[t]{\textwidth}
		\centering
		\includegraphics[width=\textwidth]{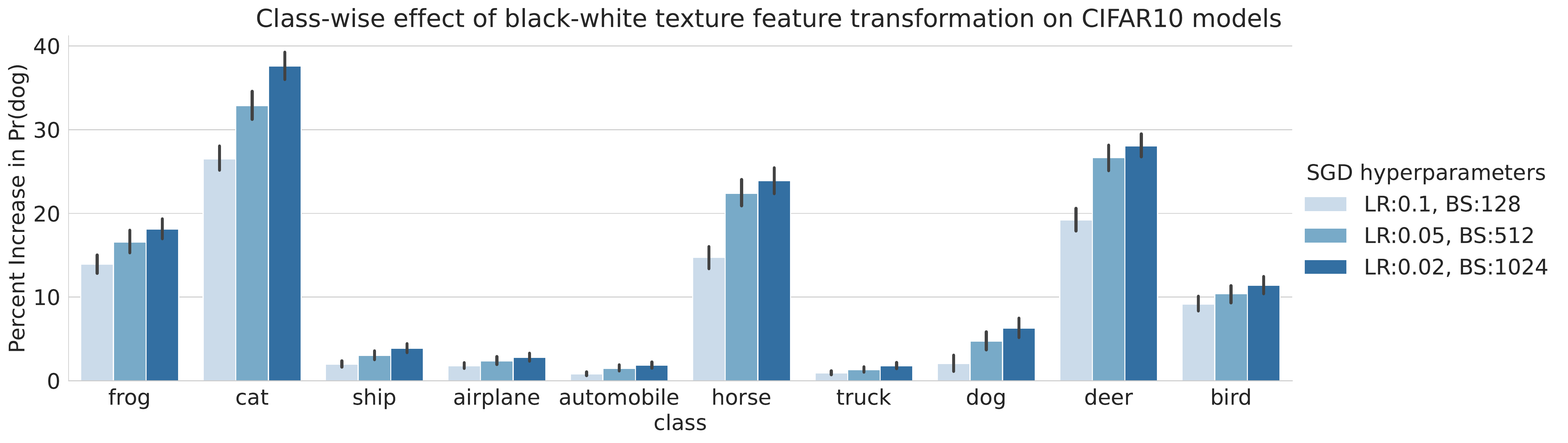}
		\caption{``Black-white texture'' feature}
	\end{subfigure}
	\begin{subfigure}[t]{\textwidth}
		\includegraphics[width=\textwidth]{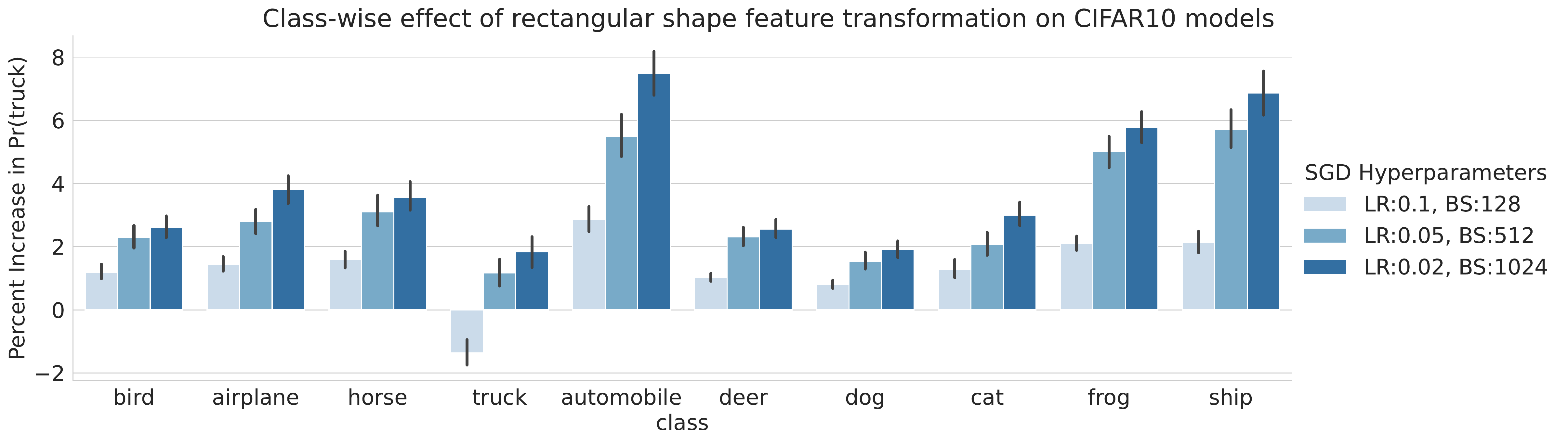}
		\caption{``Rectangular shape'' feature}
	\end{subfigure}
	\vspace{-5px}
	\caption{{\bf Additional evaluation of \textsc{Cifar-10} feature transformations.} The top and bottom row evaluate the effect of ``black-white texture'' and ``rectangular shape'' feature transformations on \textsc{Cifar-10} models trained with high (light blue), medium, and low (dark blue) SGD noise. In both cases, models trained with higher SGD noise are, on average, more sensitive to these transformations across all classes. Furthermore, the effect of the transformations are class-dependent---model predictions on transformed examples from semantically similar classes differ to a greater extent.}
	\label{fig:more-ft:sgd}
\end{figure}

\clearpage

\section{Miscellaneous results}

\subsection{Aggregate metric for algorithm comparison}
\label{app:cosinesim}

We can repurpose our framework as a similarity metric that quantifies the similarity of models trained with different learning algorithms in a more global manner.
A straightforward approach to output a similarity score (or distribution) is to compute the cosine similarity of datamodel vectors.
More concretely, let $\smash{\dma_i}$ and $\smash{\dmb_i}$ denote the datamodels of example $x_i$ with respect to models trained using learning algorithms $\mathcal{A}_1$ and $\mathcal{A}_2$.
Then, the cosine similarity between $\smash{\dma_i}$ and $\smash{\dmb_i}$ measures the extent  to which models trained with $\mathcal{A}_1$ and $\mathcal{A}_2$ depend on the same set of training examples to make predictions on example $x_i$.

We apply this metric to two case studies---pre-training (\textsc{Waterbirds}) and SGD noise (\textsc{Cifar-10})---in~\Cref{fig:cosine-sim}.
Specifically,~\Cref{fig:cosine-sim} plots the distribution of cosine similarity of datamodels for multiple learning algorithms over all test examples.
The left subplot shows that on \textsc{Waterbirds}, ImageNet-pretrained ResNet50 models are, on average, more similar to ImageNet-pretrained ResNet18 models than to ResNet50 models pre-trained on synthetically generated data \cite{baradad2021learning} and models trained from scratch.
The right subplot shows that on \textsc{CIFAR-10}, ResNet9 models trained with high SGD noise are more similar to smaller-width ResNet9 models trained with high SGD noise than to ResNet9 models trained with low SGD noise.

\begin{figure}[h]
    \includegraphics[width=\textwidth]{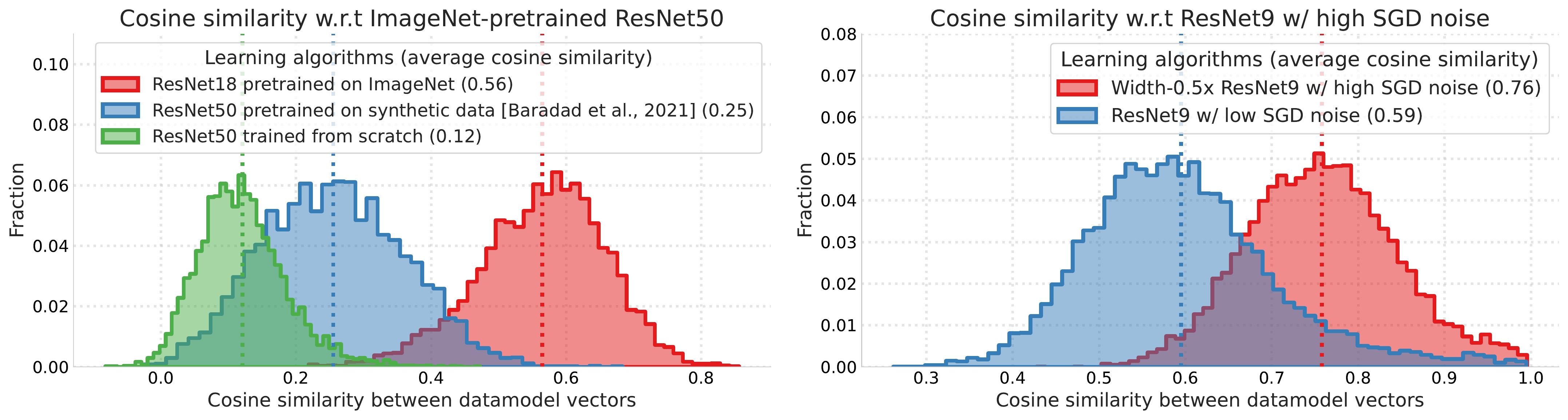}
    \caption{\textbf{Datamodel cosine similarity.} We use cosine similarity between two datamodel vectors as an aggregate metric to quantify the similarity of models trained with different learning algorithms. {\bf (Left)} On \textsc{Waterbirds} data, datamodels of ImageNet-pretrained ResNet50 and ResNet18 models are more similar to each other than to models pre-trained on synthetically generated data and models trained from scratch. {\bf (Right)} On \textsc{Cifar-10} data, ResNet models trained with high SGD noise are more similar to each other to ResNet models trained with low SGD noise.}
    \label{fig:cosine-sim}
\end{figure}

\clearpage
\subsection{Explained variance of residual datamodel principal components}
Recall from~\Cref{sec:approach} that the fraction of variance in datamodel representations $\smash{\{\theta_x^{(i)}\}}$ explained by training direction $v$ signifies the importance of the direction (or, combination of training examples) to predictions of models trained with algorithm $\mathcal{A}_i$.
Through our case studies in~\Cref{sec:examples}, we show that the top $5-6$ principal components (PCs) of residual datamodels $\resdmab$ correspond to training directions that have high explained w.r.t. datamodels of algorithm $\mathcal{A}_1$ but not $\mathcal{A}_2$, and vice versa.
\Cref{fig:more-ev} shows that the top-$100$ PCs of residual datamodel $\resdmab$ (resp., $\resdmba$) have more (resp., less) explained variance on datamodel $\dma$ than on datamodel $\dmb$.

\begin{figure}[h]
    \includegraphics[width=\textwidth]{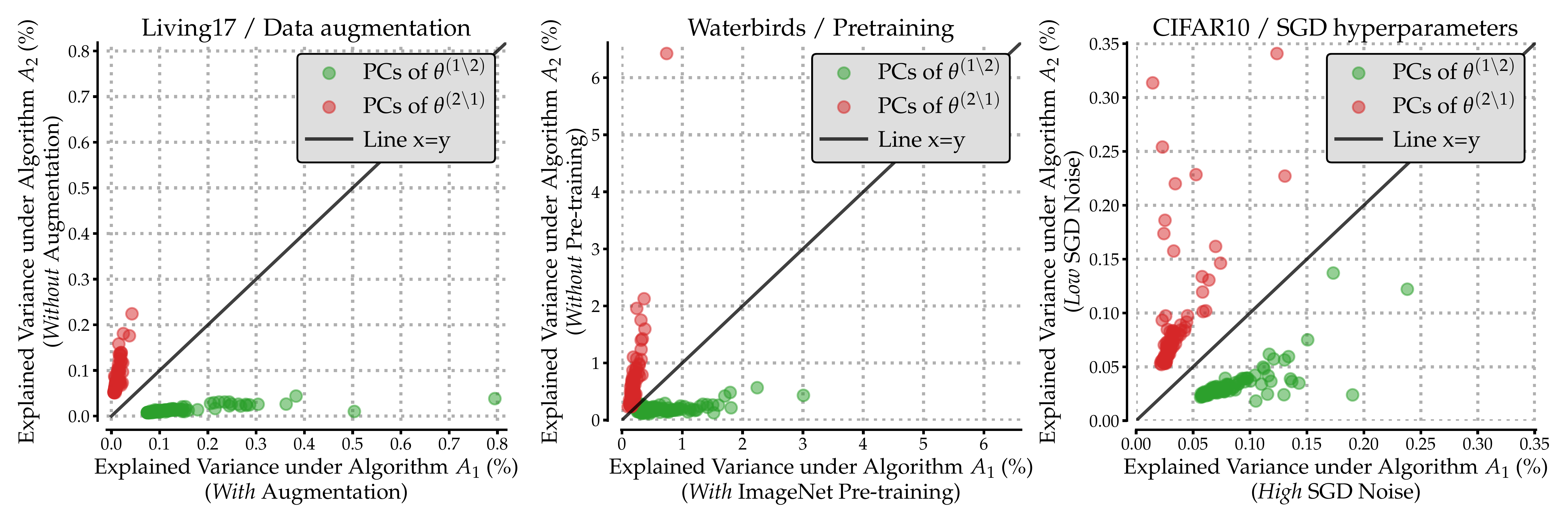}
    \caption{\textbf{Explained variance of residual datamodels' principal components.} Highlighted in green (resp. red), the top-$100$ PCs of residual datamodel $\smash{\resdmab}$ (resp. $\smash{\resdmba}$) explain a larger (resp. smaller) fraction of datamodel variance under algorithm $\mathcal{A}_1$ than under algorithm $\mathcal{A}_2$ across all three case studies.}
    \label{fig:more-ev}
\end{figure}

\clearpage
\subsection{Effect of sample size on datamodel estimation}
\label{app:samp_size}

In this section, we analyze the effect of sample size on datamodel estimation.

\paragraph{Setup.}
Recall from~\Cref{app:datamodels} that a datamodel training set of size $m$ corresponds to training $m$ models on independently sampled training data subsets.
For our case study on ImageNet pre-training in~\Cref{ssec:pretrain}, we estimate datamodels on \textsc{Waterbirds} data with $50,000$ samples (i.e., $m$ ResNet50 models trained on random subsets of the \textsc{Waterbirds} training dataset).
In this experiment, we analyze how the estimated datamodels vary as a function of sample size $m \in \{5000, 10000, 25000, 50000\}$.

\paragraph{Cosine similarity between datamodels.}
Our algorithm comparisons framework uses normalized datamodel representations to compute distinguishing training directions in the first step.
So, we first analyze the alignment between datamodel representations that are estimated with different sample sizes.
Specifically, we evaluate the cosine similarity between $\smash{\theta^{(m_1)}_x}$ and $\smash{\theta^{(m_2)}_x}$, where vector $\smash{\theta^{(m)}_x} \in \mathbb{R}^{|\text{S}|}$ corresponds to the linear datamodel for example $x$ estimated with $m$ samples.
As shown in~\Cref{fig:ss_cs}, the average cosine similarity between datamodels is greater than $0.9$ even when the sample size is reduced by a factor of $10$, from $50000$ to $5000$.

\paragraph{Explained variance of principal components.}
As discussed in~\Cref{sec:examples}, for a given training direction $v$,
the fraction of variance that $v$ explains in datamodel representations
$\smash{\{\theta_x^{(i)}\}}$
captures the importance of direction $v$ (i.e., weighted combination of training
examples) to the predictions of models trained using algorithm $\mathcal{A}_i$.
Here, we show that principal components of datamodel representations trained with smaller sample size (e.g., $m=5000$) have similar explained variance on datamodel representations estimated with larger sample size, and vice-versa.
As shown in~\Cref{fig:ss_ev}, the explained variance of the top-$10$ principal components of datamodels estimated with $m \in \{5000,50000\}$ have similar explained variance on datamodels estimated with $m \in \{5000,10000,25000,50000\}$.

\begin{figure}[b]
	\centering
    \includegraphics[width=0.8\textwidth]{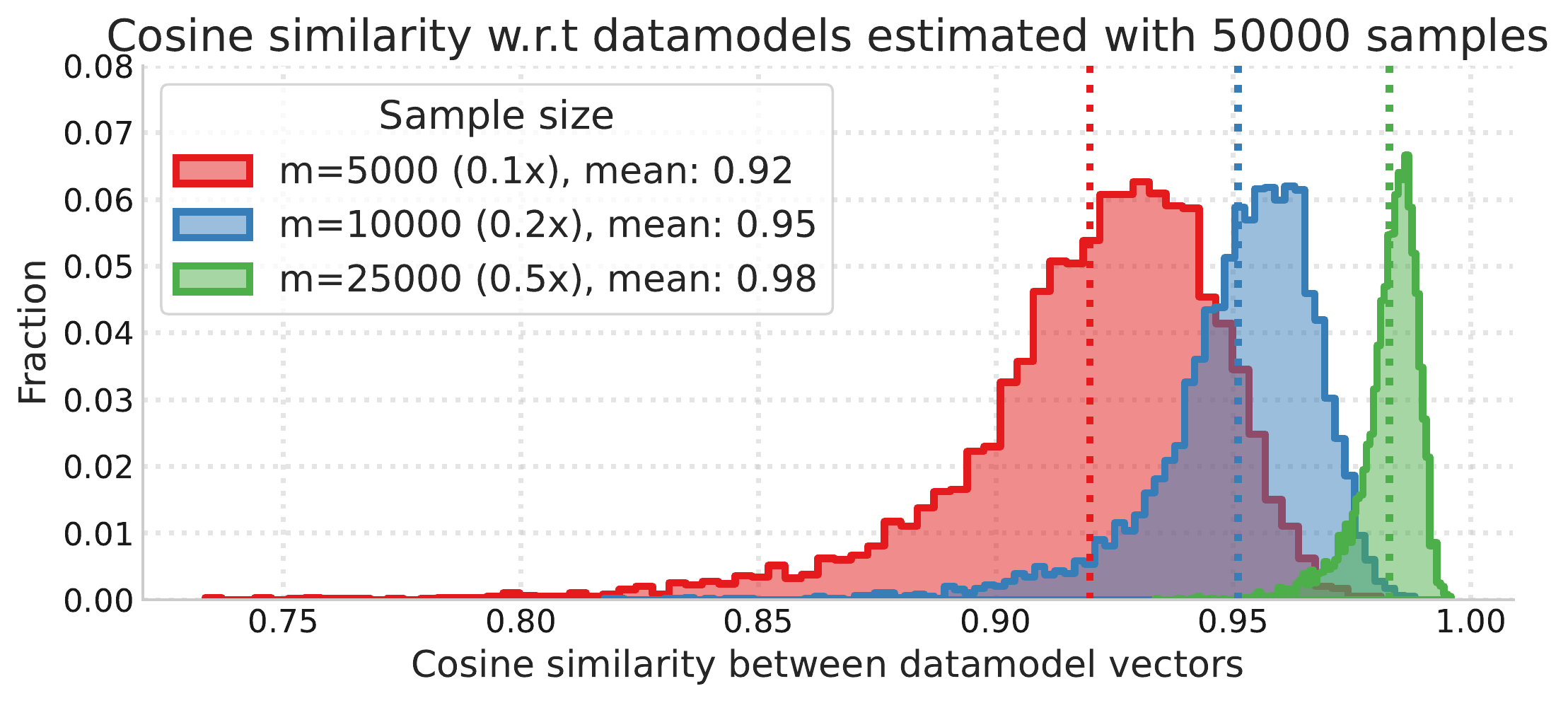}
    \caption{Histogram over cosine similarity between datamodels $\smash{\theta^{(m_1)}_x}$ and $\smash{\theta^{(m_2)}_x}$, where vector $\smash{\theta^{(m)}_x} \in \mathbb{R}^{|\text{S}|}$ corresponds to the linear datamodel for example $x$ estimated with $m \in \{5000, 10000, 25000, 5000\}$ samples.}
    \label{fig:ss_cs}
\end{figure}

\begin{figure}[h]
	\centering
    \includegraphics[width=\textwidth]{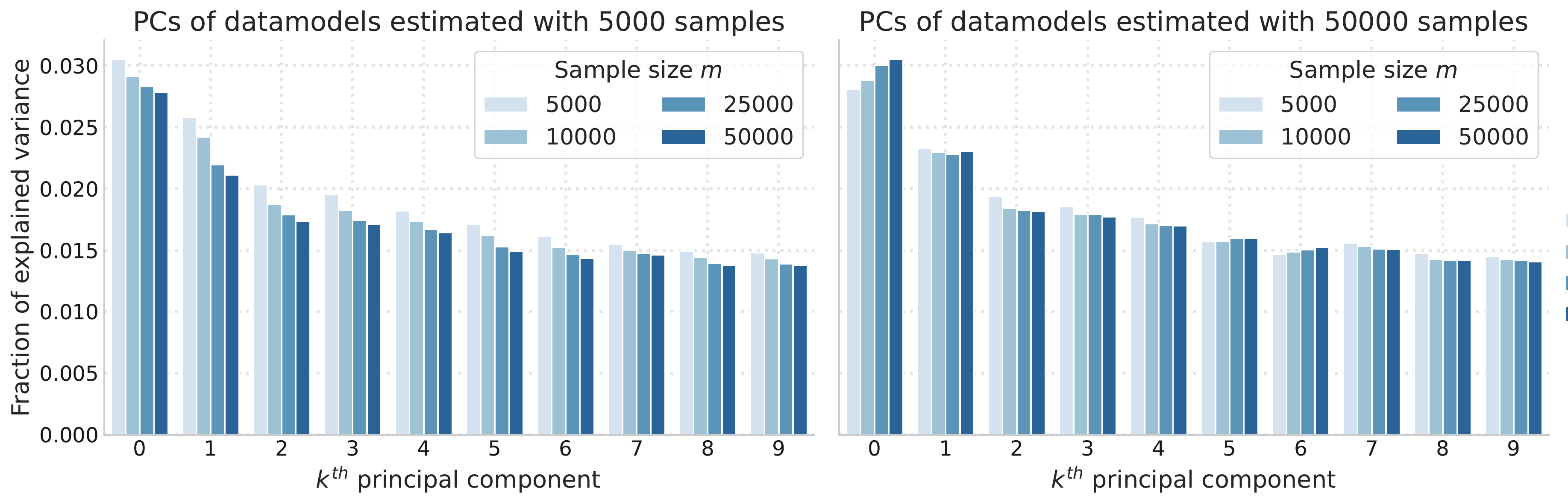}
    \caption{Explained variance of the top-$10$ principal components of datamodels estimated with $m \in \{5000,50000\}$ have similar explained variance on datamodels estimated with sample size $m \in \{5000,10000,25000,50000\}$. }
    \label{fig:ss_ev}
\end{figure}

\clearpage
\subsection{Effect of prediction-level disagreement on distinguishing training directions}
\label{app:disagree}

In this section, evaluate whether differences between algorithms at the model prediction level have a significant effect on the distinguishing training directions surfaced using our framework.
For context, we note that there are no existing methods that analyze prediction-level differences for algorithm comparisons.

We design an experiment to show that example-level differences in predictions of models trained with different algorithms are not necessary to identify subpopulations analysed in our case studies.
The first step of this experiment is to re-run our framework (a) on all test examples and (b) only on test examples on which models trained with different algorithm have the same prediction ``mode'' (taken over multiple runs).
In the second step, we directly compare the alignment between distinguishing training directions before and after controlling for prediction-level differences.

Our results in~\Cref{table:dataset} show that for each case study, our framework identifies similar training directions (i.e., high cosine similarity) even after removing test examples on which model predictions differ on average.
This experiment shows that our framework can identify fine-grained differences between learning algorithms that persist even after controlling for prediction-level disagreement across models trained with different algorithms.

\begin{table}[h]
\centering
\begin{tabular}{llc}
\toprule
\textbf{Dataset / Case study} & Direction & (Absolute) Cosine Similarity \\
\midrule
Living17 / Data augmentation & \textbf{A} (Spider web) & 0.999 \\
 & \textbf{B} (Polka dots) & 0.998 \\ \midrule
Waterbirds / ImageNet pre-training & \textbf{A} (Yellow color) & 0.977 \\
 & \textbf{B} (Human face) & 0.740 \\ \midrule
CIFAR-10 / SGD hyperparameters & \textbf{A} (Black-white texture) & 0.998 \\
 & \textbf{B} (Rectangular shape) & 0.999 \\
\bottomrule
\end{tabular}
\caption{Distinguishing training directions obtained before and after  filtering out high-disagreement test examples (a) exhibit high cosine similarity and (b) surface subpopulations of images that share the same distinguishing feature.}
\label{table:preds}
\end{table}

\clearpage
\subsection{Leveraging \textsc{CLIP} to analyze distinguishing subpopulations}
\label{app:clip}

As discussed in~\Cref{sec:approach} and~\Cref{app:hooman}, we infer distinguishing features candidates through manual inspection of distinguishing subpopulations.
In this section, we demonstrate that for image classifiers, shared vision-language models such as \textsc{CLIP}~\citep{radford2021learning} provide a streamlined alternative to manual inspection of distinguishing subpopulations.

\paragraph{Approach.} Before we describe our approach, note that \textsc{CLIP} is a contrastive learning method that embeds text and natural language into a shared embedding space.
Our approach leverages CLIP embeddings to identify multiple \emph{distinguishing captions}---representative descriptions that best contrast a given subpopulation of images from a set of  images sampled from the same distribution.
In the context of our framework, the \textsc{CLIP}-based approach takes as inputs a distinguishing training direction $v$, a set of images $\mathcal{D}$, and a set of captions $\mathcal{S}$\footnote{We use a filtered list of roughly 20,000 most common English words in order of frequency, taken from \url{https://github.com/first20hours/google-10000-english}.}, and outputs a set of distinguishing captions $\mathcal{S'} \in \mathcal{S}$ in four steps:
\begin{itemize}
	\item \emph{Pre-compute image and text embeddings}. Use the image encoder of a \textsc{CLIP} model to compute a set of normalized embeddings for all images in $\mathcal{D}$. Analogously, use the text encoder of a \textsc{CLIP} model to compute a set of normalized embeddings for all captions in $\mathcal{S}$.
	\item \emph{Record image-text pairwise cosine similarity}. Let vector $C_i \in \mathcal{R}^{|\mathcal{S}|}$ denote the pairwise cosine similarity between the embedding of image $i \in \mathcal{D}$ and all captions $j \in \mathcal{S}$.
	\item \emph{Compute mean cosine similarity over dataset and top-$k$ subpopulation}. Compute the mean cosine similarity vector $\bar{C}=\frac{1}{n}\sum_{i \in \mathcal{D}} C_i$ over all images in $\mathcal{D}$. Similarly, given distinguishing training direction $v$, compute the mean cosine similarity vector $C^{(v)}$ over the top-$k$ images whose residual datamodel vectors are most aligned with $v$.
	\item \emph{Extract distinguishing captions $\mathcal{S'}$}. Use cosine similarity vectors $\bar{C}$ and $C^{(v)}$ to extract captions in $\mathcal{S}$ that have the maximum difference between $C^{(v)}_i$ and $\bar{C}_i$.
\end{itemize}
Intuitively, the set of distinguishing captions $\mathcal{S'}$ correspond to representative captions (or, descriptions) that best contrast the top-$k$ images surfaced by distinguishing direction $v$ from the dataset.

\paragraph{Results.}
We now apply this approach to our case study on ImageNet pre-training, where we compare \textsc{Waterbirds} models trained with and without ImageNet pre-training (see \Cref{sec:examples}).
Specifically, we evaluate whether the \textsc{CLIP}-based approach surfaces distinguishing captions that are similar to distinguishing features ``yellow color'' (direction $\textbf{A}$) and ``human face'' (direction $\textbf{B}$) inferred via manual inspection.
\Cref{fig:clip_yellow} illustrates that for direction $\textbf{A})$, the \textsc{CLIP}-based approach highlights distinguishing captions such as \texttt{yellow}, \texttt{lemon}, and \texttt{sulphur}, all of which are similar to the ``yellow color'' feature that we infer via manual inspection.
Similarly,~\Cref{fig:clip_faces} shows that the distinguishing captions for direction $\textbf{B}$ (e.g., \texttt{florist}, \texttt{faces}, \texttt{counselors}) are similar to the identified ``human face'' feature.

To summarize, we show how the verification step can be easily \emph{specialized} to comparisons of vision classifiers trained on ImageNet-like data via vision-language embeddings such as \textsc{CLIP}.

\begin{figure}[h]
	\centering
	\includegraphics[width=\textwidth]{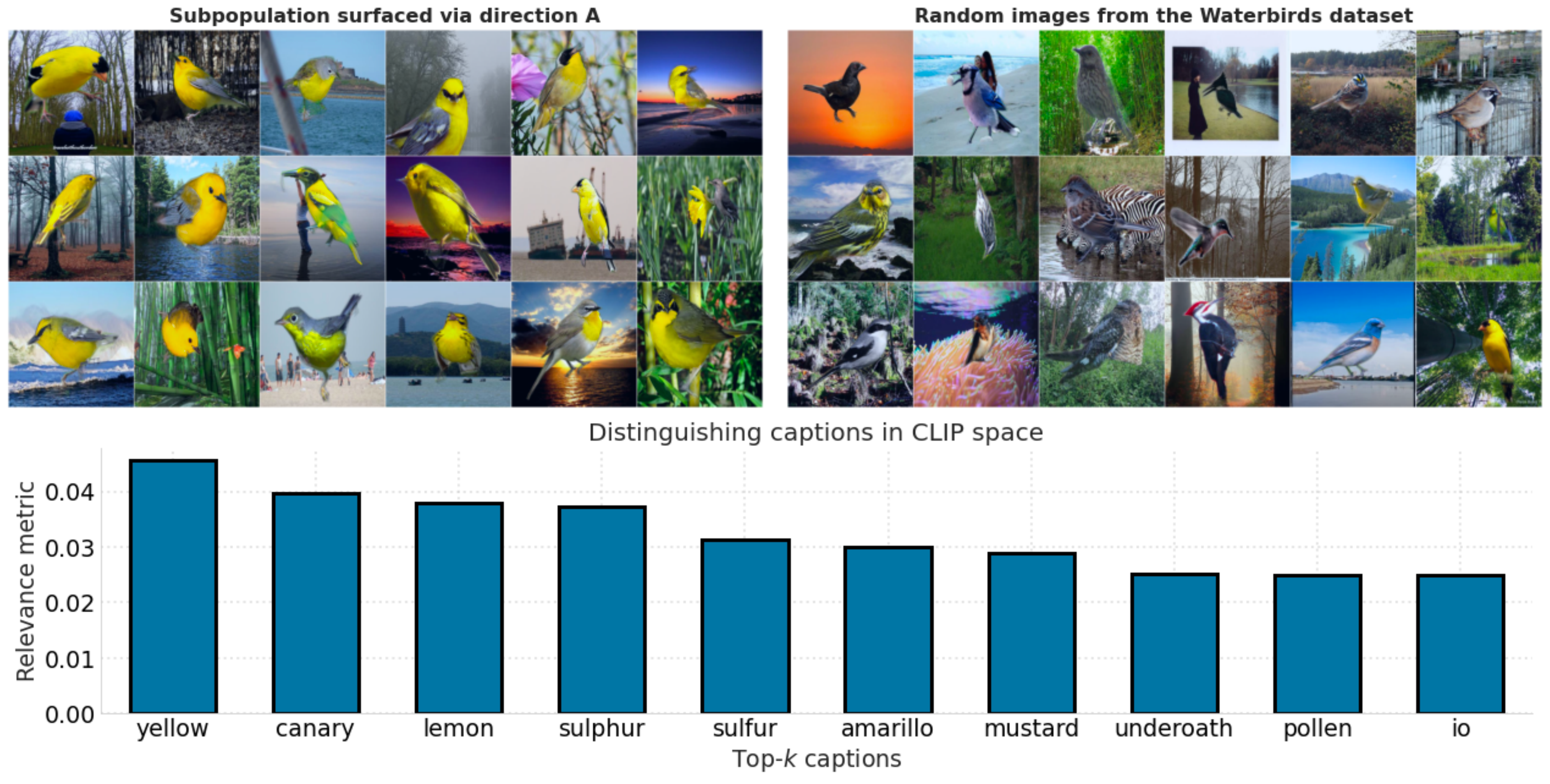}
    \caption{\textbf{Direction A}. The \textsc{CLIP}-based approach extracts distinguishing captions such as \texttt{yellow}, \texttt{lemon}, and \texttt{sulphur}, all of which contrast the residual subpopulation on the left to a set of random images from the \textsc{Waterbirds} dataset on the right. These distinguishing captions match the ``yellow color'' feature that we infer via manual inspection of the distinguishing subpopulation in~\Cref{ssec:pretrain}.}
    \label{fig:clip_yellow}
\end{figure}

\begin{figure}[h]
	\centering
	\includegraphics[width=\textwidth]{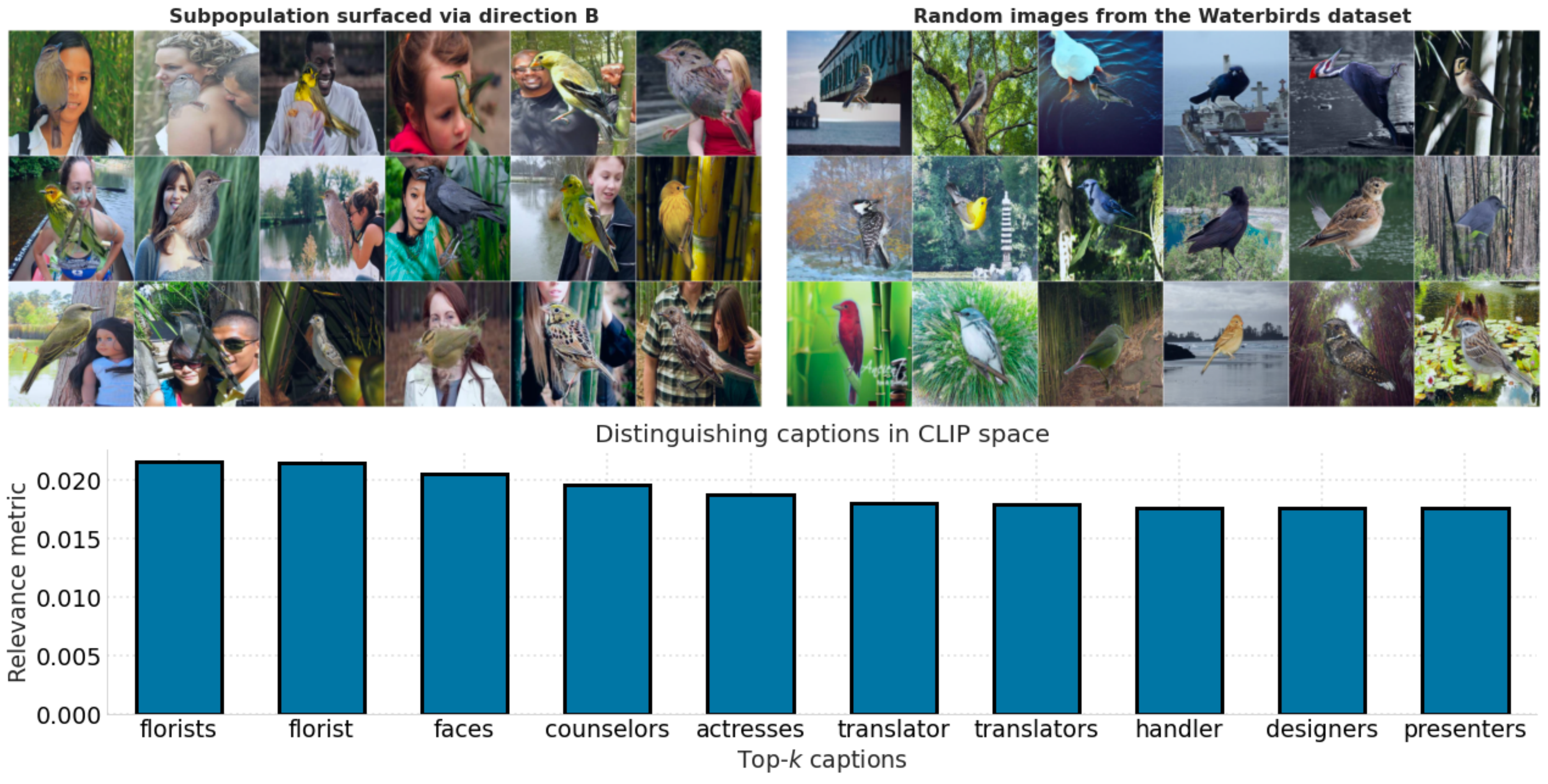}
    \caption{\textbf{Direction B}. The \textsc{CLIP}-based approach extracts distinguishing captions such as \texttt{florists}, \texttt{faces}, and \texttt{counselors}, all of which contrast the residual subpopulation (left) of images with human faces in the background to a set of random images (right) from the \textsc{Waterbirds} dataset. These distinguishing captions match the ``human face'' feature that we infer and counterfactually verify via manual inspection of the distinguishing subpopulation in~\Cref{ssec:pretrain}.
}
    \label{fig:clip_faces}
\end{figure}

\clearpage
\subsection{Subpopulations surfaced by principal components of residual datamodels}

Recall that our framework identifies distinguishing subpopulations via principal components (PCs) of residual datamodels.
Specifically, these subpopulations correspond to test examples whose residual datamodel representations have the most positive (top-$k$) and most negative (bottom-$k$) projection onto a given PC.
Here, we show that the top-$k$ and bottom-$k$ subpopulations corresponding to the top few PCs of residual datamodels considered in~\Cref{sec:examples} surface test examples with qualitatively similar properties.

\begin{figure}[h]
	\centering
    \includegraphics[width=0.76\textwidth]{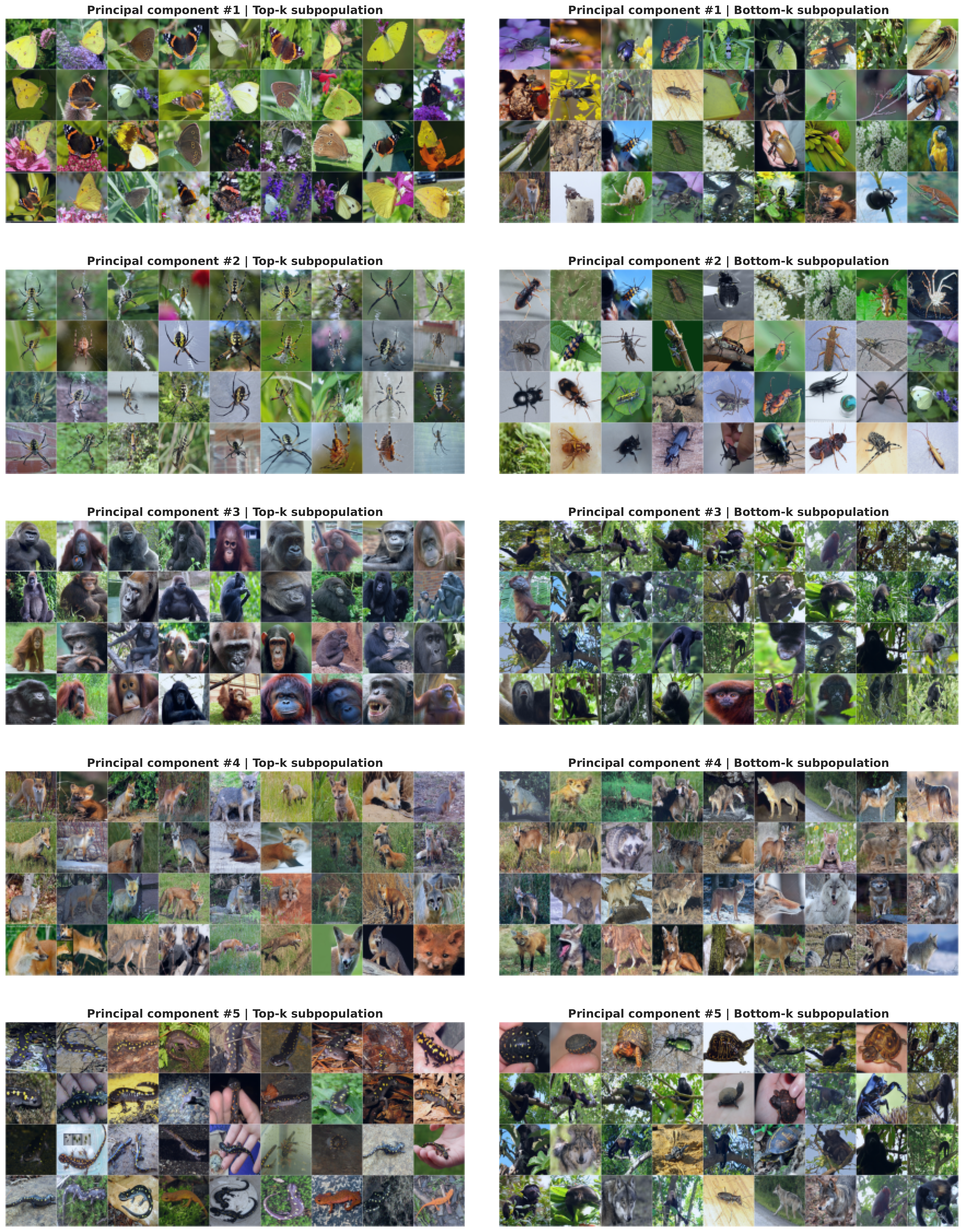}
    \caption{Top five PC subpopulations of \textsc{Living17} residual datamodel $\smash{\resdmab}$, where learning algorithms $\mathcal{A}_1$ and $\mathcal{A}_2$ correspond to training models with and without standard data augmentation respectively. Our case study in~\Cref{ssec:data_aug} analyzes PC \#2 (direction $\textbf{A}$) and PC \#5 (direction $\textbf{B}$).}
    \label{fig:subpop_living17}
\end{figure}

\begin{figure}[h]
	\centering
    \includegraphics[width=\textwidth]{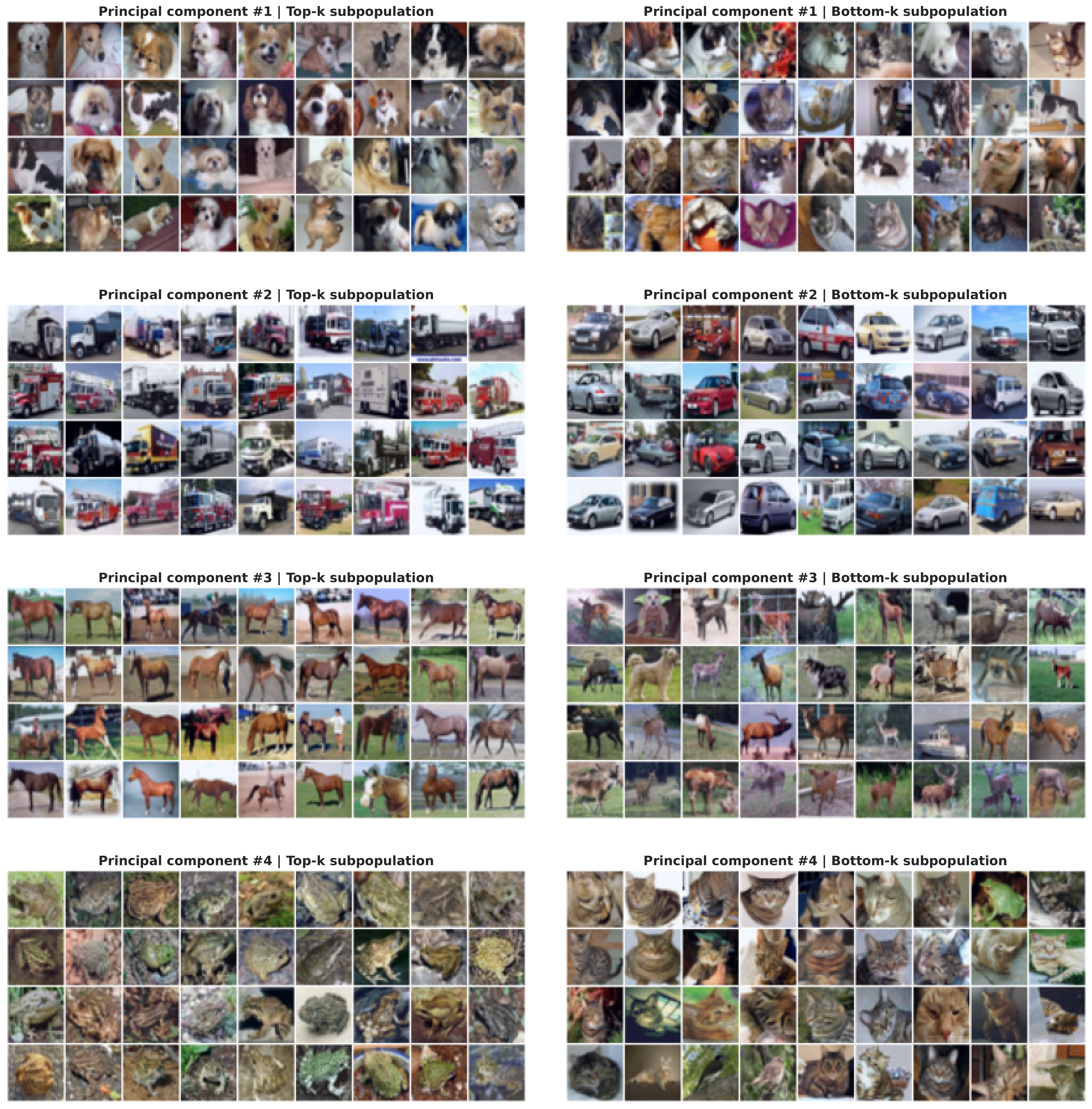}
    \caption{Top four PC subpopulations of \textsc{Cifar-10} residual datamodel $\smash{\resdmba}$, where learning algorithms $\mathcal{A}_1$ and $\mathcal{A}_2$ correspond to training models with high and low SGD noise respectively. Our case study in~\Cref{ssec:lr} analyzes PC \#1 (direction $\textbf{A}$) and PC \#2 (direction $\textbf{B}$).}
    \label{fig:subpop_cifar}
\end{figure}

\begin{figure}[h]
	\centering
	\includegraphics[width=\textwidth]{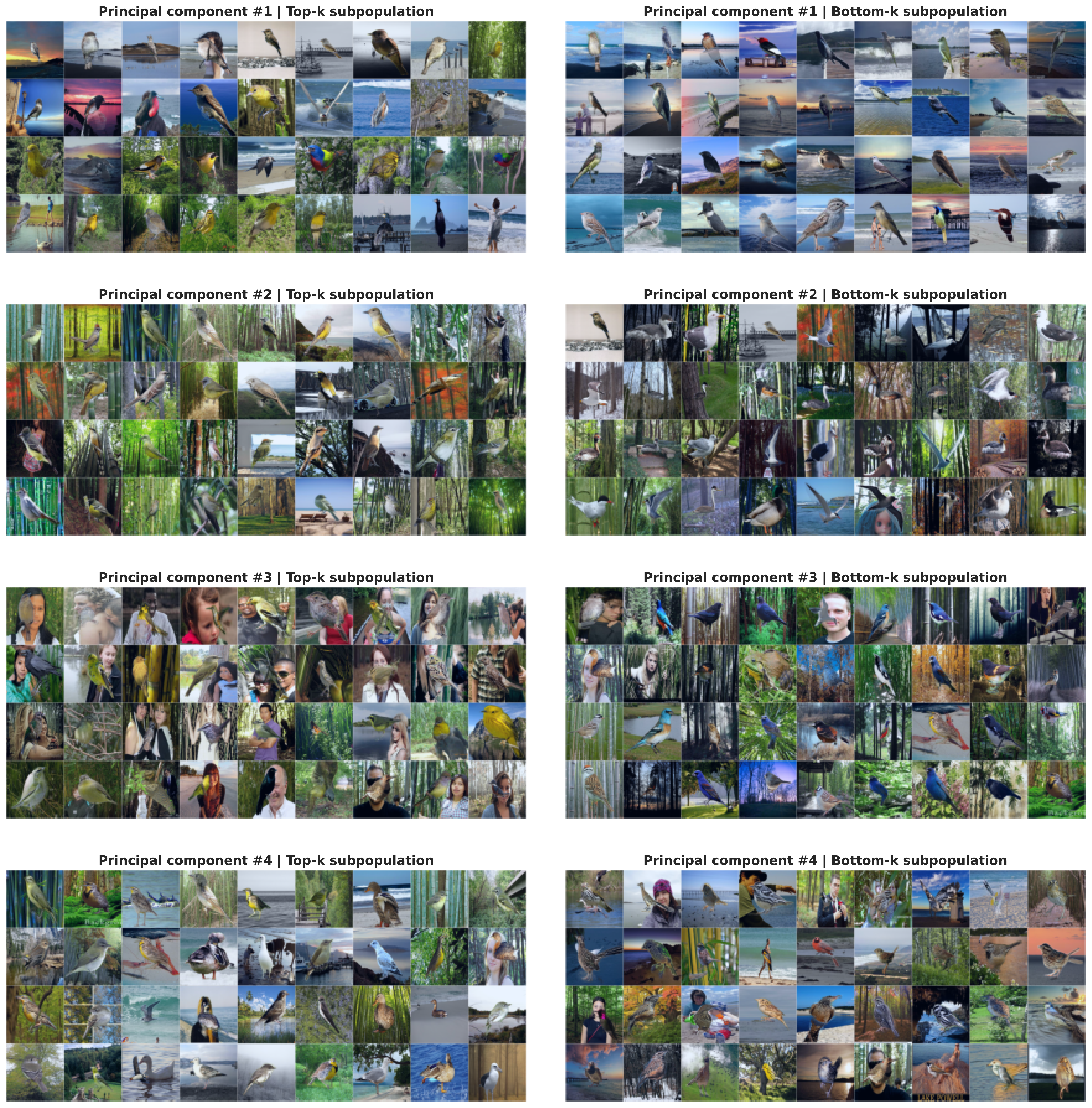}
    \caption{Top four PC subpopulations of \textsc{Waterbirds} residual datamodel $\smash{\resdmab}$, where learning algorithms $\mathcal{A}_1$ and $\mathcal{A}_2$ correspond to training models with and without ImageNet pre-training respectively. Our case study in~\Cref{ssec:data_aug} analyzes PC \#3 (direction $\textbf{A}$).}
    \label{fig:subpop_waterbirds_in}
\end{figure}

\begin{figure}[h]
	\centering
    \includegraphics[width=\textwidth]{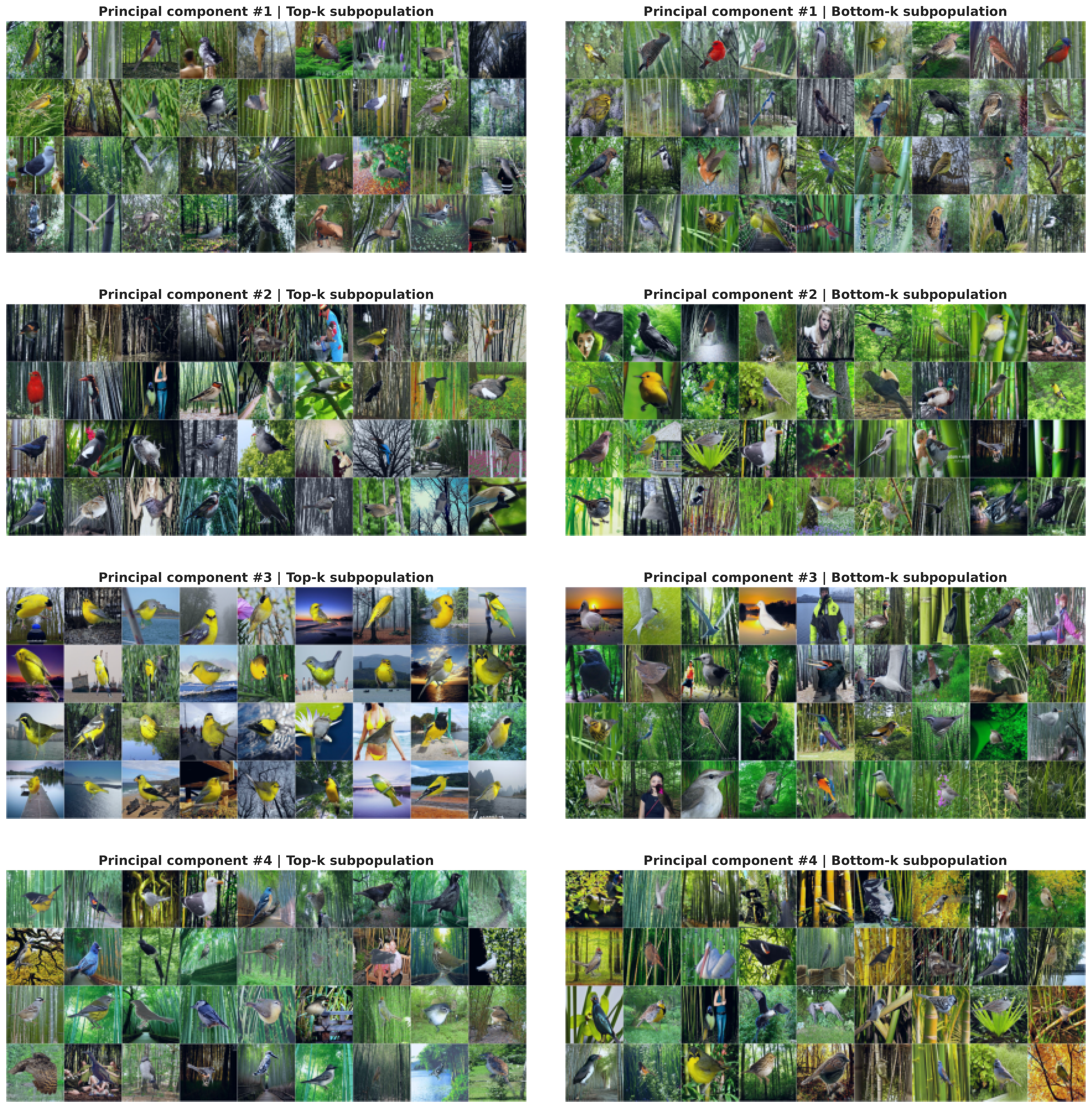}
    \caption{Top four PC subpopulations of \textsc{Waterbirds} residual datamodel $\smash{\resdmba}$, where learning algorithms $\mathcal{A}_1$ and $\mathcal{A}_2$ correspond to training models with and without ImageNet pre-training respectively. Our case study in~\Cref{ssec:data_aug} analyzes PC \#3 (direction $\textbf{B}$).}
    \label{fig:subpop_waterbirds_ri}
\end{figure}

\end{document}